\theoremstyle{plain}
\newtheorem{theorem}{Theorem}
\theoremstyle{definition}
\newtheorem{assumption}{Assumption}
\theoremstyle{remark}
\newcommand{\p}{\mathcal{P}}
\newcommand{\s}{\mathcal{S}}
\newcommand{\m}{\mathcal{M}}
\newcommand{\rew}{\mathcal{R}}
\newcommand{\act}{\mathcal{A}}
\newcommand{\pol}{\pi}
\newcommand{\vpol}{v_\pol}
\newcommand{\qpol}{q_\pol}
\newcommand{\E}{\mathbb{E}}
\newcommand{\Eseeds}{\E_{seeds}}
\newcommand{\w}{{\bf w}}
\newcommand{\vtask}{v_\tau}
\newcommand{\qstar}{q^*}
\newcommand{\estQ}{\hat{Q}}
\newcommand{\PMparams}{\theta}
\newcommand{\TMparams}{\w}
\newcommand{\PMpred}{V^{(P)}}
\newcommand{\TMpred}{V^{(T)}}
\newcommand{\PMcontrol}{Q^{(P)}}
\newcommand{\TMcontrol}{Q^{(T)}}
\newcommand{\lr}{\alpha}
\newcommand{\PMlr}{\overline{\alpha}}
\newcommand{\TMlr}{\alpha}
\newcommand{\vtd}{V^{(TD)}}
\newcommand{\vpt}{V^{(PT)}}
\newcommand{\QPT}{Q^{(PT)}}
\newcommand{\T}{\mathcal{T}}
\newcommand{\Tt}{\T^{(T)}}
\newcommand{\rp}{r_\pol}
\newcommand{\pp}{\p_\pol}
\newcommand{\dpol}{d_\pol}
\newcommand{\Ep}{\E_{\pol}}
\newcommand{\R}{\mathbb{R}}
\newcommand{\norm}[1]{\left\lVert#1\right\rVert}
\newcommand{\norminf}[1]{\left\lVert#1\right\rVert_{\infty}}
\newcommand\numberthis{\addtocounter{equation}{1}\tag{\theequation}}
\newcommand{\meanTD}{m^{(TD)}}
\newcommand{\SigmaTD}{\Sigma^{(TD)}}
\newcommand{\DeltaTD}{\Delta^{(TD)}}
\newcommand{\OmegaTD}{\Omega^{(TD)}}
\newcommand{\PsiTD}{\Psi^{(TD)}}
\newcommand{\MSETD}{\xi^{(TD)}}
\newcommand{\meanPT}{m^{(PT)}}
\newcommand{\SigmaPT}{\Sigma^{(PT)}}
\newcommand{\DeltaPT}{\Delta^{(PT)}}
\newcommand{\OmegaPT}{\Omega^{(PT)}}
\newcommand{\PsiPT}{\Psi^{(PT)}}
\newcommand{\MSEPT}{\xi^{(PT)}}
\newcommand{\ind}{\mathbb{I}}
\newcommand{\dhat}{\widehat{d_\pol}}
\newcommand{\rhat}{\widehat{\rp}}
\newcommand{\phat}{\widehat{\pp}}
\DeclareMathOperator*{\argmin}{arg\,min}
\title{Prediction and Control in Continual Reinforcement Learning}
\author{%
 Nishanth Anand \thanks{corresponding author.} \\
  School of Computer Science\\
  McGill University and Mila\\
  \texttt{nishanth.anand@mail.mcgill.ca} \\
\And
  Doina Precup \\
  School of Computer Science \\
  McGill University, Mila, and Deepmind \\
  \texttt{dprecup@cs.mcgill.ca} \\
}
\begin{document}

\maketitle

\begin{abstract}
Temporal difference (TD) learning is often used to update the estimate of the value function which is used by RL agents to extract useful policies. In this paper, we focus on value function estimation in continual reinforcement learning. We propose to decompose the value function into two components which update at different timescales: a \textit{permanent} value function, which holds general knowledge that persists over time, and a \textit{transient} value function, which allows quick adaptation to new situations. We establish theoretical results showing that our approach is well suited for continual learning and draw connections to the complementary learning systems (CLS) theory from neuroscience. Empirically, this approach improves performance significantly on both prediction and control problems.
\end{abstract}

\section{Motivation}
\label{Sec:Motivation}

Deep reinforcement learning (RL) has achieved remarkable successes in complex tasks, e.g Go \cite{silver2016mastering, silver2017mastering}
but in a narrow setting, where the task is well-defined and stable over time.
In contrast, humans continually learn throughout their life and adapt to changes in the environment and in their goals. This ability to continually learn and adapt will be crucial for general AI agents who interact with people in order to help them accomplish tasks. One possible explanation for this ability in the natural world is the existence of complementary learning systems (CLS): one which \textit{slowly} acquires structured knowledge and the second which \textit{rapidly} learns specifics adapting to the current situation \cite{kumaran2016learning}. In contrast, in RL, learning is typically focused on how to optimize return, and the main estimated quantity is one value function (which then drives learning a policy). As a result, when changes occur in the environment, RL systems are faced with the \textit{stability-plasticity dilemma}~\cite{carpenter1987massively}: whether to forget past predictions in order to learn new estimates or to preserve old estimates, which may be useful again later, and compromise the accuracy on a new task.

The stability-plasticity dilemma in RL is agnostic to the type of function approximation used to estimate predictions, and it is not simply a side effect of the function approximator having small capacity. In fact, the problem exists even when the RL agent uses a table to represent the value function. To see this, consider policy evaluation 
on a sequence of tasks whose true value function is different due to different rewards and transition probabilities. A convergent temporal-difference (TD) algorithm \cite{sutton1988learning} aggregates information from all value functions in the task distribution implicitly, thereby losing precision in the estimate of the value function for the current situation. A tracking TD learning algorithm \cite{sutton2007role} learns predictions for the current task, overwriting estimates of past tasks and  
re-learning from scratch for each new task, which can require a lot of data. Using a separate function approximator for each task, which is a common approach~\cite{kessler2022same}, requires detecting the identity of the current task to update accordingly, which can be challenging and limits the number of tasks that can be considered. 

As an alternative, we propose a CLS-inspired approach to the stability-plasticity dilemma which relies on maintaining two value function estimates: a {\em permanent} one, whose goal is to accumulate "baseline" knowledge from the entire distribution of information to which the agent is exposed over time, and a {\em transient} component, whose goal is to learn very quickly using information that is relevant to the current circumstances. This idea has been successfully explored in the context of RL and tree search in the game of Go \cite{silver2008sample}, but the particular approach taken uses domain knowledge to construct features and update rules.
Our approach is general, simple, online and model-free, and can be used in continual RL. Our method is also orthogonal to various advancements in the continual RL field, and therefore, it can be combined with other improvements. Our contributions are: 
\begin{itemize}
    \item A conceptual framework for defining permanent and transient value functions, which can be used to estimate value functions;
    \item RL algorithms for prediction and control using permanent and transient value functions;
    \item Theoretical results which help clarify the role of these two systems in the semi-continual (multi-task) RL setting; and
    \item Empirical case studies of the proposed approaches in simple gridworlds, Minigrid \cite{chevalier2018minimalistic}, JellyBeanWorld (JBW) \cite{platanios2020jelly}, and MinAtar environments \cite{young2019minatar}.
\end{itemize}

\section{Background}
\label{Sec:Background}
Let $\s$ be the set of possible states and $\act$ the set of actions.
At each timestep $t$, the agent takes action $A_t \in \act$ in state $S_t$ according to its (stochastic) policy $\pol: \s \to Dist(\act)$. As a consequence, the agent receives reward $R_{t+1} $ and transitions to a new state $S_{t+1}$. Let $G_t = \sum_{k=t}^{\infty} \gamma^{k-t} R_{k+1}$ be the discounted return obtained by following $\pol$ from step $t$ onward, with $0<\gamma<1$ the discount factor. Let $\vpol(s) = \E_{\pol}[G_t | S_t = s]$.
This quantity can be estimated with a function approximator parameterized by $\w$, for example using TD learning~\cite{sutton1988learning}:
\begin{align}
    \w_{t+1} &\gets \w_t + \alpha_t \delta_t \nabla_{\w} v_{\w}(S_t),
\end{align}
where $\alpha_t$ is the learning rate at time $t$, $\delta_t = R_{t+1} + \gamma v_{\w}(S_{t+1}) - v_{\w}(S_t)$ is the TD error (see \citet{sutton2018reinforcement} for details).

In the control problem, the agent's goal is to find a policy, $\pol^*$, that maximizes expected returns. This can be achieved by estimating the optimal action-value function, $\qstar = max_{\pol} \qpol$, where $\qpol = \E_{\pol}[G_t| S_t = s, A_t = a]$, e.g. using Q-learning \cite{watkins1989learning}.

"Continual RL" is closely related to transfer learning \cite{taylor2009transfer}, meta learning \cite{schmidhuber1987evolutionary, finn2017model}, multi-task learning \cite{caruana1997multitask}, and curriculum learning \cite{ring1997child}. A thorough review distinguishing these flavours can be found in \citet{khetarpal2022towards}. We use Continual RL to refer to a setup in which the agent's environment (i.e. the reward or transition dynamics) changes over time. We use the term \textit{semi-continual RL} for the case when the agent can observe task boundaries (this is similar to the multi-task setting, but does not make assumptions about the task distribution or task IDs). In continual RL, a unique optimal value function may not exist, so the agent has to continually \textit{track} the value function in order to learn useful policies \cite{sutton2007role, abel2023definition}.

Our work focuses on the idea of two learning systems: one learning the "gist" of the task distribution, slowly, the other which is used for tracking can quickly adapt to current circumstances. We instantiate this idea in the context of learning value functions, both for prediction and control, but the idea can be applied broadly (for e.g. policies, GVFs). The concept of decomposing the value function into permanent and transient components was first introduced in model-based RL \cite{silver2008sample}, for the single task of learning to play Go, and using samples drawn from the model to train the transient value function. In contrast, we focus on general continual RL, with samples coming from the environment.

In continual RL, most of the literature is focused on supplementing neural network approaches with better tools, such as designing new optimizers to update weights \cite{sutton1992adapting, jacobsen2019meta, dohare2021continual, ben2022lifelong, elsayed2023utility}, building new architectures \cite{kaplanis2018continual, powers2022self}, using experience replay to prevent forgetting \cite{riemer2018learning, rolnick2019experience, kaplanis2020continual, caccia2022task}, promoting plasticity explicitly \cite{lyle2022understanding, nikishin2022primacy}, or using regularization techniques from continual supervised learning \cite{kirkpatrick2017overcoming, javed2019meta, pan2019fuzzy, lu2019adaptive}. Our method is agnostic to the nature of the function approximator used, and therefore, complementary to these advancements (and could in fact be combined with any of these methods).

Our approach has parallels with fast and slow methods \cite{duan2016rl, botvinick2019reinforcement} which use neural networks as the main learning system and experience replay as a secondary system. Another similar flavor is to use the task distribution to learn a good starting point, as in distillation-based approaches \cite{teh2017distral, guillet2022neural} or MaxQInit \cite{abel2018policy}. But the latter are multi-task RL methods and their use in continual RL is limited.

\section{Defining Permanent and Transient Value Functions}
\label{Sec:PT-Memories}
Inspired by CLS theory, we decompose the value function estimated by the agent into two, independently parameterized components: one which slowly acquires general knowledge, the other which quickly learns local nuances but then forgets. We call these components \textit{permanent value function}, $\PMpred_\PMparams$ and \textit{transient value function}, $\TMpred_\TMparams$. The overall value function is computed additively:
\begin{align}
    \vpt(s) = \PMpred_\PMparams(s) + \TMpred_\TMparams(s),
\end{align}
A similar decomposition can be done to the action-value function:
\begin{align}
    \QPT(s, a) = \PMcontrol_\PMparams(s, a) + \TMcontrol_\TMparams(s, a).
\end{align}
Other approaches to combining the two components are possible but additive decomposition is the simplest.

\subsection{Permanent Value Function}
Similar to the role of the neocortex in the brain, the permanent value function should capture general structure in the value functions from the tasks that the agent has seen in its lifetime. It should also provide good baseline predictions for any task that the agent faces, in order to allow it to learn good estimates with fewer updates. To achieve this in a scenario where the agent goes through a sequence of tasks and knows when the task changes, we might store all the states visited during the current task in a buffer and using them when the task finishes, to update the permanent value function as follows:
\begin{align}
\label{Eq:permanent-prediction-update}
    \PMparams_{k+1} &\gets \PMparams_k + \PMlr_k (\vpt(S_k) - \PMpred_\PMparams (S_k)) \nabla_{\PMparams} \PMpred_{\PMparams}(S_k),
\end{align}
where $\PMlr$ is the learning rate and $k$ is the iteration index of the buffer. The permanent estimate takes a \textit{small} step towards the new estimate of the value function in the above update rule: bootstrapping at a slower timescale. We analyze this idea further in Sec. \ref{sec:theory}.

\subsection{Transient Value Function}
\label{Subsec:Transient-Memory}
The transient value function should compute corrections on top of the estimates provided by the permanent value function, in order to approximate more closely the true value function of the current task. We implement this intuition by updating the parameters of transient value function as:
\begin{align}
\label{Eq:transient-prediction-update}
    \TMparams_{t+1} &\gets \TMparams_t + \TMlr_t \delta_t \nabla_{\TMparams} \TMpred_{\TMparams}(S_t),
\end{align}
where $\TMlr$ is the learning rate, $\delta_t = R_{t+1} + \gamma \vpt(S_{t+1}) - \vpt(S_t)$
and $\nabla_{\TMparams} \TMpred_{\TMparams}(S_t)$ is the gradient of transient value function at time $t$. Eq. \eqref{Eq:transient-prediction-update} is a semi-gradient update rule, in the same sense used to describe TD-learning, as the gradient is with respect to the current estimate only and not the target. Since the permanent and the transient value functions are independently parameterized, the semi-gradient ends up being just $\nabla_{\TMparams} \TMpred_{\TMparams}(S_t)$. This results in learning only a part of the value function that is not captured by the permanent value function. The transient value function is reset or decayed after transferring the new information into the permanent value function (e.g. at the moment when the task changes), similar to the functioning of the hippocampus \cite{kumaran2016learning, richards2017persistence}.

\subsection{Learning Algorithm}

\begin{wrapfigure}{R}{0.43\textwidth}
\begin{minipage}{0.43\textwidth}
\begin{algorithm}[H]
\caption{PT-TD learning (Prediction)}
\begin{algorithmic}[1]
    \STATE Initialize: Buffer $\mathcal{B}$, $\PMparams$, $\TMparams$
    \FOR{$t: 0 \to \infty$}
        \STATE Store $S_t$ in $\mathcal{B}$
        \STATE Observe $R_{t+1}, S_{t+1}$
        \STATE Update $\TMparams$ using Eq. \eqref{Eq:transient-prediction-update}
        \IF{Task Ends}
            \STATE Update $\PMparams$ using $\mathcal{B}$ and Eq. \eqref{Eq:permanent-prediction-update}
            \STATE Reset transient value function, $\TMparams$
            \STATE Reset $\mathcal{B}$
        \ENDIF
    \ENDFOR
\end{algorithmic}
\label{algo:PTMem-prediction}
\end{algorithm}
\end{minipage}
\vspace{-30pt}
\end{wrapfigure}

We can easily turn the updates above into algorithms; a blueprint for the prediction case is shown in Algorithm \ref{algo:PTMem-prediction}, and the control version is in Appendix \ref{algo:PTMem-control}. The initialization and resets are done appropriately based on the function approximation used.

In the above algorithm, we assumed that the agent can observe task boundaries, though not any additional information (like a task id). We made this assumption to facilitate the theoretical analysis presented in the next section. A fully continual version of the algorithms is presented and evaluated empirically in Sec.~\ref{Sec:CRL}. 

Note that this algorithm is a strict generalization of TD learning: instead of setting $\TMparams$ to $0$ when the task changes, if we were to match the difference between the updated permanent value function and the transient value function instead, we would obtain exactly the TD learning predictions. Another way to get TD is by initializing $\PMparams$ to match the initialization of the TD learning algorithm and updating $\TMparams$ only without resetting. We explore this connection formally in the next section.

\section{Theoretical Results}
\label{sec:theory}

In this section, we provide some theoretical results aiming to understand the update rules of permanent and transient value function separately, and then we study the semi-continual RL setting (because the fully continual RL setting is poorly understood in general from a theoretical point of view, and even the tools and basic setup are somewhat contentious). All our results are for the prediction problem with tabular value function representation; We expect that prediction results with linear function approximation can also be obtained. We present theorems in the main paper and include proofs in appendices \ref{App:proofs}. The appendix also contains analytical equations describing the evolution of the parameters based on these algorithms.

\subsection{Transient Value Function Results}
We provide an understanding of what the transient value function does in the first set of results (Theorems 1-4). We carry out the analysis by fixing $\PMpred$ and only updating $\TMpred$ using samples from one particular task. We use $\vtd_t$ and $\vpt_t$ to denote the value function estimates at time $t$ learned using TD learning and Alg. \ref{algo:PTMem-prediction} respectively. 

The first result establishes a relationship between the estimates learnt using TD and the transient value function update rule in Eq.(\ref{Eq:transient-prediction-update}).
\begin{theorem}
\label{thm:vpt-td-relationship}
If $\vtd_0=\PMpred$, $\TMpred_0=0$ and the two algorithms train on the same samples using the same learning rates, then $\forall t, \vpt_t=\vtd_t$.
\end{theorem}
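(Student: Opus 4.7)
The plan is to prove this by induction on $t$, exploiting the tabular setting so that each update only modifies the value at the currently visited state.

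First I would set up the base case. At $t = 0$, we have $\vpt_0(s) = \PMpred(s) + \TMpred_0(s) = \PMpred(s) + 0 = \PMpred(s)$, and by assumption $\vtd_0 = \PMpred$, so $\vpt_0 = \vtd_0$ pointwise.

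For the inductive step, assume $\vpt_t = \vtd_t$. The key observation is that the TD error driving the transient update,
\begin{equation*}
\delta_t = R_{t+1} + \gamma \vpt_t(S_{t+1}) - \vpt_t(S_t),
\end{equation*}
equals the TD error driving the TD update,
\begin{equation*}
\delta^{TD}_t = R_{t+1} + \gamma \vtd_t(S_{t+1}) - \vtd_t(S_t),
\end{equation*}
because the two algorithms see the same sample $(S_t, R_{t+1}, S_{t+1})$ and by the induction hypothesis their current value estimates agree. Then, in the tabular case, the gradient $\nabla_{\TMparams}\TMpred_{\TMparams}(S_t)$ is the indicator vector for $S_t$, so the transient update leaves $\TMpred(s)$ unchanged for every $s \neq S_t$ and adds $\alpha_t \delta_t$ to $\TMpred(S_t)$. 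For $s \neq S_t$: $\vpt_{t+1}(s) = \PMpred(s) + \TMpred_t(s) = \vpt_t(s) = \vtd_t(s) = \vtd_{t+1}(s)$, where the final equality uses that the TD update also leaves unvisited states untouched. For $s = S_t$: $\vpt_{t+1}(S_t) = \PMpred(S_t) + \TMpred_t(S_t) + \alpha_t \delta_t = \vpt_t(S_t) + \alpha_t \delta^{TD}_t = \vtd_t(S_t) + \alpha_t \delta^{TD}_t = \vtd_{t+1}(S_t)$. This closes the induction.

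There is no real obstacle here; the statement is essentially a bookkeeping identity. The only subtlety to flag is that the equivalence relies on two features: (i) $\PMpred$ is frozen throughout this analysis (as stated at the start of Section~\ref{sec:theory}), so the entire change in $\vpt$ comes from $\TMpred$, and (ii) the tabular parameterization, where updates are local to $S_t$ so that $\PMpred(S_t) + \TMpred_{t+1}(S_t) = \vpt_t(S_t) + \alpha_t \delta_t$ without any interference from other states. Were $\PMpred$ updated concurrently, or were the representation shared between $\PMpred$ and $\TMpred$, the identity would break; it would be worth noting this briefly to clarify the scope of the theorem before moving on to Theorems 2--4.
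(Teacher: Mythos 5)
Your proof is correct and follows essentially the same route as the paper's: induction on $t$, with the base case from $\TMpred_0=0$ and the inductive step from the two TD errors coinciding under the induction hypothesis. Your explicit handling of unvisited states ($s \neq S_t$) is slightly more careful than the paper's write-up, which tacitly treats only the visited state, but the argument is the same.
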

The theorem shows that the estimates learnt by our approach match those learnt by TD-learning in a special case, proving that our algorithm is a strict generalization of TD-learning.

The next two theorems together shows that the transient value function reduces prediction error. First we show the contraction property of the transient value function Bellman operator.
\begin{theorem}
\label{thm:TM-contraction}
The expected target in the transient value function update rule is a contraction-mapping Bellman operator: $\Tt \TMpred = \rp + \gamma \pp \PMpred - \PMpred + \gamma \pp \TMpred$, where $\rp$ is expected one-step reward for each state written in vector form and $\pp$ is the transition matrix for policy $\pol$.
\end{theorem}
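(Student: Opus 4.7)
The plan is to unpack the transient update rule into its one-step target, take expectations under the policy $\pol$, and then invoke the standard sup-norm contraction argument that underlies Bellman operators.

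First I would write out what the update target is in the tabular case. Since $\vpt = \PMpred + \TMpred$, the TD-like error used in Eq.~\eqref{Eq:transient-prediction-update} expands as
\begin{align*}
\delta_t = R_{t+1} + \gamma \PMpred(S_{t+1}) + \gamma \TMpred(S_{t+1}) - \PMpred(S_t) - \TMpred(S_t),
\end{align*}
so the tabular stochastic update on $\TMpred(S_t)$ has the form $\TMpred(S_t) \leftarrow (1-\TMlr_t)\TMpred(S_t) + \TMlr_t \cdot y_t$ with target $y_t = R_{t+1} + \gamma \PMpred(S_{t+1}) + \gamma \TMpred(S_{t+1}) - \PMpred(S_t)$. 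Conditioning on $S_t = s$ and averaging over $A_t \sim \pol(\cdot|s)$ and $S_{t+1} \sim \p(\cdot|s,A_t)$ turns the reward term into $\rp(s)$ and the two bootstrapped terms into $(\pp \PMpred)(s)$ and $(\pp \TMpred)(s)$, while $\PMpred(s)$ comes out of the expectation. Collecting these componentwise into vector form gives exactly
\begin{align*}
\Tt \TMpred \;=\; \rp + \gamma \pp \PMpred - \PMpred + \gamma \pp \TMpred,
\end{align*}
which establishes the stated form of the operator.

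For contraction, I would fix two arbitrary vectors $u, v \in \R^{|\s|}$ and compute $\Tt u - \Tt v$. The terms $\rp + \gamma \pp \PMpred - \PMpred$ do not depend on the argument and therefore cancel, leaving $\Tt u - \Tt v = \gamma \pp (u - v)$. Because $\pp$ is a row-stochastic matrix, each row is a probability distribution and $\norminf{\pp w} \leq \norminf{w}$ for any $w$. Hence $\norminf{\Tt u - \Tt v} \leq \gamma \norminf{u-v}$, and $\Tt$ is a $\gamma$-contraction in the sup norm; by Banach's fixed point theorem it admits a unique fixed point.

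There is no real obstacle here; the only subtlety is making sure the $-\PMpred(S_t)$ term is handled correctly (it stays as $-\PMpred$ after the expectation, rather than being absorbed into a $\pp$ product) and that the $\PMpred$-dependent contributions are treated as fixed offsets that drop out of the contraction estimate. This mirrors the classical derivation showing that the ordinary TD operator $T v = \rp + \gamma \pp v$ is a $\gamma$-contraction, with the only change being additional state-dependent constants generated by the fixed permanent component.
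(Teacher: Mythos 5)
Your proposal is correct and follows essentially the same route as the paper: identify the stochastic target as $R_{t+1} + \gamma \PMpred(S_{t+1}) - \PMpred(S_t) + \gamma \TMpred(S_{t+1})$, take the expectation under $\pol$ to obtain the vector-form operator, and then observe that the $\PMpred$-dependent terms cancel in $\Tt u - \Tt v = \gamma \pp (u-v)$ so the sup-norm contraction with factor $\gamma$ follows from row-stochasticity of $\pp$. No gaps; this matches the paper's argument.
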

We get the Bellman operator by writing the expected target in matrix form. We now characterize the fixed point of the transient value function operator in the next theorem.
\begin{theorem}
The unique fixed point of the transient operator is $(I - \gamma \pp)^{-1} \rp - \PMpred$.
\end{theorem}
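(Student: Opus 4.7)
The plan is to exploit the contraction property established in the previous theorem (Theorem~2) together with direct algebraic manipulation of the fixed-point equation. Since $\Tt$ is a contraction on the space of value functions (when $\PMpred$ is held fixed), Banach's fixed point theorem guarantees that a unique fixed point exists; all that remains is to exhibit it in closed form.

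To find the fixed point, I would set $\TMpred = \Tt\TMpred$ and solve. Starting from
\begin{equation*}
\TMpred = \rp + \gamma \pp \PMpred - \PMpred + \gamma \pp \TMpred,
\end{equation*}
I would move the $\gamma \pp \TMpred$ term to the left and factor to get $(I - \gamma \pp)\TMpred = \rp - (I - \gamma \pp)\PMpred$. The key observation here is that $I - \gamma \pp$ is invertible: because $\pp$ is a stochastic matrix with $\|\pp\|_\infty = 1$ and $0 < \gamma < 1$, the spectral radius of $\gamma \pp$ is strictly less than one, so the Neumann series $\sum_{k=0}^\infty (\gamma \pp)^k$ converges to $(I - \gamma\pp)^{-1}$. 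Multiplying both sides by this inverse yields
\begin{equation*}
\TMpred = (I - \gamma \pp)^{-1} \rp - \PMpred,
\end{equation*}
which is precisely the claimed fixed point.

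Uniqueness follows immediately from the contraction mapping theorem via Theorem~2, so I would just remark on that once. I do not anticipate any real obstacle here: the only subtlety is making sure the invertibility of $I - \gamma \pp$ is justified properly (via spectral radius / Neumann series), and being careful with the sign when factoring $\rp + \gamma\pp\PMpred - \PMpred$ as $\rp - (I - \gamma\pp)\PMpred$. The whole proof should fit in a few lines of display math with a short sentence invoking Theorem~2 for uniqueness.
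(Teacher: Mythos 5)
Your proposal is correct, and it reaches the same destination as the paper's proof but by a slightly different mechanical route. The paper computes the fixed point constructively: it applies the operator $\Tt$ repeatedly to an arbitrary vector $v$, observes that the $\PMpred$ terms telescope so that after $n$ applications one has $\sum_{j=0}^{n-1}(\gamma\pp)^j\rp + (\gamma\pp)^n\PMpred - \PMpred + (\gamma\pp)^n v$, and then lets $n\to\infty$ using $(\gamma\pp)^n \to 0$ to obtain $(I-\gamma\pp)^{-1}\rp - \PMpred$ as the limit of the iteration (uniqueness being inherited from the contraction property). You instead solve the fixed-point equation $\TMpred = \Tt\TMpred$ algebraically, rearranging to $(I-\gamma\pp)\TMpred = \rp - (I-\gamma\pp)\PMpred$ and inverting, with existence and uniqueness delegated entirely to Banach's theorem via the contraction established in the preceding theorem. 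Both arguments rest on exactly the same two facts --- the contraction property and the invertibility of $I-\gamma\pp$ (equivalently, convergence of the Neumann series) --- so neither buys anything the other lacks in generality; your version is arguably tidier because it avoids the paper's slight notational conflation of ``the result of $n$ operator applications'' with ``the fixed point,'' while the paper's version has the pedagogical virtue of exhibiting the fixed point as the limit of value iteration. Your sign bookkeeping in the factorization $\rp + \gamma\pp\PMpred - \PMpred = \rp - (I-\gamma\pp)\PMpred$ is correct, and your justification of invertibility via the spectral radius of $\gamma\pp$ is sound (the paper justifies the same fact by noting $\gamma\pp$ is sub-stochastic so $(\gamma\pp)^n \to 0$).
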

The theorem confirms the intuition that the transient value function learns only the part of the value function that is not captured in the permanent value function, while the agent still learns the overall value function: $\vpt = \PMpred + \TMpred = \PMpred + (I - \gamma \pp)^{-1} \rp - \PMpred = v_\pol$.

The next theorem relates the fixed point of the transient value function to the value function of a different Markov Reward Process (MRP)
\begin{theorem}
Let $\m$ be an MDP in which the agent is performing transient value function updates for $\pi$. Define a Markov Reward Process (MRP), $\widetilde{\m}$, with state space $\widetilde{\s}: \{(S_t, A_t, S_{t+1}), \forall t\}$, transition dynamics $\widetilde{\p}(x'|x, a') = \p(s''|s', a'),\ \forall x, x' \in \widetilde{\s}, \forall s', s'' \in \s, \forall a' \in \act$ and reward function $\widetilde{\rew}(x, a') = \rew(s, a) + \gamma \PMpred(s') - \PMpred(s)$. Then, the fixed point of Eq.(\ref{Eq:transient-prediction-update}) is the value function of $\pi$ in $\widetilde{\m}$.
\end{theorem}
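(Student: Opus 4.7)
The plan is to identify the value function of $\pi$ in $\widetilde{\m}$ using the telescoping structure of its modified reward, then match it against the transient fixed point from Theorem 3. First, I would expand the discounted return in $\widetilde{\m}$ along a sample trajectory generated by $\pi$. Using the reward definition, the return from time $t$ splits as
\begin{equation*}
\widetilde{G}_t = \sum_{k=0}^{\infty} \gamma^k R_{t+k+1} + \sum_{k=0}^{\infty} \gamma^k \bigl[\gamma \PMpred(S_{t+k+1}) - \PMpred(S_{t+k})\bigr].
\end{equation*}
The second sum telescopes to $-\PMpred(S_t)$, so pathwise $\widetilde{G}_t = G_t - \PMpred(S_t)$. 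Taking conditional expectation under $\pi$ then gives $\widetilde{v}_{\pol}(s) = \vpol(s) - \PMpred(s)$, which by Theorem 3 is precisely the unique fixed point of Eq.~\eqref{Eq:transient-prediction-update}.

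As a cross-check (and alternative route), I would verify the Bellman equation for $\widetilde{\m}$ directly. The expected immediate reward at $s$ under $\pi$ equals $\rp(s) + \gamma (\pp \PMpred)(s) - \PMpred(s)$ and the induced transition kernel coincides with $\pp$, so the Bellman equation for $\widetilde{v}_{\pol}$ reads $V = \rp + \gamma \pp \PMpred - \PMpred + \gamma \pp V$. This is exactly the fixed-point equation $V = \Tt V$ obtained from Theorem 2. Because $\Tt$ is a $\gamma$-contraction, its fixed point is unique, and the two value functions must agree.

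The main obstacle is handling the nominal state space $\widetilde{\s} = \{(S_t, A_t, S_{t+1})\}$ cleanly, since the fixed point of the transient update is a function on $\s$ rather than on triples. The resolution is to observe that the transitions out of any triple and the rewards incurred along the future trajectory depend only on its last coordinate; for value-function purposes, $\widetilde{\m}$ therefore collapses to an MRP on $\s$ with transition-dependent reward $\rew(s,a) + \gamma \PMpred(s') - \PMpred(s)$. Once this projection is set up, the telescoping argument concludes the proof without further computation.
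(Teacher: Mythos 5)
Your proposal is correct and follows essentially the same route as the paper's proof: telescope the auxiliary reward $\rew(S_k,A_k)+\gamma\PMpred(S_{k+1})-\PMpred(S_k)$ along a trajectory to obtain $\widetilde{G}_t = G_t - \PMpred(S_t)$ pathwise, take expectations to get $v_\pol - \PMpred$, and identify this with the fixed point from Theorem 3, while collapsing the triple-valued state space onto $\s$ exactly as the paper does. The Bellman-operator cross-check via Theorem 2 is a nice redundancy but not a different argument.
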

The proof follows by comparing the value function of the modified MRP with the fixed point of transient value function. The theorem identifies the structure of the fixed point of the transient value function in the form of the true value function of a different MRP. This result can be used to design an objective for the permanent value function updates; For instance, to reduce the variance of transient value function updates.

\subsection{Permanent Value Function Results}
We focus on permanent value function in the next two theorems. First, we characterize the fixed point of the permanent value function. Then, we connect the fixed point of the permanent value function to the jumpstart objective \cite{taylor2009transfer, abel2018policy}, which is defined as the initial performance of the agent in the new task before collecting any data. The latter is used to measure the effectiveness of transfer in a zero-shot learning setup, and therefore, is crucial to measure generalization under non-stationarity. We make an assumption on the task distribution for the following proofs:
\begin{assumption}
\label{thm:task-dist-assumption}
Let MDP $\m_{\tau}=(\s, \act, \rew_{\tau}, \p_{\tau}, \gamma)$ denote task $\tau$. Then, $\vtask$ is the value function of task $\tau$ and let $\E_\tau$ denote the expectation with respect to the task distribution. We assume there are $N$ tasks and task $\tau$ is i.i.d. sampled according to $p_{\tau}$.
\end{assumption}

\begin{theorem}
\label{thm:vp-fixed-point}
Following Theorem \ref{thm:TM-contraction}, under assumption \ref{thm:task-dist-assumption} and Robbins-Monro step-size conditions, the sequence of updates computed by Eq.(\ref{Eq:permanent-prediction-update}) contracts to a unique fixed point $\E_\tau[\vtask]$.
\end{theorem}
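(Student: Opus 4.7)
The plan is to reduce the permanent-value update to a classical Robbins--Monro recursion whose mean field has the desired fixed point, and then invoke standard stochastic-approximation convergence.

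First I would unroll what $\vpt$ equals at the moment the permanent update in Eq.~(\ref{Eq:permanent-prediction-update}) is executed. By Algorithm~\ref{algo:PTMem-prediction}, the permanent parameters are updated at the end of a task using states from the buffer $\mathcal{B}$, at which point the transient value function has been driven to the fixed point characterized in Theorem~3. That fixed point equals $(I - \gamma \pp)^{-1}\rp - \PMpred = \vtask - \PMpred$ for the current task $\tau$. Hence at the update time,
\[
\vpt(S_k) \;=\; \PMpred(S_k) + \TMpred(S_k) \;=\; \vtask(S_k).
\]

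Second, I would substitute this identity into Eq.~(\ref{Eq:permanent-prediction-update}) in the tabular case, where $\nabla_\PMparams \PMpred_\PMparams(S_k)$ is the indicator vector for state $S_k$. Fixing a state $s$ and indexing by the subsequence of updates that touch $s$, the recursion becomes
\[
\PMpred_{k+1}(s) \;=\; \PMpred_k(s) + \PMlr_k\bigl(\vtask(s) - \PMpred_k(s)\bigr),
\]
with $\tau$ drawn i.i.d.\ from $p_\tau$ by Assumption~\ref{thm:task-dist-assumption}. This is a scalar Robbins--Monro scheme for the root of $h(x) = \E_\tau[\vtask(s)] - x$. Because $|\vtask(s)| \le R_{\max}/(1-\gamma)$ the noise $\vtask(s) - \E_\tau[\vtask(s)]$ is a zero-mean, bounded-variance martingale difference, and $h$ is strictly contracting with slope $-1$. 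Under the stated Robbins--Monro step-size conditions $\sum_k \PMlr_k = \infty$, $\sum_k \PMlr_k^2 < \infty$, standard stochastic-approximation results (e.g., the ODE method) give $\PMpred_k(s) \to \E_\tau[\vtask(s)]$ almost surely. Applying this coordinate-wise over the finite state space $\s$ yields the claim $\PMpred \to \E_\tau[\vtask]$.

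The main obstacle is the clean identification $\vpt(S_k) = \vtask(S_k)$: strictly speaking, the transient component only reaches the fixed point of $\Tt$ in the limit, so during a finite task there is residual transient bias entering the permanent target. The cleanest fix, consistent with the hypothesis "Following Theorem~\ref{thm:TM-contraction}", is to treat each permanent update as being applied after the transient has converged within the task (so the target is unbiased for $\vtask(s)$); otherwise one must show that the bias-induced perturbation is summable, which still falls inside the scope of generalized Robbins--Monro theorems with vanishing bias. A secondary point worth being explicit about is that convergence at state $s$ requires $s$ to be visited infinitely often across tasks, which can be absorbed into a standard exploration assumption on $\pi$ together with the step-size divergence condition.
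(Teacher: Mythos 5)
Your proposal follows essentially the same route as the paper: reduce the tabular permanent update to a Robbins--Monro recursion whose mean field has root $\E_\tau[\vtask(s)]$, and invoke standard stochastic-approximation convergence coordinate-wise over the finite state space. The only difference is cosmetic --- the paper keeps the target as $\vpt_\tau(s)$, shows convergence to $\E_\tau[\vpt_\tau(s)]$, and then argues this quantity is itself contracting toward $\E_\tau[\vtask(s)]$, whereas you idealize the target as $\vtask(s)$ via the transient fixed point and explicitly flag the residual within-task bias, a point the paper handles no more rigorously than you do.
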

We use the approach described in Example 4.3 from \citet{bertsekas1996neuro} to prove the result. Details are given in Appendix \ref{App:PM-fixed-point-thm}

\begin{theorem}
\label{thm:jump-start}
The fixed point of the permanent value function optimizes the jumpstart objective, $J = \min_{u \in \R^{|\s|}} \frac{1}{2} \E_\tau[\norm{u - \vtask}_2^2]$.
\end{theorem}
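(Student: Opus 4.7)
The plan is to combine Theorem \ref{thm:vp-fixed-point}, which identifies the fixed point of the permanent value function update as $\E_\tau[\vtask]$, with the classical fact that the mean of a random vector is the unique minimizer of expected squared Euclidean distance. So the entire argument is really a one-line observation: show that $\argmin_{u} \frac{1}{2}\E_\tau[\norm{u - \vtask}_2^2] = \E_\tau[\vtask]$, then invoke Theorem \ref{thm:vp-fixed-point}.

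To carry it out cleanly, I would first note that the objective $J(u) = \frac{1}{2}\E_\tau[\norm{u - \vtask}_2^2]$ is a finite sum (by Assumption \ref{thm:task-dist-assumption}, only $N$ tasks have positive probability) of strictly convex quadratics in $u \in \R^{|\s|}$, hence strictly convex with a unique minimizer. Taking the gradient in $u$ and exchanging gradient with the finite expectation gives $\nabla_u J(u) = \E_\tau[u - \vtask] = u - \E_\tau[\vtask]$. Setting this to zero yields the unique stationary point $u^\star = \E_\tau[\vtask]$, which is the global minimum by convexity.

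Equivalently (and perhaps more illuminating to include), I would use the bias-variance style decomposition
\begin{equation*}
\E_\tau\!\left[\norm{u - \vtask}_2^2\right] = \norm{u - \E_\tau[\vtask]}_2^2 + \E_\tau\!\left[\norm{\vtask - \E_\tau[\vtask]}_2^2\right],
\end{equation*}
where the second term is independent of $u$ and the first is nonnegative and vanishes iff $u = \E_\tau[\vtask]$. Combining with Theorem \ref{thm:vp-fixed-point}, which states that the permanent update rule in Eq.~\eqref{Eq:permanent-prediction-update} contracts to $\E_\tau[\vtask]$, we conclude that this fixed point coincides with $\argmin J$, establishing the claim.

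There is essentially no technical obstacle here; the result is really a restatement of Theorem \ref{thm:vp-fixed-point} in variational form. The only care needed is to justify the exchange of gradient and expectation (immediate under Assumption \ref{thm:task-dist-assumption} since the task distribution is finite) and to make clear that the minimization is over the unrestricted tabular space $\R^{|\s|}$, so that the unconstrained minimizer from first-order conditions is admissible.
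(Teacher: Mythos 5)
Your proposal is correct and follows essentially the same route as the paper's proof: differentiate $J(u)=\frac{1}{2}\E_\tau[\norm{u-\vtask}_2^2]$, set the gradient to zero to obtain $u=\E_\tau[\vtask]$, and identify this with the fixed point from Theorem~\ref{thm:vp-fixed-point}. The added remarks on strict convexity, the exchange of gradient and expectation, and the bias--variance decomposition are sound refinements but do not change the argument.
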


\subsection{Semi-Continual RL Results}

We present two important results in this section which prove the usefulness of our approach in value function retention, for past tasks, and fast adaptability on new tasks. For the proofs, we assume that at time $t$, both $\vtd_t$ and $\vpt_t$ have converged to the true value function of task $i$ and the agent gets samples from task $\tau$ from timestep $t+1$ onward.

The first result proves that the TD estimates forget the past after seeing enough samples from a new task, while our method retains this knowledge through $\PMpred$.
\begin{theorem}
Under assumption \ref{thm:task-dist-assumption}, there exists $k_0$ such that $\forall k \geq k_0$, $\E_\tau\left[\norm{\vtd_{t+k} - v_i}_2^2\right] > \E_\tau\left[\norm{\PMpred - v_i}_2^2\right]$, $\forall i$.
\end{theorem}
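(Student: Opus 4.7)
The plan is to reduce the claim to a bias-variance identity evaluated at the asymptotic limit of $\vtd_{t+k}$. By Theorem \ref{thm:vp-fixed-point}, the permanent value function has converged to $\PMpred = \E_\tau[\vtask]$, so in particular the right-hand side of the theorem is a $\tau$-independent constant. From time $t+1$ onward the TD updates are driven exclusively by samples from $\m_\tau$ starting at $\vtd_t = v_i$, and standard tabular TD convergence gives $\vtd_{t+k} \to \vtask$ as $k \to \infty$ for each realization of $\tau$. Since value functions are uniformly bounded, dominated convergence lets me exchange the limit with the outer expectation:
\begin{align*}
    \lim_{k \to \infty} \E_\tau\left[\norm{\vtd_{t+k} - v_i}_2^2\right] = \E_\tau\left[\norm{\vtask - v_i}_2^2\right].
\end{align*}

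The second step is the classical bias-variance identity applied to the random vector $\vtask$ with fixed reference $v_i$:
\begin{align*}
    \E_\tau\left[\norm{\vtask - v_i}_2^2\right] = \norm{\E_\tau[\vtask] - v_i}_2^2 + \E_\tau\left[\norm{\vtask - \E_\tau[\vtask]}_2^2\right] = \norm{\PMpred - v_i}_2^2 + \Delta,
\end{align*}
where $\Delta := \E_\tau[\norm{\vtask - \PMpred}_2^2]$ is the task-variance term. Crucially, $\Delta$ does not depend on $i$ and is strictly positive whenever the task distribution is non-degenerate (i.e., at least two tasks have distinct value functions). Combining the two displays, for each fixed $i$ one can choose $k_0(i)$ so that the truncation error is less than $\Delta/2$; then for all $k \geq k_0(i)$ the strict inequality $\E_\tau[\norm{\vtd_{t+k} - v_i}_2^2] > \norm{\PMpred - v_i}_2^2$ holds. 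Since Assumption \ref{thm:task-dist-assumption} gives only $N$ tasks, $k_0 := \max_i k_0(i)$ is finite and completes the proof.

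The main obstacle will be justifying the TD tracking step cleanly. If the Robbins--Monro schedule has already decayed to near zero by time $t$, the estimator stays pinned near $v_i$ and never reaches $\vtask$, which would invalidate the passage to the limit. The cleanest workaround is to invoke the tracking TD variant of \citet{sutton2007role} with a sufficiently small constant step size after the switch; then $\vtd_{t+k}$ converges only to a bounded neighborhood of $\vtask$, and the argument still goes through provided that neighborhood's radius is chosen smaller than $\sqrt{\Delta/4}$, so that the asymptotic gap of $\Delta$ absorbs both the truncation and neighborhood errors.
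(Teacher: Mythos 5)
Your proposal is correct and reaches the same two load-bearing facts as the paper's proof, but it assembles them differently. Both arguments hinge on (i) the TD iterate approaching $\vtask$ after the switch and (ii) the strict gap $\E_\tau\left[\norm{\vtask - v_i}_2^2\right] > \norm{\PMpred - v_i}_2^2$ at the permanent fixed point $\PMpred = \E_\tau[\vtask]$. For (ii) you use the bias--variance identity, which gives the gap an explicit value $\Delta = \E_\tau\left[\norm{\vtask - \PMpred}_2^2\right]$; the paper instead invokes Jensen's inequality, which is the same fact without the explicit remainder (and carries the same implicit non-degeneracy requirement you state openly). For (i) the routes genuinely diverge: you take a soft limit ($k \to \infty$, then exchange limit and expectation --- trivial here since Assumption 1 gives finitely many tasks --- then truncate at $\Delta/2$), whereas the paper uses the finite-time contraction bound of Bhandari et al.\ with rate $C = \exp\{-(1-\gamma^2)/4\}$ to lower-bound the error against the old task and solve for a closed-form threshold $k_i$. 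The paper's route buys an explicit, quantitative $k_0$; yours buys a cleaner argument that sidesteps the somewhat delicate reverse-triangle manipulation of squared norms the paper performs to convert "close to $\vtask$" into "far from $v_i$" (your limit argument gets exact equality in the limit, so no such inequality is needed). Your closing worry about the step-size schedule is well taken and applies to the paper's version too: the cited contraction bound presumes an appropriate step-size regime after the switch, so either proof must assume the learning rates have not already decayed to zero at time $t$; your constant-step-size workaround with a neighborhood radius below $\sqrt{\Delta/4}$ is a legitimate way to close that gap.
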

The proof uses the sample complexity bound from \citet{bhandari2018finite} (Sec. 6.3, theorem 1). Details are in Appendix \ref{App:stability-thm}.


Next, we prove that our method has a tighter upper bound on errors compared with the TD learning algorithm in expectation.
\begin{theorem}
\label{thm:new-task-error}
Under assumption \ref{thm:task-dist-assumption}, there exists $k_0$ such that $\forall k \leq k_0$, $\E_\tau\left[\norm{\vtask - \vpt_{t+k}}_2^2 \right]$ has a tighter upper bound compared with $\E_\tau\left[\norm{\vtask - \vtd_{t+k}}_2^2\right]$, $\forall i$.
\end{theorem}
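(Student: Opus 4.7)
The plan is to reduce the comparison to two runs of plain TD learning on task $\tau$ that differ only in their initialization, and then exploit a bias--variance decomposition at the task boundary together with a non-asymptotic TD bound. First I would identify the effective initial conditions at the moment of the switch: TD learning continues updating from $\vtd_{t+1} = v_i$, while our algorithm resets the transient component so that $\vpt_{t+1} = \PMpred$, where Theorem \ref{thm:vp-fixed-point} guarantees $\PMpred$ is close to $\E_\tau[\vtask]$ after enough prior task switches. By Theorem \ref{thm:vpt-td-relationship}, once $\TMpred$ is zeroed and $\PMpred$ is held fixed (as it is throughout task $\tau$), the iterates $\vpt_{t+k}$ are identical in distribution to those of a plain TD learner started at $\PMpred$ and fed the same sample stream from task $\tau$. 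The comparison therefore reduces to two TD runs on task $\tau$ with initializations $v_i$ and $\PMpred$.

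Next, I would invoke a non-asymptotic TD bound of the form used for the previous theorem (e.g. \citet{bhandari2018finite}, Sec. 6.3),
\begin{align*}
\E\bigl[\norm{\vtask - v_k}_2^2\bigr] \,\le\, C_1(k)\,\norm{\vtask - v_0}_2^2 \,+\, C_2(k),
\end{align*}
where $C_1(k)$ decays geometrically and $C_2(k)$ is a sampling-noise term independent of the initialization $v_0$. Applying this to both trajectories and taking $\E_\tau$ of both sides turns each bound linear in $\E_\tau\bigl[\norm{\vtask - v_0}_2^2\bigr]$. A bias--variance decomposition gives
\begin{align*}
\E_\tau\bigl[\norm{\vtask - v_i}_2^2\bigr] - \E_\tau\bigl[\norm{\vtask - \PMpred}_2^2\bigr] \,=\, \norm{\E_\tau[\vtask] - v_i}_2^2 - \norm{\E_\tau[\vtask] - \PMpred}_2^2 \,\ge\, 0,
\end{align*}
which is strictly positive whenever $\PMpred$ is closer to the mean task value than $v_i$ is---something Theorem \ref{thm:vp-fixed-point} delivers for a non-trivial task distribution, since $v_i$ is one particular task value while $\PMpred$ converges to the mean. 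Hence the $C_1(k)$-weighted initial-error term in the PT bound is strictly smaller than in the TD bound, while the $C_2(k)$ terms cancel (they depend only on the sample stream of task $\tau$), yielding a tighter upper bound on the PT expected error for every $k \le k_0$ such that $C_1(k)\cdot\norm{\E_\tau[\vtask]-v_i}_2^2$ has not yet shrunk below the difference in residual terms.

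The main obstacle is phrasing the non-asymptotic TD bound in a form that (i) cleanly separates the initial error from a sampling-noise term whose constants match across both trajectories, and (ii) stays valid for the finite range of iterations $k \le k_0$ of interest while making the choice of $k_0$ explicit in terms of $C_1, C_2$ and the gap $\norm{\E_\tau[\vtask]-v_i}_2^2$. Beyond that bookkeeping, the argument is essentially a bias--variance calculation at the task-switch boundary combined with Theorems \ref{thm:vpt-td-relationship} and \ref{thm:vp-fixed-point}.
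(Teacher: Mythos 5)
Your proposal is correct and follows essentially the same route as the paper's proof: reduce the PT iterates to a TD run initialized at $\PMpred$ via Theorem \ref{thm:vpt-td-relationship}, apply the non-asymptotic bound of \citet{bhandari2018finite} to both trajectories, and conclude from $\PMpred = \E_\tau[\vtask]$ minimizing the expected squared initial error (your bias--variance identity is exactly the content of Theorem \ref{thm:jump-start}, which the paper cites at this step). The only difference is that you carry an explicit initialization-independent noise term $C_2(k)$, which the paper's quoted form of the bound omits; this is a minor refinement, not a different argument.
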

The proof uses the sample complexity bound from \citet{bhandari2018finite} (Sec. 6.3, theorem 1) to get the desired result as detailed in Appendix \ref{App:new-task-error}. As $k \to \infty$, the estimates of both the algorithms converge to the true value function and their upper bounds collapse to $0$. This result doesn't guarantee low error at each timestep, but it provides an insight on why our algorithm has low error.

To further understand the behaviour of our algorithm on a new task, we derive analytical expressions for mean squared error similar to \citet{singh1998analytical} for both our algorithm and TD-learning algorithm (see Appendix \ref{App:mse-TD-thm} and \ref{App:mse-PT-thm}). Then, we use them to calculate the mean squared error empirically on a toy problem. We observe low error for our algorithm at every timestep for various task distributions (details in Appendix \ref{app:task-dist-experiment}).

\section{Experiments: Semi-Continual Reinforcement Learning}
\label{sec:semi-CRL-experiments}

We conducted experiments in both the prediction and the control settings on a range of problems. Although the theoretical results handle the prediction setting, we performed control experiments to demonstrate the broad applicability of our approach. In this section, we assume that the agent is faced with a sequence of tasks in which the transition dynamics are the same but the reward function changes. The agent knows when the task has changed. In the sections below, we focus on the effectiveness of our approach. Additional experiments providing other insights (effect of network capacity, sensitivity to hyperparameters, and effect of task distribution) are included in the Appendix (Sec. \ref{app:task-dist-experiment}, \ref{app:capacity-experiment}, and \ref{app:hp-experiment}).

\textbf{Baselines:} Because our approach is built on top of TD-learning and Q-learning, we primarily use the usual versions of TD learning and Q-learning as baselines, which can be viewed as emphasizing stability. We also include a version of these algorithms in which weights are reset when the task switches, which creates significant plasticity. We include more baselines for the continual RL experiments, which are described in Sec. \ref{Sec:CRL}. The results for all algorithms are averaged across 30 seeds and shaded regions in the figures indicate 90\% confidence intervals\footnote{We report standard distribution confidence intervals with a z-score of 1.645. We use $\mu \pm z \frac{\sigma}{\sqrt{n}}$ to represent the shaded region.}. We use the same procedure to optimize hyperparameters for all algorithms, detailed in Appendix (Sec. \ref{app:hp-experiment}).


\subsection{Prediction experiments}
\label{sec:semi-crl-prediction-experiments}

We ran five experiments on 3 types of navigation tasks, with various function approximators. The policy being evaluated was uniformly random in all experiments. We used root mean squared value error (RMSVE) as a performance measure on the online task and the expected mean squared error (MSE) on the other tasks. 

\begin{wrapfigure}{R}{0.31\textwidth}
\begin{minipage}{0.31\textwidth}
\begin{figure}[H]
    \centering
     \begin{subfigure}{0.48\columnwidth}
        \centering
         \includegraphics[height=2cm, width=2cm]{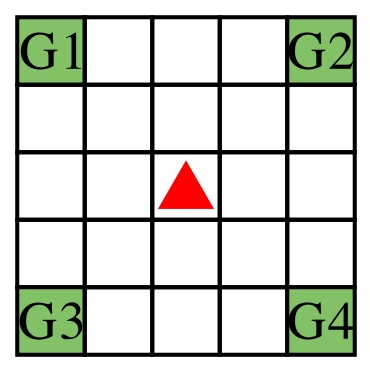}
         \caption{Discrete grid.}
         \label{fig:PE-tabular-env}
    \end{subfigure}
    \hfill
     \begin{subfigure}{0.5\columnwidth}
        \centering
        \includegraphics[width=2cm, height=2cm]{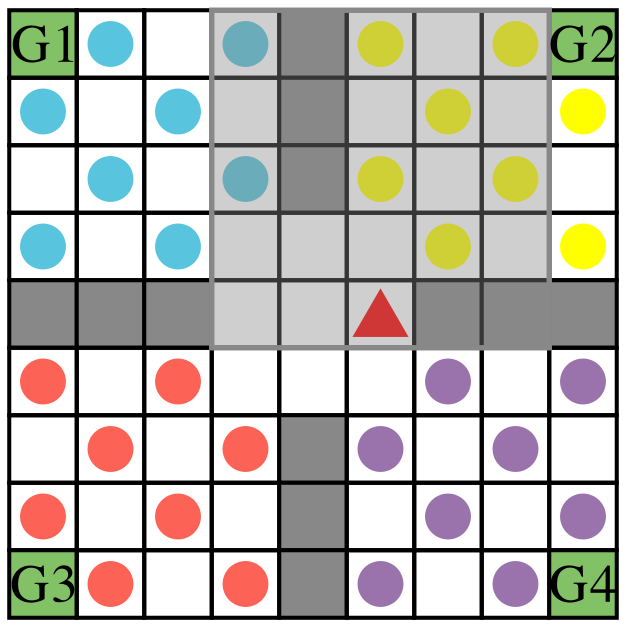}
        \caption{Minigrid.}
        \label{fig:DPE-minigrid-env}
    \end{subfigure}
    \caption{Environments (PE).}
    \label{fig:PE-envs}
\end{figure}
\end{minipage}
\end{wrapfigure}

\textbf{Discrete grid:} is a 5x5 environment depicted in Fig.~\ref{fig:PE-tabular-env}, which has four goal states (one in each corner). The agent starts in the center and chooses an action (up, down, left or right). Each action moves the agent  one step along the corresponding direction. Rewards are 0 for all transitions except those leading into a goal state. The goal rewards change as detailed in Table \ref{tab:PE-tabular-goal-rewards}, so the value function also changes as shown in Fig.~\ref{fig:PE-tabular-true-vf-heatmap}. For the linear setting, each state is represented using 10 features: five for the row and five for the column. The permanent and transient value functions use the same features. For the deep RL experiment, instead of receiving state information, the agent receives 48x48 RGB images. The agent's location is indicated in red, and goal states are colored in green as shown in Fig. \ref{fig:PE-tabular-env}. Goal rewards are multiplied by 10 to diminish the noise due to the random initialization of the value function. For the tabular and linear setting, we run 500 episodes and change the goal rewards after every 50 episodes. For the deep RL experiments, we run 1000 episodes and goal rewards are switched every 100 episodes. The discount factor is 0.9 in all cases.

\begin{figure}[htb]
\centering
    \includegraphics[width=\columnwidth, height=2.5cm]{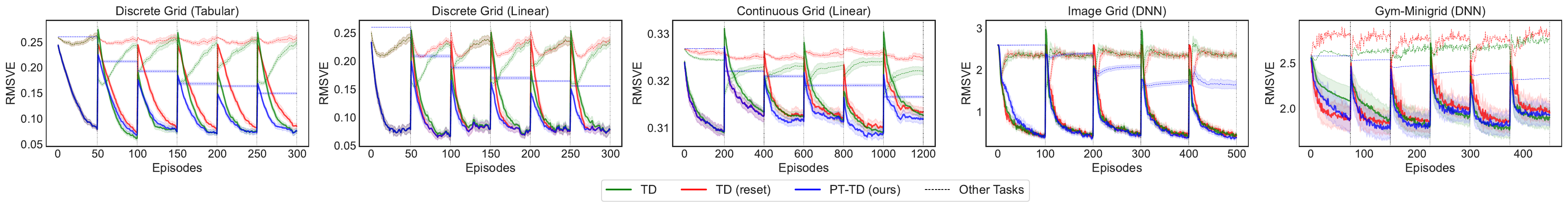}
    \caption{Prediction results on various problems. Solid lines represent RMSVE on the current task and dotted lines represent MSE on other tasks. Black dotted vertical lines indicate task boundaries.}
    \label{fig:PE-all-perf}
\end{figure}

\textbf{Continuous grid:} is a continuous version of the previous task. The state space is the square  $[0,1] \times [0,1]$, with the  starting state sampled uniformly from $[0.45,0.55] \times [0.45,0.55]$. The regions within $0.1$ units (in norm-1) from the corners are goal states. The actions change the state by $0.1$ units along the corresponding direction. To make the transitions stochastic, we add noise to the new state, sampled uniformly from $[-0.01,0.01] \times [-0.01,0.01]$. Rewards are positive only for transitions entering a goal region. The goal reward for each task is given in Table \ref{tab:PE-tabular-goal-rewards}. In the linear function approximation setting we convert the agent's state into a feature vector using the radial basis function as described in \citep{sutton2018reinforcement} and Appendix \ref{app:additional-exp-details-pe-grids}. We run 2000 episodes, changing goal rewards every 200 episodes. For evaluation, we sample 225 evenly spaced points in the grid and estimate their true values by averaging Monte Carlo returns from 500 episodes across 10 seeds for each task. Here, $\gamma=0.99$.

\textbf{Minigrid:} We used the four rooms environment shown in Fig. \ref{fig:DPE-minigrid-env}. Each room contains one type of item and each item type has a different reward and color. Goal states are located in all corners. The episode terminates when the agent reaches a goal state. The agent starts in the region between cells (3,3) and (6,6) and can move forward, turn left, or turn right. The agent perceives a 5x5 (partial) view in front of it, encoded one-hot. The rewards for picking items change as described in Table \ref{tab:PE-DNN-item-rewards} and the discount factor is $0.9$. We run 750 episodes and change rewards every 75 episodes.

We provide the training and architecture details for the two deep RL prediction experiments in Appendix (Sec. \ref{app:additional-exp-details-pe-grids} and \ref{app:additional-exp-details-pe-minigrid}) due to space constraints.

\textbf{Observations:} The results are shown in Fig.~\ref{fig:PE-all-perf}. Solid lines indicate performance on the task currently experienced by the agent (i.e., the {\em online} task) and the dotted lines indicate the performance on the other tasks. The reset variant of TD-learning performs poorly because all learning progress is lost when the weights are reset, leading to the need for many samples after every task change. TD-learning also has high online errors immediately after a task switch, as its predictions are initially biased towards the past task. As the value function is updated using the online interaction data, the error on the current task reduces and the error on other tasks rises, highlighting the stability-plasticity problem when using a single value function estimate. Our algorithm works well on both performance measures. The permanent value function learns common information across all tasks, resulting in lower errors on the second performance measure throughout the experiment. Also, due to the good starting point provided by the permanent value function, the transient component requires little data to make adjustments, resulting in fast adaptation on the new task. In our method, the permanent component provides stability and the transient component provides plasticity, resulting in a good balance between the two. We also plotted the heatmap of the value estimates, shown in Fig.~\ref{fig:PE-tabular-Pred-evol}, to understand the evolution of the predictions for TD-learning and our method which confirms the observations made so far.

\subsection{Control experiments}
\label{sec:semi-crl-control-experiments}

To test our approach as a complement to Q-learning \ref{algo:PTMem-control}, we conducted a tabular gridworld experiment and a Minigrid experiment using deep neural networks.

\begin{figure}[ht]
    \centering
     \begin{subfigure}{0.15\columnwidth}
        \centering
         \includegraphics[height=2cm, width=2cm]{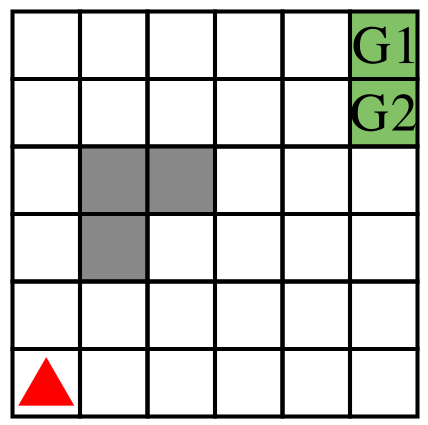}
         \caption{Toy grid.}
         \label{fig:control-tabular-env}
    \end{subfigure}
    \hfill
     \begin{subfigure}{0.3\columnwidth}
        \centering
        \includegraphics[width=0.98\columnwidth, height=2cm]{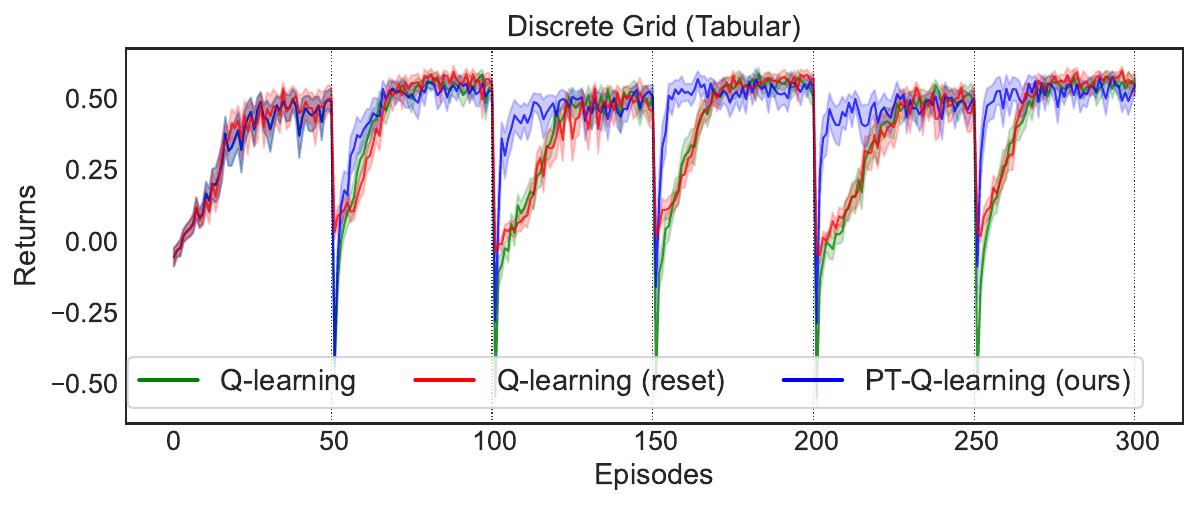}
        \caption{Tabular results.}
        \label{fig:Control-tabular-perf}
    \end{subfigure}
    \begin{subfigure}{0.15\columnwidth}
        \centering
         \includegraphics[height=2cm, width=2cm]{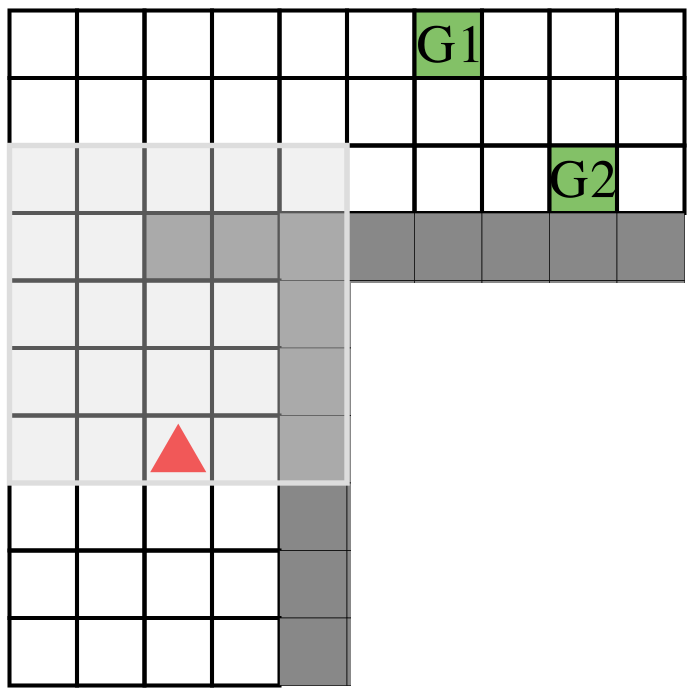}
         \caption{Minigrid.}
         \label{fig:control-gym-minigrid-env}
    \end{subfigure}
    \hfill
     \begin{subfigure}{0.38\columnwidth}
         \centering
        \includegraphics[width=0.98\columnwidth, height=2cm]{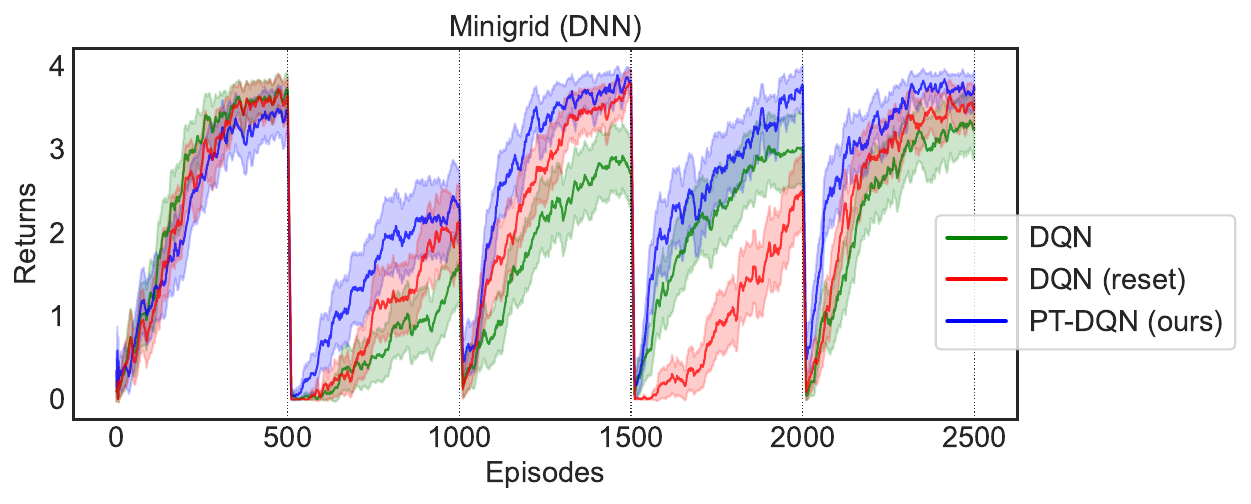}
        \caption{DRL results.}
        \label{fig:Control-DNN-minigrid-perf}
        \end{subfigure}
    \caption{The environments and the corresponding results in the control setting. (b) Tabular results: episodic returns are plotted. (d) Deep RL results: returns smoothened across 10 episodes is plotted.}
    \label{fig:semi-CRL-control-results}
\end{figure}

\textbf{Tabular experiment:} We use the 6x6 grid shown in Fig. \ref{fig:control-tabular-env}, with two goals located in the top right corner. The agent starts in the bottom left corner and can choose to go up, down, left, or right. The action transitions the agent to the next state along the corresponding direction if the space is empty and keeps it in the same state if the transition would end up into a wall or obstacle. The action has the intended effect 90\% of the time; otherwise, it is swapped with one of the perpendicular actions. The agent receives a reward of +1 and -1 for reaching the two goal states respectively, and the rewards swap periodically, creating two distinct tasks. The discount factor is $0.95$. We run 500 episodes and change the task every 50 episodes.

\textbf{Minigrid experiment:} We use the two-room environment shown in Fig. \ref{fig:control-gym-minigrid-env}. The agent starts in the bottom row of the bottom room and there are two goal states located in the top room. The goal rewards are +5 and 0 respectively, and they flip when the task changes. We run 2500 episodes, alternating tasks every 500 episodes. The other details are as described in the previous section.Further details on the training and architecture are described in Appendix (Sec. \ref{app:additional-exp-details-control-minigrid}).

\textbf{Observations:} 
The results are shown in Fig. \ref{fig:semi-CRL-control-results}. The reset variant of Q-learning has to learn a good policy from scratch whenever the task changes, therefore, it requires many samples. Since, the goal rewards are flipped from one task to another, Q-learning also requires many samples to re-adjust the value function estimates. In our method, since the permanent component mixes action-values across tasks, the resulting bias is favorable, and makes it fairly easy to learn the corrections required to obtain overall good estimates. So, the transient component requires comparatively few samples to find a good policy when the task changes. 
\section{Continual Reinforcement Learning}
\label{Sec:CRL}

\begin{wrapfigure}{R}{0.42\textwidth}
\vspace{-10pt}
\begin{minipage}{0.42\textwidth}
\begin{algorithm}[H]
\caption{PT-Q-learning (CRL)}
\begin{algorithmic}[1]
    \STATE Initialize: Buffer $\mathcal{B}$, $\PMparams$, $\TMparams$, $k$, $\lambda$
    \FOR{$t: 0 \to \infty$}
        \STATE Take action $A_t$
        \STATE Store $S_t$, $A_t$ in $\mathcal{B}$
        \STATE Observe $R_{t+1}, S_{t+1}$
        \STATE Update $\TMparams_t$ using Eq. \eqref{Eq:transient-control-update}
        \IF{$mod(t, k) == 0$}
            \STATE Update $\PMparams$ using $\mathcal{B}$ and Eq. \eqref{Eq:permanent-control-update}
            \STATE $\TMparams_{t+1} \gets \lambda \TMparams_{t+1}$
            \STATE Reset $\mathcal{B}$
        \ENDIF
    \ENDFOR
\end{algorithmic}
\label{algo:PTMem-CRL}
\end{algorithm}
\end{minipage}
\end{wrapfigure}

So far, we assumed that the agent observes task boundaries; This information is usually not available in continual RL. One approach that has been explored in the literature is to have the agent infer, or learn to infer, the task boundaries through observations (e.g. see \cite{gama2014survey, kessler2022same}). However, these methods do not work well when task boundaries are not detected correctly, or simply do not exist. We will now build on the previous algorithm we proposed to obtain a general version, shown in Alg. \ref{algo:PTMem-CRL}, which is suitable for continual RL.

Our approach maintains the permanent and transient components, but updates them continually, instead of waiting for task changes. The permanent component is updated every $k$ steps (or episodes) using the samples stored in the buffer $\mathcal{B}$. And, instead of resetting the transient component, it is now decayed by a factor $\lambda \in [0, 1]$. Here, $k$ and $\lambda$ are both hyperparameters. 
When $\lambda=0$ and $k$ the duration between task switches, we obtain the previously presented algorithm as a special case. In general, $k$ and $\lambda$ control the stability-plasticity trade off. Lower $\lambda$ means that the transient value function forgets more, thus retaining more plasticity. Lower $k$ introduces more plasticity in the permanent value function, as its updates are based on fewer and more recent samples. The algorithm still retains the idea of consolidating information from the transient into the permanent value function.

\textbf{The effect of $k$ and $\lambda$:} To illustrate the impact of the choice of hyperparameters, we use the tabular task shown in Fig. \ref{fig:control-tabular-env}. The details of the experiment are exactly as before, but the agent does not know the task boundary. We run the experiment for several combinations of $k$ and $\lambda$ for the best learning rate. The results are shown in Fig. \ref{fig:tabular-cl-analysis-k-lambda}. As expected, we observe that the hyperparameters $k$ and $\lambda$ have co-dependent effects to a certain degree. For small values of $k$, large values of $\lambda$ yield better performance, because the variance in the permanent value function is mitigated, and the transient value function in fact is more stable. For large values of $k$, the transient value function receives enough updates before the permanent value function is trained, to attain low prediction error. Therefore, the updates to the permanent value function are effective, and the transient predictions can be decayed more aggressively. More surprisingly, if the frequency of the permanent updates is significantly higher or significantly lower than the frequency of task changes, the overall performance is marginally affected, but remains better than for Q-learning (indicated using black dotted line) for most $k$ and $\lambda$ values. In fact, for each $k$, there is at least one $\lambda$ value for which the performance of the proposed algorithm exceeds the Q-learning baseline. So, in practice, we could fix one of these hyperparameters beforehand and tune only the second. We suspect that the two parameters could be unified into a single one, but this is left for future work.

\subsection{Experiments}
\label{sec:crl-experiments}

\begin{wrapfigure}{R}{0.29\textwidth}
\begin{minipage}{0.29\textwidth}
\begin{figure}[H]
    \centering
    \includegraphics[height=3cm, width=4cm]{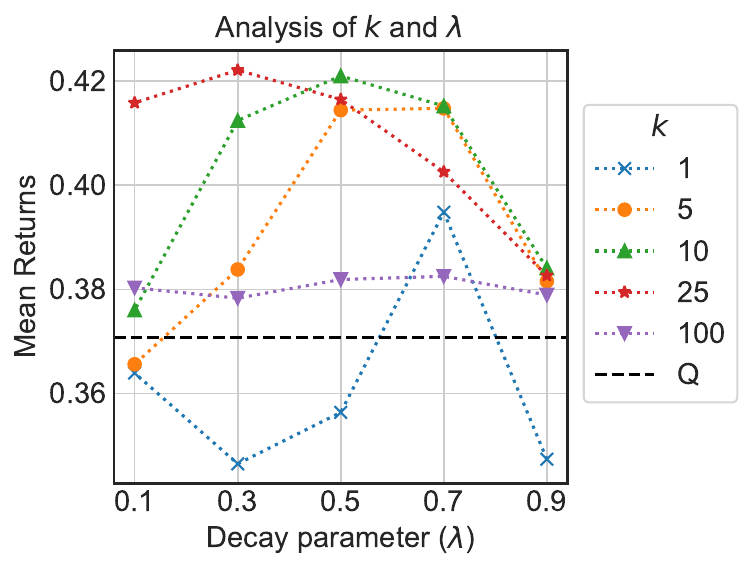}
     \caption{$\lambda$ and $k$ analysis.}
    \label{fig:tabular-cl-analysis-k-lambda}
\end{figure}
\end{minipage}
\end{wrapfigure}

We test our algorithm on two domains: JellyBeanWorld (JBW) \cite{platanios2020jelly} and MinAtar \cite{young2019minatar}. We chose these domains because of the relatively lighter computational requirements, which still allow us to sweep a range of hyperparameters and to report statistical results averaged over 30 seeds. We include architecture and further training details in Appendix (Sec. \ref{app:additional-exp-details-jbw}, \ref{app:additional-exp-details-minatar}).

\textbf{JellyBeanWorld} is a benchmark to test continual RL algorithms \cite{platanios2020jelly}. The environment is a two-dimensional infinite grid with three types of items: blue, red, and green, which carry different rewards for different tasks as detailed in Sec. \ref{app:additional-exp-details-jbw}. In addition to reward non-stationarity, the environment has spatial non-stationarity as shown in Fig. \ref{fig:JBW-env}. The agent observes an egocentric 11x11 RGB image and chooses one of the four usual navigation actions. We run 2.1M timesteps, with the task changing every 150k steps. The discount factor is $0.9$.

\textbf{MinAtar}  is a standard benchmark for testing single-task RL algorithms \cite{young2019minatar}. We use the breakout, freeway, and space invaders environments. We run for 3.5M steps and setup a continual RL problem, by randomly picking one of these tasks every 500k steps. We standardize the observations across tasks by padding with 0s when needed and we make all the actions available to the agent. This problem has non-stationarity in transitions, rewards and observations making it quite challenging. The discount factor is $0.99$. More details are included in Appendix (Sec. \ref{app:additional-exp-details-minatar}).

\textbf{Baselines:} We compare our method with DQN, two DQN variants, and a uniformly random policy. For the first baseline, we use a large experience replay buffer, an approach which was proposed recently as a viable option for continual RL \cite{caccia2022task}. The second baseline, DQN with multiple heads, uses a common trunk to learn features and separate heads to learn Q-values for the three tasks. This requires knowing the identity of the current task to select the appropriate head. We use this baseline because it is a standard approach for multi-task RL, and intuitively, it provides an upper bound on the performance. This approach is not suitable if the number of tasks is very high, or if the problem is truly continual and not multi-task. For our approach, we use half the number of parameters as that of DQN for both permanent and transient value networks to ensure the total number of parameters across all baselines are same --- hence the name PT-DQN-0.5x.\footnote{Strictly speaking, DQN-based baselines use more parameters compared with ours because their target networks are 2x larger than the target network of the transient value function.}

\begin{figure}[htb]
    \centering
    \begin{subfigure}{0.19\textwidth}
        \centering
         \includegraphics[height=2cm, width=2cm]{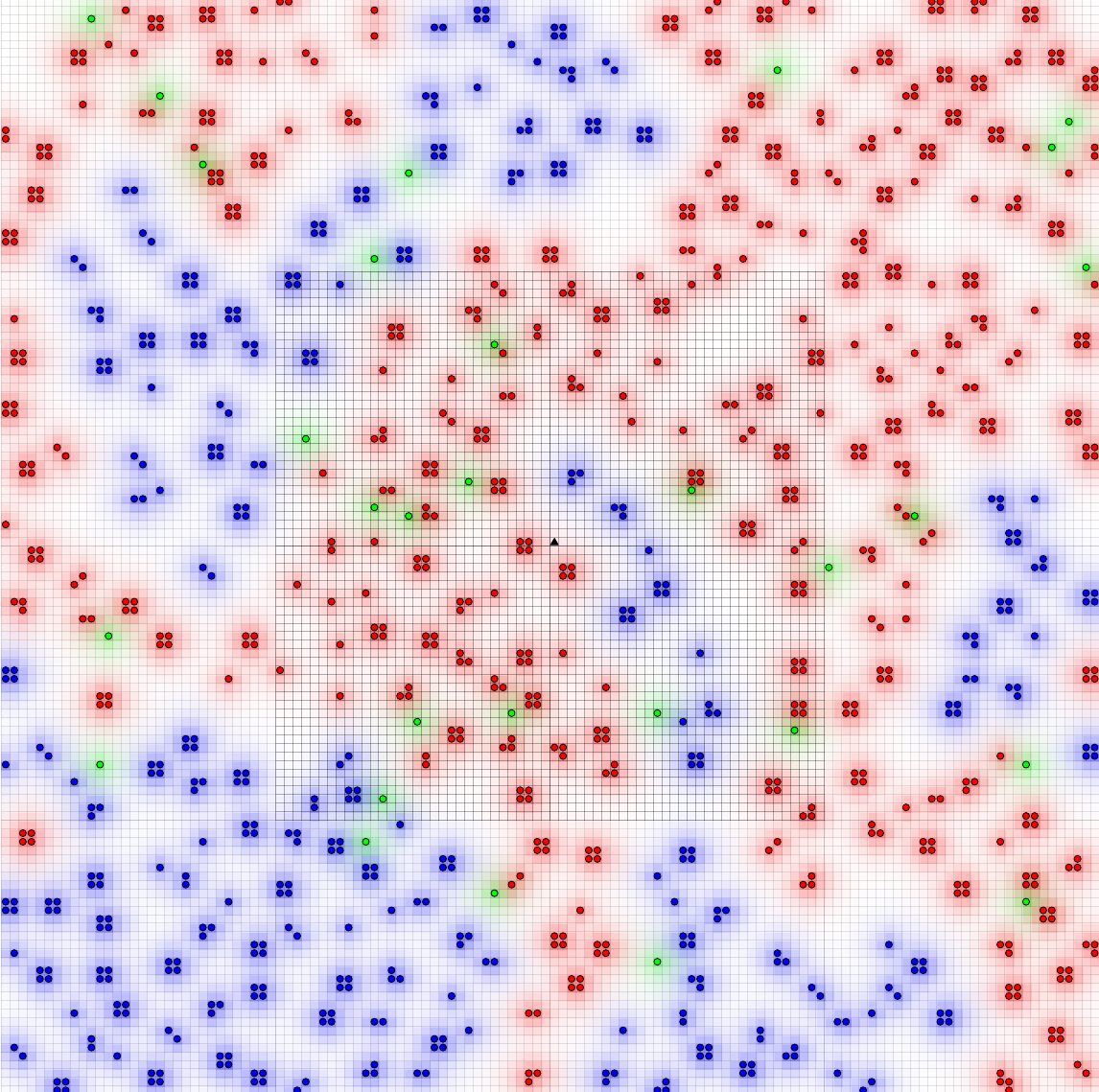}
         \caption{JellyBeanWorld.}
         \label{fig:JBW-env}
    \end{subfigure}
    \hfill
    \begin{subfigure}{0.35\textwidth}
        \centering
        \includegraphics[width=0.95\columnwidth, height=2cm]{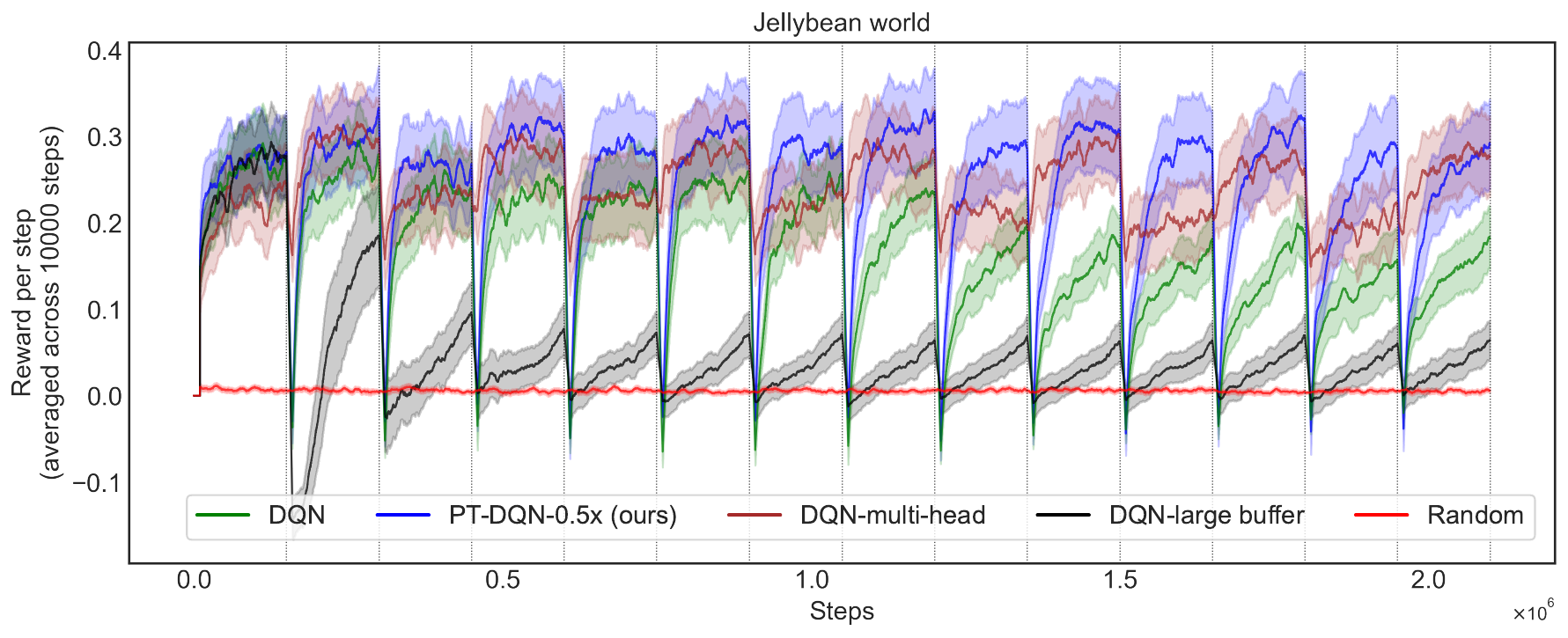}
        \caption{JellyBeanWorld Results.}
        \label{fig:CL-DNN-JBW-perf}
    \end{subfigure}
    \hfill
    \begin{subfigure}{0.19\textwidth}
        \centering
         \includegraphics[height=2cm, width=2cm]{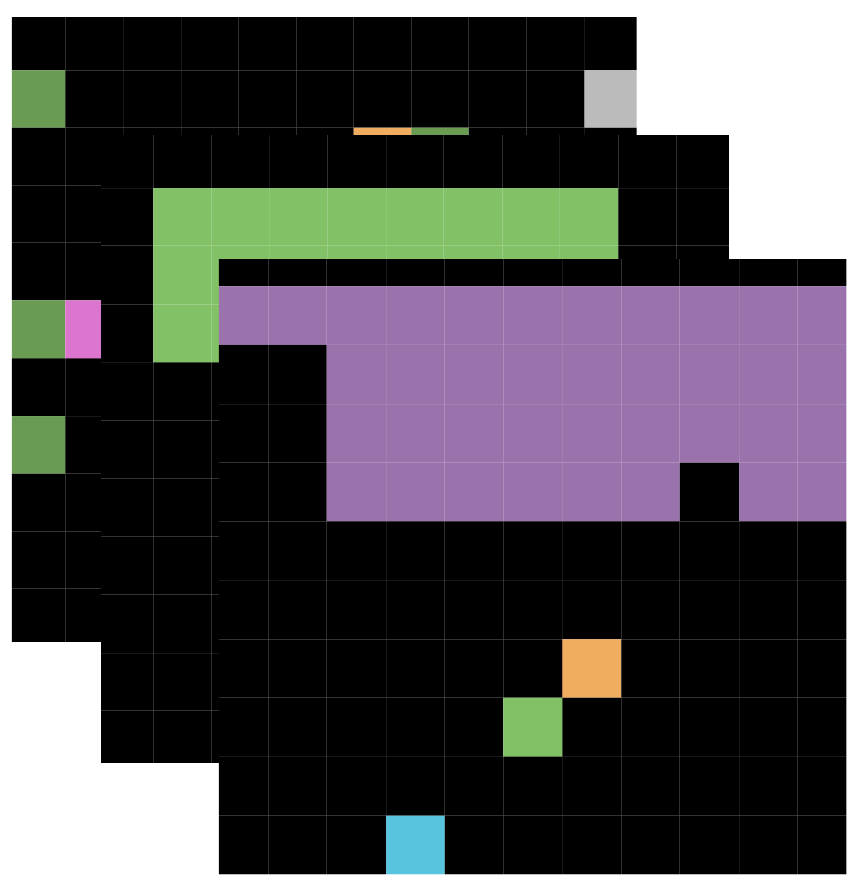}
         \caption{MinAtar.}
         \label{fig:Minatar-env}
    \end{subfigure}
    \hfill
     \begin{subfigure}{0.24\textwidth}
        \centering
        \includegraphics[width=0.95\columnwidth, height=2cm]{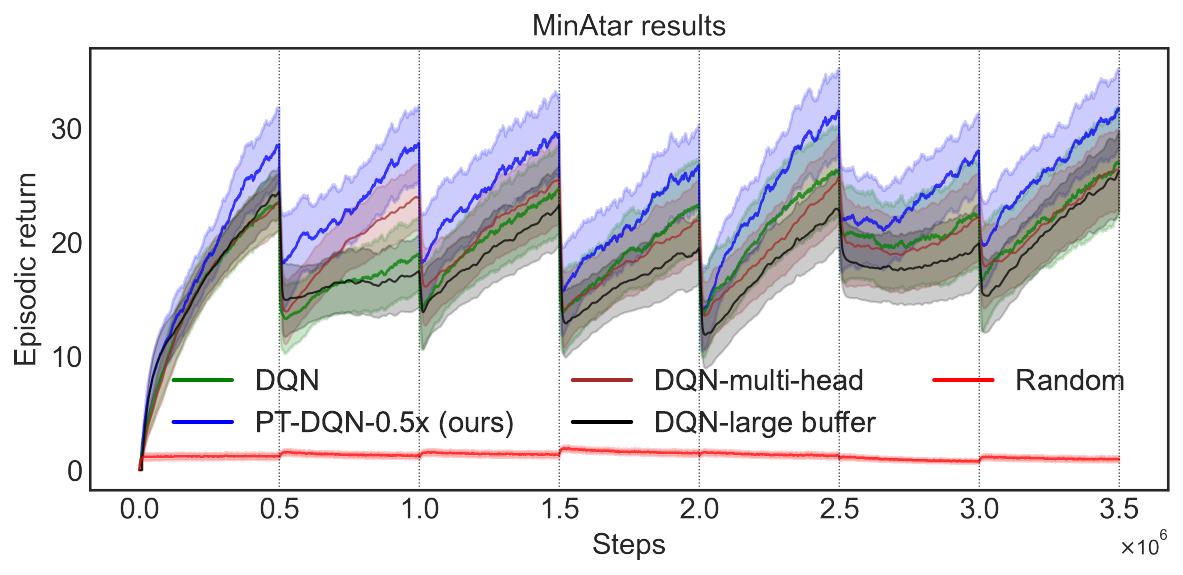}
        \caption{MinAtar results.}
        \label{fig:CL-DNN-MinAtar-results}
     \end{subfigure}
     \caption{(a) JellyBeanWorld task. (b) Results: JellyBeanWorld. (c) MinAtar tasks. (d) Results: MinAtar.}
    \label{fig:crl-results}
\end{figure}

\textbf{Results} are presented in Fig. \ref{fig:crl-results} and the pseudocode of PT-DQN is included in Appendix \ref{app:PT-DQN-Pseudocode}. For JBW experiment, we plot the reward obtained per timestep over a 10k step window (reward rate) as a function of time. For MinAtar, we plot the return averaged over the past 100 episodes. Our method, PT-DQN-0.5x, performs better than all the baselines. The performance boost is large in the JBW experiment, where the rewards are completely flipped, and still favourable in the MinAtar experiment. 
DQN's performance drops over time when it needs to learn conflicting Q-values, as seen in the JBW results. DQN also needs a large number of samples to overcome the bias in its estimates when the task changes. Surprisingly, multi-headed DQN performs poorly compared to our method, despite having the additional task boundary information. The weights of the trunk are shared, and they vary continuously over time. Therefore, the weights at the head become stale compared to the features, which degrades performance. These findings may be different if the trunk were pre-trained and then fixed, with only the heads being fine-tuned, or if the networks were much bigger (which is however not warranted for the scale of our experiment). Augmenting DQN with a large experience replay is detrimental for general non-stationary problem, based on our experiments. This is because the samples stored in the replay buffer become outdated, so using these samples hampers the agent's ability to track environment changes.

\section{Discussion}
Designing agents that can learn continually, from a single stream of experience, is a crucial challenge in RL. 
In this paper, we took a step in this direction by leveraging intuitions from complementary learning systems, and decomposing the value function into permanent and transient components. The permanent component is updated slowly to learn general estimates, while the transient component computes corrections quickly to adapt these estimates to changes. We presented versions adapted to both semi and fully continual problems, and showed empirically that this idea can be useful. 
The theoretical results provide some intuition on the convergence and speed of our approach.


The non-stationarity in our experiments is much more challenging, including arbitrary reward changes, as well as dynamics and observation changes, compared to other works, which use specific structure, or rely on a curriculum of tasks (thereby limiting non-stationarity). Our approach provided improvements in performance compared to the natural TD-learning and Q-learning baselines.

Our approach performs fast and slow interplay at various levels. The permanent value function is updated at a slower timescale (every $k$ steps or at the end of the task); The transient value function is updated at a faster timescale (every timestep). The permanent value function is updated to capture some part of the value function from the agent's entire experience (generalization), which is a slow process; The transient value function adapts the estimates to a specific situation, which is a fast process. The permanent value function is updated using a smaller learning rate (slow), but the transient value function is updated using a larger learning rate (fast) \cite{arani2022learning}.

\subsection{Limitations and Future Work}

While we focused on illustrating the idea of permanent and transient components with value-based algorithms with discrete action spaces, the approach is general and could in principle be used with policy gradient approaches. One could use the proposed value function estimates to compute the gradient of the policy (represented as a separate approximator) in actor-critic methods. Another possibility is to have a "permanent" policy, which is corrected by a transient component to adapt to the task at hand. These directions would be interesting to explore. Our idea can be extended to learning General Value Functions \cite{sutton2011horde} too.

Our theoretical analysis is limited to piece-wise non-stationarity, where the environment is stable for some period of time. An interesting direction would be to extend our approach to other types of non-stationarity, such as continuous wear and tear of robotic parts.

Another important future direction is to consider distinct feature spaces for the permanent and transient components, as done in \citet{silver2007reinforcement}. This would likely increase the ability to trade off stability and plasticity. Another direction to better control the stability and plasticity trade-off is by learning the frequency parameter $k$ and the decay parameter $\lambda$ using meta objectives \cite{flennerhag2021bootstrapped}; We suspect there's a relation between the two parameters that can unify them into a single one.

Our approach can also be used to bridge offline pre-training and online fine-tuning of RL agents. The permanent value function can be learnt using offline data and these estimates can be adjusted by computing corrections using the transient value function using online interaction data.


\begin{ack}
We are grateful to NSERC, CIFAR, and Microsoft Research grants for funding this research. We thank Jalaj Bhandari, Maxime Webartha, Matthew Reimer, Blake Richards, Emmanuel Bengio, Harsh Satija, Sumana Basu, Chen Sun, and many other colleagues at Mila for the helpful discussions; Will Dabney, Harm Van Seijen, Rich Sutton, Razvan Pascanu, David Abel, James McClelland and all of the 2023 Barbados workshop participants for their feedback on the project; the anonymous reviewers for the constructive feedback on the final draft. This research was enabled in part by support provided by Calcul Quebec and the Digital Research Alliance of Canada.
\end{ack}

\newpage

\bibliography{references}

\begin{thebibliography}{51}
\providecommand{\natexlab}[1]{#1}
\providecommand{\url}[1]{\texttt{#1}}
\expandafter\ifx\csname urlstyle\endcsname\relax
  \providecommand{\doi}[1]{doi: #1}\else
  \providecommand{\doi}{doi: \begingroup \urlstyle{rm}\Url}\fi

\bibitem[Abel et~al.(2018)Abel, Jinnai, Guo, Konidaris, and
  Littman]{abel2018policy}
David Abel, Yuu Jinnai, Sophie~Yue Guo, George Konidaris, and Michael Littman.
\newblock Policy and value transfer in lifelong reinforcement learning.
\newblock In \emph{International Conference on Machine Learning}, pages 20--29.
  PMLR, 2018.

\bibitem[Abel et~al.(2023)Abel, Barreto, Van~Roy, Precup, van Hasselt, and
  Singh]{abel2023definition}
David Abel, Andr{\'e} Barreto, Benjamin Van~Roy, Doina Precup, Hado van
  Hasselt, and Satinder Singh.
\newblock A definition of continual reinforcement learning.
\newblock \emph{arXiv preprint arXiv:2307.11046}, 2023.

\bibitem[Arani et~al.(2022)Arani, Sarfraz, and Zonooz]{arani2022learning}
Elahe Arani, Fahad Sarfraz, and Bahram Zonooz.
\newblock Learning fast, learning slow: A general continual learning method
  based on complementary learning system.
\newblock \emph{arXiv preprint arXiv:2201.12604}, 2022.

\bibitem[Ben-Iwhiwhu et~al.(2022)Ben-Iwhiwhu, Nath, Pilly, Kolouri, and
  Soltoggio]{ben2022lifelong}
Eseoghene Ben-Iwhiwhu, Saptarshi Nath, Praveen~K Pilly, Soheil Kolouri, and
  Andrea Soltoggio.
\newblock Lifelong reinforcement learning with modulating masks.
\newblock \emph{arXiv preprint arXiv:2212.11110}, 2022.

\bibitem[Bertsekas and Tsitsiklis(1996)]{bertsekas1996neuro}
Dimitri Bertsekas and John~N Tsitsiklis.
\newblock \emph{Neuro-dynamic programming}.
\newblock Athena Scientific, 1996.

\bibitem[Bhandari et~al.(2018)Bhandari, Russo, and Singal]{bhandari2018finite}
Jalaj Bhandari, Daniel Russo, and Raghav Singal.
\newblock A finite time analysis of temporal difference learning with linear
  function approximation.
\newblock In \emph{Conference on learning theory}, pages 1691--1692. PMLR,
  2018.

\bibitem[Botvinick et~al.(2019)Botvinick, Ritter, Wang, Kurth-Nelson, Blundell,
  and Hassabis]{botvinick2019reinforcement}
Matthew Botvinick, Sam Ritter, Jane~X Wang, Zeb Kurth-Nelson, Charles Blundell,
  and Demis Hassabis.
\newblock Reinforcement learning, fast and slow.
\newblock \emph{Trends in cognitive sciences}, 23\penalty0 (5):\penalty0
  408--422, 2019.

\bibitem[Caccia et~al.(2022)Caccia, Mueller, Kim, Charlin, and
  Fakoor]{caccia2022task}
Massimo Caccia, Jonas Mueller, Taesup Kim, Laurent Charlin, and Rasool Fakoor.
\newblock Task-agnostic continual reinforcement learning: In praise of a simple
  baseline.
\newblock \emph{arXiv preprint arXiv:2205.14495}, 2022.

\bibitem[Carpenter and Grossberg(1987)]{carpenter1987massively}
Gail~A Carpenter and Stephen Grossberg.
\newblock A massively parallel architecture for a self-organizing neural
  pattern recognition machine.
\newblock \emph{Computer vision, graphics, and image processing}, 37\penalty0
  (1):\penalty0 54--115, 1987.

\bibitem[Caruana(1997)]{caruana1997multitask}
Rich Caruana.
\newblock Multitask learning.
\newblock \emph{Machine learning}, 28\penalty0 (1):\penalty0 41--75, 1997.

\bibitem[Chevalier-Boisvert et~al.(2018)Chevalier-Boisvert, Willems, and
  Pal]{chevalier2018minimalistic}
Maxime Chevalier-Boisvert, Lucas Willems, and Suman Pal.
\newblock Minimalistic gridworld environment for openai gym, 2018.

\bibitem[Dohare et~al.(2021)Dohare, Sutton, and Mahmood]{dohare2021continual}
Shibhansh Dohare, Richard~S Sutton, and A~Rupam Mahmood.
\newblock Continual backprop: Stochastic gradient descent with persistent
  randomness.
\newblock \emph{arXiv preprint arXiv:2108.06325}, 2021.

\bibitem[Duan et~al.(2016)Duan, Schulman, Chen, Bartlett, Sutskever, and
  Abbeel]{duan2016rl}
Yan Duan, John Schulman, Xi~Chen, Peter~L Bartlett, Ilya Sutskever, and Pieter
  Abbeel.
\newblock Rl$^{2}$: Fast reinforcement learning via slow reinforcement
  learning.
\newblock \emph{arXiv preprint arXiv:1611.02779}, 2016.

\bibitem[Elsayed and Mahmood(2023)]{elsayed2023utility}
Mohamed Elsayed and A~Rupam Mahmood.
\newblock Utility-based perturbed gradient descent: An optimizer for continual
  learning.
\newblock \emph{arXiv preprint arXiv:2302.03281}, 2023.

\bibitem[Finn et~al.(2017)Finn, Abbeel, and Levine]{finn2017model}
Chelsea Finn, Pieter Abbeel, and Sergey Levine.
\newblock Model-agnostic meta-learning for fast adaptation of deep networks.
\newblock In \emph{International conference on machine learning}, pages
  1126--1135. PMLR, 2017.

\bibitem[Flennerhag et~al.(2021)Flennerhag, Schroecker, Zahavy, van Hasselt,
  Silver, and Singh]{flennerhag2021bootstrapped}
Sebastian Flennerhag, Yannick Schroecker, Tom Zahavy, Hado van Hasselt, David
  Silver, and Satinder Singh.
\newblock Bootstrapped meta-learning.
\newblock \emph{arXiv preprint arXiv:2109.04504}, 2021.

\bibitem[Gama et~al.(2014)Gama, {\v{Z}}liobait{\.e}, Bifet, Pechenizkiy, and
  Bouchachia]{gama2014survey}
Jo{\~a}o Gama, Indr{\.e} {\v{Z}}liobait{\.e}, Albert Bifet, Mykola Pechenizkiy,
  and Abdelhamid Bouchachia.
\newblock A survey on concept drift adaptation.
\newblock \emph{ACM computing surveys (CSUR)}, 46\penalty0 (4):\penalty0 1--37,
  2014.

\bibitem[Guillet et~al.(2022)Guillet, Wilson, and Rachelson]{guillet2022neural}
Valentin Guillet, Dennis~George Wilson, and Emmanuel Rachelson.
\newblock Neural distillation as a state representation bottleneck in
  reinforcement learning.
\newblock In \emph{Conference on Lifelong Learning Agents}, pages 798--818.
  PMLR, 2022.

\bibitem[Jacobsen et~al.(2019)Jacobsen, Schlegel, Linke, Degris, White, and
  White]{jacobsen2019meta}
Andrew Jacobsen, Matthew Schlegel, Cameron Linke, Thomas Degris, Adam White,
  and Martha White.
\newblock Meta-descent for online, continual prediction.
\newblock In \emph{Proceedings of the AAAI Conference on Artificial
  Intelligence}, volume 33/0, pages 3943--3950, 2019.

\bibitem[Javed and White(2019)]{javed2019meta}
Khurram Javed and Martha White.
\newblock Meta-learning representations for continual learning.
\newblock \emph{Advances in Neural Information Processing Systems}, 32, 2019.

\bibitem[Kaplanis et~al.(2018)Kaplanis, Shanahan, and
  Clopath]{kaplanis2018continual}
Christos Kaplanis, Murray Shanahan, and Claudia Clopath.
\newblock Continual reinforcement learning with complex synapses.
\newblock In \emph{International Conference on Machine Learning}, pages
  2497--2506. PMLR, 2018.

\bibitem[Kaplanis et~al.(2020)Kaplanis, Clopath, and
  Shanahan]{kaplanis2020continual}
Christos Kaplanis, Claudia Clopath, and Murray Shanahan.
\newblock Continual reinforcement learning with multi-timescale replay.
\newblock \emph{arXiv preprint arXiv:2004.07530}, 2020.

\bibitem[Kessler et~al.(2022)Kessler, Parker-Holder, Ball, Zohren, and
  Roberts]{kessler2022same}
Samuel Kessler, Jack Parker-Holder, Philip Ball, Stefan Zohren, and Stephen~J
  Roberts.
\newblock Same state, different task: Continual reinforcement learning without
  interference.
\newblock In \emph{Proceedings of the AAAI Conference on Artificial
  Intelligence}, volume 36/7, pages 7143--7151, 2022.

\bibitem[Khetarpal et~al.(2022)Khetarpal, Riemer, Rish, and
  Precup]{khetarpal2022towards}
Khimya Khetarpal, Matthew Riemer, Irina Rish, and Doina Precup.
\newblock Towards continual reinforcement learning: A review and perspectives.
\newblock \emph{Journal of Artificial Intelligence Research}, 75:\penalty0
  1401--1476, 2022.

\bibitem[Kirkpatrick et~al.(2017)Kirkpatrick, Pascanu, Rabinowitz, Veness,
  Desjardins, Rusu, Milan, Quan, Ramalho, Grabska-Barwinska,
  et~al.]{kirkpatrick2017overcoming}
James Kirkpatrick, Razvan Pascanu, Neil Rabinowitz, Joel Veness, Guillaume
  Desjardins, Andrei~A Rusu, Kieran Milan, John Quan, Tiago Ramalho, Agnieszka
  Grabska-Barwinska, et~al.
\newblock Overcoming catastrophic forgetting in neural networks.
\newblock \emph{Proceedings of the national academy of sciences}, 114\penalty0
  (13):\penalty0 3521--3526, 2017.

\bibitem[Kumaran et~al.(2016)Kumaran, Hassabis, and
  McClelland]{kumaran2016learning}
Dharshan Kumaran, Demis Hassabis, and James~L McClelland.
\newblock What learning systems do intelligent agents need? complementary
  learning systems theory updated.
\newblock \emph{Trends in cognitive sciences}, 20\penalty0 (7):\penalty0
  512--534, 2016.

\bibitem[Lu et~al.(2019)Lu, Mordatch, and Abbeel]{lu2019adaptive}
Kevin Lu, Igor Mordatch, and Pieter Abbeel.
\newblock Adaptive online planning for continual lifelong learning.
\newblock \emph{arXiv preprint arXiv:1912.01188}, 2019.

\bibitem[Lyle et~al.(2022)Lyle, Rowland, and Dabney]{lyle2022understanding}
Clare Lyle, Mark Rowland, and Will Dabney.
\newblock Understanding and preventing capacity loss in reinforcement learning.
\newblock \emph{arXiv preprint arXiv:2204.09560}, 2022.

\bibitem[Nikishin et~al.(2022)Nikishin, Schwarzer, D’Oro, Bacon, and
  Courville]{nikishin2022primacy}
Evgenii Nikishin, Max Schwarzer, Pierluca D’Oro, Pierre-Luc Bacon, and Aaron
  Courville.
\newblock The primacy bias in deep reinforcement learning.
\newblock In \emph{International Conference on Machine Learning}, pages
  16828--16847. PMLR, 2022.

\bibitem[Pan et~al.(2019)Pan, Banman, and White]{pan2019fuzzy}
Yangchen Pan, Kirby Banman, and Martha White.
\newblock Fuzzy tiling activations: A simple approach to learning sparse
  representations online.
\newblock \emph{arXiv preprint arXiv:1911.08068}, 2019.

\bibitem[Platanios et~al.(2020)Platanios, Saparov, and
  Mitchell]{platanios2020jelly}
Emmanouil~Antonios Platanios, Abulhair Saparov, and Tom Mitchell.
\newblock Jelly bean world: A testbed for never-ending learning.
\newblock \emph{arXiv preprint arXiv:2002.06306}, 2020.

\bibitem[Powers et~al.(2022)Powers, Xing, and Gupta]{powers2022self}
Sam Powers, Eliot Xing, and Abhinav Gupta.
\newblock Self-activating neural ensembles for continual reinforcement
  learning.
\newblock In \emph{Conference on Lifelong Learning Agents}, pages 683--704.
  PMLR, 2022.

\bibitem[Richards and Frankland(2017)]{richards2017persistence}
Blake~A Richards and Paul~W Frankland.
\newblock The persistence and transience of memory.
\newblock \emph{Neuron}, 94\penalty0 (6):\penalty0 1071--1084, 2017.

\bibitem[Riemer et~al.(2018)Riemer, Cases, Ajemian, Liu, Rish, Tu, and
  Tesauro]{riemer2018learning}
Matthew Riemer, Ignacio Cases, Robert Ajemian, Miao Liu, Irina Rish, Yuhai Tu,
  and Gerald Tesauro.
\newblock Learning to learn without forgetting by maximizing transfer and
  minimizing interference.
\newblock \emph{arXiv preprint arXiv:1810.11910}, 2018.

\bibitem[Ring(1997)]{ring1997child}
Mark~B Ring.
\newblock Child: A first step towards continual learning.
\newblock \emph{Machine Learning}, 28\penalty0 (1):\penalty0 77--104, 1997.

\bibitem[Rolnick et~al.(2019)Rolnick, Ahuja, Schwarz, Lillicrap, and
  Wayne]{rolnick2019experience}
David Rolnick, Arun Ahuja, Jonathan Schwarz, Timothy Lillicrap, and Gregory
  Wayne.
\newblock Experience replay for continual learning.
\newblock \emph{Advances in Neural Information Processing Systems}, 32, 2019.

\bibitem[Schmidhuber(1987)]{schmidhuber1987evolutionary}
J{\"u}rgen Schmidhuber.
\newblock \emph{Evolutionary principles in self-referential learning, or on
  learning how to learn: the meta-meta-... hook}.
\newblock PhD thesis, Technische Universit{\"a}t M{\"u}nchen, 1987.

\bibitem[Silver et~al.(2007)Silver, Sutton, and
  M{\"u}ller]{silver2007reinforcement}
David Silver, Richard~S Sutton, and Martin M{\"u}ller.
\newblock Reinforcement learning of local shape in the game of go.
\newblock In \emph{IJCAI}, volume~7, pages 1053--1058, 2007.

\bibitem[Silver et~al.(2008)Silver, Sutton, and M{\"u}ller]{silver2008sample}
David Silver, Richard~S Sutton, and Martin M{\"u}ller.
\newblock Sample-based learning and search with permanent and transient
  memories.
\newblock In \emph{Proceedings of the 25th international conference on Machine
  learning}, pages 968--975, 2008.

\bibitem[Silver et~al.(2016)Silver, Huang, Maddison, Guez, Sifre, Van
  Den~Driessche, Schrittwieser, Antonoglou, Panneershelvam, Lanctot,
  et~al.]{silver2016mastering}
David Silver, Aja Huang, Chris~J Maddison, Arthur Guez, Laurent Sifre, George
  Van Den~Driessche, Julian Schrittwieser, Ioannis Antonoglou, Veda
  Panneershelvam, Marc Lanctot, et~al.
\newblock Mastering the game of go with deep neural networks and tree search.
\newblock \emph{nature}, 529\penalty0 (7587):\penalty0 484--489, 2016.

\bibitem[Silver et~al.(2017)Silver, Schrittwieser, Simonyan, Antonoglou, Huang,
  Guez, Hubert, Baker, Lai, Bolton, et~al.]{silver2017mastering}
David Silver, Julian Schrittwieser, Karen Simonyan, Ioannis Antonoglou, Aja
  Huang, Arthur Guez, Thomas Hubert, Lucas Baker, Matthew Lai, Adrian Bolton,
  et~al.
\newblock Mastering the game of go without human knowledge.
\newblock \emph{nature}, 550\penalty0 (7676):\penalty0 354--359, 2017.

\bibitem[Singh and Dayan(1998)]{singh1998analytical}
Satinder Singh and Peter Dayan.
\newblock Analytical mean squared error curves for temporal difference
  learning.
\newblock \emph{Machine Learning}, 32\penalty0 (1):\penalty0 5--40, 1998.

\bibitem[Sutton(1988)]{sutton1988learning}
Richard~S Sutton.
\newblock Learning to predict by the methods of temporal differences.
\newblock \emph{Machine learning}, 3\penalty0 (1):\penalty0 9--44, 1988.

\bibitem[Sutton(1992)]{sutton1992adapting}
Richard~S Sutton.
\newblock Adapting bias by gradient descent: An incremental version of
  delta-bar-delta.
\newblock In \emph{AAAI}, pages 171--176. San Jose, CA, 1992.

\bibitem[Sutton and Barto(2018)]{sutton2018reinforcement}
Richard~S Sutton and Andrew~G Barto.
\newblock \emph{Reinforcement learning: An introduction}.
\newblock MIT press, 2018.

\bibitem[Sutton et~al.(2007)Sutton, Koop, and Silver]{sutton2007role}
Richard~S Sutton, Anna Koop, and David Silver.
\newblock On the role of tracking in stationary environments.
\newblock In \emph{Proceedings of the 24th international conference on Machine
  learning}, pages 871--878, 2007.

\bibitem[Sutton et~al.(2011)Sutton, Modayil, Delp, Degris, Pilarski, White, and
  Precup]{sutton2011horde}
Richard~S Sutton, Joseph Modayil, Michael Delp, Thomas Degris, Patrick~M
  Pilarski, Adam White, and Doina Precup.
\newblock Horde: A scalable real-time architecture for learning knowledge from
  unsupervised sensorimotor interaction.
\newblock In \emph{The 10th International Conference on Autonomous Agents and
  Multiagent Systems-Volume 2}, pages 761--768, 2011.

\bibitem[Taylor and Stone(2009)]{taylor2009transfer}
Matthew~E Taylor and Peter Stone.
\newblock Transfer learning for reinforcement learning domains: A survey.
\newblock \emph{Journal of Machine Learning Research}, 10\penalty0 (7), 2009.

\bibitem[Teh et~al.(2017)Teh, Bapst, Czarnecki, Quan, Kirkpatrick, Hadsell,
  Heess, and Pascanu]{teh2017distral}
Yee Teh, Victor Bapst, Wojciech~M Czarnecki, John Quan, James Kirkpatrick, Raia
  Hadsell, Nicolas Heess, and Razvan Pascanu.
\newblock Distral: Robust multitask reinforcement learning.
\newblock \emph{Advances in neural information processing systems}, 30, 2017.

\bibitem[Watkins(1989)]{watkins1989learning}
Christopher John Cornish~Hellaby Watkins.
\newblock Learning from delayed rewards.
\newblock 1989.

\bibitem[Young and Tian(2019)]{young2019minatar}
Kenny Young and Tian Tian.
\newblock Minatar: An atari-inspired testbed for thorough and reproducible
  reinforcement learning experiments.
\newblock \emph{arXiv preprint arXiv:1903.03176}, 2019.

\end{thebibliography}
\bibliographystyle{plainnat}

\newpage
\appendix

\theoremstyle{plain}
\newtheorem{theorem*}{Theorem}[subsection]
\newtheorem{assumption*}{Assumption}[subsection]
\newtheorem{definition*}{Definition}[subsection]
\newtheorem{lemma*}{Lemma}[subsection]
\newtheorem{remark*}{Remark}[subsection]
\newtheorem{corollary*}{Corollary}[subsection]

\counterwithin{figure}{section}
\counterwithin{table}{section}

\begin{center}
\textbf{\Large Appendix}
\end{center}

\section{Control algorithm}
\label{app-sec:control-algo}

The action-value function can be decomposed into two components as:
\begin{align}
    \QPT(s, a) = \PMcontrol_\PMparams(s, a) + \TMcontrol_\TMparams(s, a),
\end{align}

The updates for permanent value function are:
\begin{align}
\label{Eq:permanent-control-update}
    \PMparams_{k+1} &\gets \PMparams_k + \PMlr_k (\estQ(S_k, A_k) - \PMcontrol_\PMparams(S_k, A_k)) \nabla_{\PMparams}\PMcontrol_{\PMparams}(S_k, A_k).
\end{align}
where $\PMlr$ is the learning rate. An update rule similar to Q-learning can be used for the transient value function:
\begin{align}
\label{Eq:transient-control-update}
    \TMparams_{t+1} &\gets \TMparams_t + \TMlr_t \delta_t \nabla_{\TMparams} \TMcontrol_{\TMparams}(S_t, A_t),
\end{align}
where $\delta_t = R_{t+1} + \gamma \max_a \QPT(S_{t+1}, a) - \QPT(S_t, A_t)$ is the TD error.

\begin{algorithm}[H]
\caption{PT-Q-learning (Control)}
\begin{algorithmic}[1]
    \STATE Initialize: Buffer $\mathcal{B}$, $\PMparams$, $\TMparams$
    \FOR{$t: 0 \to \infty$}
        \STATE Take action $A_t$
        \STATE Store $S_t$, $A_t$ in $\mathcal{B}$
        \STATE Observe $R_{t+1}, S_{t+1}$
        \STATE Update $\TMparams$ using Eq. \eqref{Eq:transient-control-update}
        \IF{Task Ends}
            \STATE Update $\PMparams$ using $\mathcal{B}$ and Eq. \eqref{Eq:permanent-control-update}
            \STATE Reset transient value function, $\TMparams$
            \STATE Reset $\mathcal{B}$
        \ENDIF
    \ENDFOR
\end{algorithmic}
\label{algo:PTMem-control}
\end{algorithm}

\section{Proofs}
\label{App:proofs}

\subsection{Transient Value Function Results}

We carry out the analysis by fixing $\PMpred$ and only updating $\TMpred$ using samples from one particular task. We use $\vtd_t$ and $\vpt_t$ to denote the value function estimates at time $t$ learned using TD learning and Alg. \ref{algo:PTMem-prediction} respectively. 
\begin{theorem*}
\label{App:TD-PT-relation-thm}
If $\vtd_0=\PMpred$, $\TMpred_0=0$ and the two algorithms train on the same samples using the same learning rates, then $\forall t, \vpt_t=\vtd_t$.
\end{theorem*}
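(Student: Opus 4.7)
The plan is to proceed by induction on $t$, leveraging the linearity of the additive decomposition $\vpt_t = \PMpred + \TMpred_t$ together with the fact that during the within-task phase $\PMpred$ is held fixed (by the hypothesis of fixing $\PMpred$ in the transient analysis, and more generally by Algorithm~\ref{algo:PTMem-prediction} which only updates $\PMparams$ at task boundaries).

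\textbf{Base case.} At $t=0$, by assumption $\vpt_0 = \PMpred + \TMpred_0 = \PMpred + 0 = \PMpred = \vtd_0$.

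\textbf{Inductive step.} Assume $\vpt_t = \vtd_t$. The TD update at sample $(S_t,R_{t+1},S_{t+1})$ with learning rate $\alpha_t$ gives, in the tabular setting, $\vtd_{t+1}(S_t) = \vtd_t(S_t) + \alpha_t\bigl[R_{t+1} + \gamma\vtd_t(S_{t+1}) - \vtd_t(S_t)\bigr]$, with $\vtd_{t+1}(s) = \vtd_t(s)$ for $s \neq S_t$. On the other side, Eq.~\eqref{Eq:transient-prediction-update} uses $\delta_t = R_{t+1} + \gamma\vpt_t(S_{t+1}) - \vpt_t(S_t)$ to update only $\TMpred$, so $\TMpred_{t+1}(S_t) = \TMpred_t(S_t) + \alpha_t\delta_t$ while $\PMpred$ is unchanged. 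Adding the two components yields $\vpt_{t+1}(S_t) = \PMpred(S_t) + \TMpred_t(S_t) + \alpha_t\delta_t = \vpt_t(S_t) + \alpha_t\bigl[R_{t+1} + \gamma\vpt_t(S_{t+1}) - \vpt_t(S_t)\bigr]$. Applying the inductive hypothesis $\vpt_t = \vtd_t$ inside both the current estimate and the bootstrap term makes this expression identical to the TD update for $\vtd_{t+1}(S_t)$. For $s \neq S_t$ neither component changes, so $\vpt_{t+1}(s) = \vtd_{t+1}(s)$ everywhere.

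There is essentially no hard part: the proof is a one-line identity per induction step. The only subtlety worth flagging explicitly is why the bootstrap target in $\delta_t$ is allowed to be replaced by $\vtd_t$, which is exactly the role of the inductive hypothesis, and the fact that the algorithm updates the full sum $\vpt$ by $\alpha_t\delta_t$ because all of that mass is absorbed into $\TMpred$ (since $\PMparams$ is frozen between task boundaries and the two components are independently parameterized, so the semi-gradient with respect to $\TMparams$ coincides with a full tabular update of $\vpt$). I would state this observation once before the induction so that the inductive step reduces to a direct substitution, and close by noting that the argument is tabular; under linear function approximation an analogous result would require the additional assumption that $\PMpred$ and $\TMpred$ share the same feature representation so that the combined semi-gradient update matches the TD gradient update.
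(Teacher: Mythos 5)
Your proposal is correct and follows essentially the same route as the paper's proof: induction on $t$, with the base case $\vpt_0=\PMpred=\vtd_0$ and an inductive step that rewrites the transient update of $\TMpred$ as a TD-style update on the sum $\vpt=\PMpred+\TMpred$ (since $\PMpred$ is frozen) and then substitutes the inductive hypothesis into both the current estimate and the bootstrap target. Your explicit handling of the untouched states $s\neq S_t$ and the remark on the tabular setting are minor clarifications beyond the paper's version, not a different argument.
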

\begin{proof}
We use induction to prove this statement.

\textbf{Base case:} When $t=0$, the PT-estimate of state $s$ is $\vpt_0(s) = \PMpred(s) + \TMpred_0(s) = \PMpred(s)$. And, the TD estimate of state $s$ is  $\vtd_0(s) = \PMpred(s)$. Therefore, both PT and TD estimates are equal.

\textbf{Induction hypothesis:} PT and TD estimates are equal for some timestep $k$ for all states. That is, $\vpt_k(s) = \PMpred(s) + \TMpred_k(s) = \vtd_k(s)$.

\textbf{Induction step:} PT estimates at timestep $k+1$ is
\begin{adjustwidth}{-2in}{-2in}
\begin{align*}
    \vpt_{k+1}(s) &= \PMpred(s) + \TMpred_{k+1}(s), \\
    &= \PMpred(s) + \TMpred_k(s) + \alpha_{k+1} (R_{k+1} + \gamma (\PMpred(S_{k+1}) + \TMpred_t(S_{k+1})) - \PMpred(s) - \TMpred_k(s)), \\
    &= \vpt_k(s) + \alpha_{k+1} (R_{k+1} + \gamma \vpt_k(S_{k+1}) - \vpt_k(s)), \\
    &= \vtd_k(s) + \alpha_{k+1} (R_{t+1} + \gamma \vtd_k(S_{t+1}) - \vtd_k(s)), \\
    \Aboxed{&= \vtd_{k+1}(s).}
\end{align*}
\end{adjustwidth}
The penultimate step follows from the induction hypothesis completing the proof.
\end{proof}

\begin{theorem*}
\label{App:TM-contraction-thm}
The expected target in the transient value function update rule is a contraction-mapping Bellman operator: $\Tt \TMpred = \rp + \gamma \pp \PMpred - \PMpred + \gamma \pp \TMpred$.
\end{theorem*}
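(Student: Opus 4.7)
The plan is to first identify the one-step target that the transient update is tracking, then take the expectation to obtain the operator $\Tt$ in closed form, and finally observe that the dependence on $\TMpred$ is affine with linear part $\gamma \pp$, which immediately gives a $\gamma$-contraction in the sup-norm.

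First I would rewrite the transient update rule in Eq.~\eqref{Eq:transient-prediction-update} in the tabular case. For a visited state $s=S_t$, the update
$\TMpred(s) \leftarrow \TMpred(s) + \alpha_t \delta_t$
with $\delta_t = R_{t+1} + \gamma\,\vpt(S_{t+1}) - \vpt(S_t)$ can be algebraically rearranged into the stochastic-approximation form $\TMpred(s) \leftarrow (1-\alpha_t)\TMpred(s) + \alpha_t Y_t$, where the target is
\begin{equation*}
Y_t \;=\; R_{t+1} + \gamma\bigl(\PMpred(S_{t+1}) + \TMpred(S_{t+1})\bigr) - \PMpred(S_t).
\end{equation*}
Note that the $\TMpred(S_t)$ term cancels because $\vpt = \PMpred + \TMpred$ on both sides of $\delta_t$; only the bootstrap on $\TMpred(S_{t+1})$ survives. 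This identifies the ``target'' whose conditional expectation is exactly what $\Tt$ must compute.

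Next I would take the conditional expectation given $S_t = s$ under the policy $\pol$. Componentwise,
$\E[Y_t \mid S_t = s] = \rp(s) + \gamma \sum_{s'} \pp(s'\mid s)\,\PMpred(s') - \PMpred(s) + \gamma \sum_{s'} \pp(s'\mid s)\,\TMpred(s')$,
and stacking into a vector over $\s$ gives $\Tt\TMpred = \rp + \gamma \pp \PMpred - \PMpred + \gamma \pp \TMpred$, which is exactly the claimed formula.

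For the contraction property, I would treat $\PMpred$ as fixed and observe that $\Tt$ is affine in its argument with linear part $\gamma \pp$. Thus for any two vectors $u,v \in \R^{|\s|}$,
\begin{equation*}
\Tt u - \Tt v \;=\; \gamma \pp (u - v),
\end{equation*}
and since $\pp$ is a stochastic matrix its sup-norm operator norm is $1$, giving $\norminf{\Tt u - \Tt v} \le \gamma \norminf{u - v}$ with $\gamma \in (0,1)$. This is the standard Bellman-contraction argument, so no real obstacle arises. The only subtle step worth flagging is the cancellation of $\TMpred(S_t)$ in forming $Y_t$: if one instead tried to read off the operator from the raw TD error $\delta_t$ the $-\TMpred$ term would spuriously appear and obscure that the linear part is $\gamma \pp$ rather than $\gamma\pp - I$, so the rewriting into the $(1-\alpha)\TMpred + \alpha Y_t$ form is the key move.
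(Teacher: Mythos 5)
Your proposal is correct and follows essentially the same route as the paper's proof: identify the target $R_{t+1} + \gamma \PMpred(S_{t+1}) - \PMpred(S_t) + \gamma \TMpred(S_{t+1})$, take its conditional expectation to obtain $\Tt \TMpred = \rp + \gamma \pp \PMpred - \PMpred + \gamma \pp \TMpred$ in vector form, and conclude the $\gamma$-contraction from $\Tt u - \Tt v = \gamma \pp (u-v)$ and the stochasticity of $\pp$. Your explicit remark about the cancellation of $\TMpred(S_t)$ when rewriting the update in the $(1-\alpha)\TMpred + \alpha Y_t$ form is a useful clarification that the paper leaves implicit, but it does not change the argument.
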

\begin{proof}
The target for state $s$ at timestep $t$ in the transient value function update rule is $R_{t+1} + \gamma \PMpred(S_{t+1}) - \PMpred(s) + \gamma \TMpred(S_{t+1})$. The expected target is:
\begin{align*}
\Tt \TMpred(s) &= \Ep[R_{t+1} + \gamma \PMpred(S_{t+1}) - \PMpred(s) + \gamma \TMpred(S_{t+1}) | S_t = s], \\
&= \rp(s) + \gamma (\pp \PMpred)(s) - \PMpred(s) + \gamma (\pp \TMpred)(s).
\end{align*}
We get the operator by expressing the above equation in the vector form:
\begin{align*}
    \Tt \TMpred &= \rp + \gamma \pp \PMpred - \PMpred + \gamma \pp \TMpred.
\end{align*}
For contraction proof, consider two vectors $u, v \in \R^{|\s|}$.
\begin{align*}
    \norminf{\Tt u - \Tt v} &= \norminf{\rp + \gamma \pp \PMpred - \PMpred + \gamma \pp u - (\rp + \gamma \pp \PMpred - \PMpred + \gamma \pp v)}, \\
    &= \norminf{\gamma \pp (u - v)}, \\
    \Aboxed{&\leq \gamma \norminf{u - v}.}
\end{align*}
Therefore, the transient value function Bellman operator is a contraction mapping with contraction factor $\gamma$.
\end{proof}

\begin{theorem*}
\label{App:TM-fixed-point-thm}
The unique fixed point of the transient operator is $(I - \gamma \pp)^{-1} \rp - \PMpred$.
\end{theorem*}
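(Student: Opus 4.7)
The plan is to exploit the previous theorem (contraction of $\Tt$) to conclude that a unique fixed point exists via the Banach fixed point theorem, and then to solve the fixed point equation explicitly by linear algebra.

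First I would set the defining equation $\Tt \TMpred^{*} = \TMpred^{*}$, substituting in the operator form from Theorem \ref{App:TM-contraction-thm}:
\begin{equation*}
\TMpred^{*} = \rp + \gamma \pp \PMpred - \PMpred + \gamma \pp \TMpred^{*}.
\end{equation*}
Next I would isolate $\TMpred^{*}$ by collecting the $\TMpred^{*}$ terms on the left, giving $(I - \gamma \pp)\TMpred^{*} = \rp - (I - \gamma \pp)\PMpred$. Since $0 < \gamma < 1$ and $\pp$ is a stochastic matrix, its spectral radius is at most $1$, so $\gamma \pp$ has spectral radius strictly less than $1$ and $(I - \gamma \pp)$ is invertible. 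Left-multiplying by $(I - \gamma \pp)^{-1}$ yields
\begin{equation*}
\TMpred^{*} = (I - \gamma \pp)^{-1}\rp - \PMpred,
\end{equation*}
which is the claimed fixed point.

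For uniqueness, I would appeal to Theorem \ref{App:TM-contraction-thm}: since $\Tt$ is a contraction in the sup-norm on the complete metric space $(\R^{|\s|}, \norminf{\cdot})$, the Banach fixed point theorem guarantees that its fixed point is unique. The main obstacle is essentially none — the calculation is routine; the only subtlety worth mentioning is verifying invertibility of $I - \gamma \pp$, which follows from standard Neumann series arguments for stochastic matrices with $\gamma < 1$.
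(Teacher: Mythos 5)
Your proof is correct, but it takes a different (and arguably cleaner) route than the paper. The paper derives the fixed point constructively: it applies the operator $\Tt$ repeatedly to an arbitrary vector $v$, observes that the $\PMpred$ terms telescope, and passes to the limit using $(\gamma \pp)^n \to 0$, so that the Neumann series $\sum_{k\ge 0}(\gamma\pp)^k \rp$ produces $(I-\gamma\pp)^{-1}\rp - \PMpred$; uniqueness is then implicit in the contraction property. You instead solve the fixed-point equation $\Tt \TMpred^* = \TMpred^*$ directly by linear algebra, rearranging to $(I-\gamma\pp)\TMpred^* = \rp - (I-\gamma\pp)\PMpred$ and inverting, with uniqueness supplied explicitly by the Banach fixed point theorem applied to the sup-norm contraction established in the preceding theorem. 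Both arguments rest on the same underlying fact (invertibility of $I-\gamma\pp$ via the spectral radius bound $\rho(\gamma\pp)\le\gamma<1$, equivalently the convergent Neumann series), so neither is more general; your version is shorter and makes the existence/uniqueness logic explicit, while the paper's iteration makes visible \emph{why} the $\PMpred$ contribution survives only as the single term $-\PMpred$ (the telescoping), which is the intuition the authors want to highlight. One small point of rigor in your favor: you correctly note that $(\R^{|\s|}, \norminf{\cdot})$ is complete so Banach applies, a step the paper leaves unstated.
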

\begin{proof}
We get the fixed point, $\TMpred_\pi$, by repeatedly applying the operator to a vector $v$:
\begin{align*}
        \TMpred_\pi &= \rp + \gamma \pp \PMpred - \PMpred + \gamma \pp v, \\
        &= \rp + \gamma \pp \PMpred - \PMpred + \gamma \pp (\rp + \gamma \pp \PMpred - \PMpred + \gamma \pp v), \\
        &= \rp + \gamma \pp \rp + (\gamma \pp)^2 \PMpred - \PMpred + (\gamma \pp)^2 v, \\
        &= \rp + \gamma \pp \rp + (\gamma \pp)^2 \PMpred + \dots +  (\gamma \pp)^n \PMpred - \PMpred + (\gamma \pp)^n v, \\
        \Aboxed{&= (I - \gamma \pp)^{-1} \rp - \PMpred.}
\end{align*}
We applied the operator to $v$ in steps 1 to 4 and used the property $\lim_{n \to \infty} {(\gamma \pp)^n} \to 0$ in the penultimate step as $\gamma \pp$ is a sub-stochastic matrix. Many terms cancel out as it is a telescoping sum.
\end{proof}

\begin{theorem*}
\label{App:modified-MRP-thm}
Let $\m$ be an MDP in which the agent is performing transient value function updates for $\pi$. Define an MRP, $\widetilde{\m}$, with state space $\widetilde{\s}: \{(S_t, A_t, S_{t+1}), \forall t\}$, transition dynamics $\widetilde{\p}(x'|x, a') = \p(s''|s', a'),\ \forall x, x' \in \widetilde{\s}, \forall s', s'' \in \s, \forall a' \in \act$ and reward function $\widetilde{\rew}(x, a') = \rew(s, a) + \gamma \PMpred(s') - \PMpred(s)$. Then, the fixed point of Eq.(\ref{Eq:transient-prediction-update}) is the value function of $\pi$ in $\widetilde{\m}$.
\end{theorem*}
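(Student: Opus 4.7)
The plan is to compute the value function $\widetilde{v}_\pi$ of the MRP $\widetilde{\m}$ directly from its definition and show it coincides with the fixed point $(I - \gamma \pp)^{-1} \rp - \PMpred$ established in the previous theorem. Since the previous theorem already characterizes the fixed point of the transient operator, the entire burden reduces to evaluating the expected discounted return under the augmented reward $\widetilde{\rew}(x) = \rew(s,a) + \gamma \PMpred(s') - \PMpred(s)$ and checking it matches.

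First I would write $\widetilde{v}_\pi(x)$ for an augmented state $x = (s, a, s')$ as
\begin{equation*}
\widetilde{v}_\pi(x) \;=\; \E_\pi\!\left[\sum_{t=0}^{\infty} \gamma^t \widetilde{\rew}(X_t) \,\Big|\, X_0 = x\right].
\end{equation*}
Because $\widetilde{\p}$ is defined to mirror $\p$ on the last coordinate (and $\pi$ acts in the original MDP), the induced sequence of original states $S_0, S_1, \ldots$ under $\widetilde{\p}$ has the same law as under $\p$ conditional on the starting original state. I would therefore substitute the definition of $\widetilde{\rew}$ and split the series into the true-reward part and the potential-difference part:
\begin{equation*}
\widetilde{v}_\pi(x) \;=\; \E_\pi\!\left[\sum_{t=0}^{\infty} \gamma^t R_{t+1}\right] \;+\; \E_\pi\!\left[\sum_{t=0}^{\infty} \gamma^t \bigl(\gamma \PMpred(S_{t+1}) - \PMpred(S_t)\bigr)\right].
\end{equation*}
The first sum is $\vpol(s)$. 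For the second, shifting the index on the $\gamma \PMpred(S_{t+1})$ term collapses the telescope to $-\PMpred(s)$, giving $\widetilde{v}_\pi(x) = \vpol(s) - \PMpred(s) = (I - \gamma \pp)^{-1} \rp(s) - \PMpred(s)$, which is exactly the fixed point identified in the previous theorem. To close the proof, I would note that $\widetilde{v}_\pi$ depends on $x = (s, a, s')$ only through $s$, so it is well-defined as a function on $\s$ and equals the fixed point of the transient operator at every state.

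The main obstacle is bookkeeping rather than a deep difficulty: one must justify that the expectation under $\pi$ in $\widetilde{\m}$ produces the same trajectory distribution of original states as $\pi$ in $\m$ (so that the telescoping sum is legitimate and convergent), and one must verify that $\pi$ is well-defined on the augmented state space (which is immediate since $\pi$ depends only on the current original state $s$, the first component of $x$). A minor subtlety is ensuring absolute convergence of the telescoped series so that the index shift is valid; this follows from $\gamma < 1$ together with boundedness of $\PMpred$ and $\rp$, which can be assumed as standard.
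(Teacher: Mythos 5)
Your proposal is correct and follows essentially the same route as the paper's proof: both arguments reduce the reward $\widetilde{\rew}$ to a potential-shaping term and telescope the $\gamma\PMpred(S_{t+1}) - \PMpred(S_t)$ contributions to obtain $\widetilde{v}_\pi(x) = \vpol(s) - \PMpred(s)$, then observe that this depends on $x=(s,a,s')$ only through $s$ and matches the fixed point of the transient operator. The only cosmetic difference is that the paper performs the telescoping on a per-trajectory return before taking expectations, whereas you telescope inside the expectation; your added remarks on convergence and on $\pi$ being well-defined on the augmented states are sound but not load-bearing.
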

\begin{proof}
The state space of the new MDP $\widetilde{\m}$ is made up of all possible consecutive tuples of $(S_t, A_t, S_{t+1})$ pairs. The rewards in this MDP are defined as $\widetilde{\rew}(x, a') = \rew(s, a) + \gamma \PMpred(s') - \PMpred(s), \forall x \in \widetilde{\s}, \forall a' \in \act$. The transition matrix $\widetilde{\p}(x'|x, a') = prob(x'|x, a') = prob(s', a', s'' | s, a, s', a') = prob(s'' | s', a') = \p(s'' | s', a')$ using Markov property of $\m$. Then, the value of state $x=(s, a, s')$ is:
\begin{align*}
    v_\pol(x) = v_\pol(s, a, s') &= \E_{\tau}[Z_t | X_t=(s, a, s')],
\end{align*}
where $Z_t = \sum_{k=t}^{\infty}\gamma^{k-t} \tilde{R}_{k+1}$ is the returns starting from state $X_t=(s, a, s')$. Without loss of generality, consider a sample trajectory generated according to $\pol$, $\tau_k = (S_0=s, A_0=a, R_1, S_1=s', A_1, R_2, S_2, A_2, R_3, \dots, R_T, S_T)$ where $S_T$ is the terminal state. The returns $Z^{\tau_k}$ of this trajectory is:
\begin{align*}
    Z^{\tau_k} &= \sum_{k=0}^{\infty}\gamma^{k}\tilde{R}_{k+1} \\
    &= \sum_{k=0}^{\infty} \gamma^k (R_{k+1} + \gamma \PMpred(S_{k+1}) - \PMpred(S_k)), \\
    &= \sum_{k=0}^{\infty} \gamma^k R_{k+1} - \PMpred(S_0=s), \\
    \Aboxed{&= G^{\tau_k} - \PMpred(s).}
\end{align*}
Returns of every trajectory end up being the difference between the returns in the original MDP, $G^{\tau_k}$ and the permanent value function. Most importantly, these returns only depend on the state $s$ and it is the same for all $a$ and $s'$ choices following $s$. The value of state $x=(s, a, s')$ can then be simplified using this observation:
\begin{align*}
    v_\pol(s, a, s') &= \E_{\tau}[Z_t | X_t=(s, a, s')], \\
    &= \E_{\tau}[G_t - \PMpred(s)| S_t=s, A_t=\cdot, S_{t+1}=\cdot], \\
    \Aboxed{&= \TMpred_\pol(s).}
\end{align*}
\end{proof}

\subsection{Permanent Value Function Results}

We focus on permanent value function in the next two theorems. First, we characterize the fixed point of the permanent value function. We use assumption \ref{thm:task-dist-assumption} on the task distribution for the following proofs:

\begin{theorem*}
\label{App:PM-fixed-point-thm}
Following Theorem \ref{thm:TM-contraction}, under assumption \ref{thm:task-dist-assumption} and Robbins-Monro step-size conditions, the sequence of updates computed by Eq.(\ref{Eq:permanent-prediction-update}) contracts to a unique fixed point $\E_\tau[\vtask]$.
\end{theorem*}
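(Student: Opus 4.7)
The plan is to reduce the permanent update rule (\ref{Eq:permanent-prediction-update}) to a classical Robbins--Monro stochastic approximation whose mean iteration is a contraction toward $\E_\tau[\vtask]$, and then invoke Example 4.3 of \citet{bertsekas1996neuro} as the authors indicate.

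First, I would specialize (\ref{Eq:permanent-prediction-update}) to the tabular setting promised in the theorem: $\PMpred_\PMparams(s)$ is just the coordinate $\PMparams(s)$, so $\nabla_\PMparams \PMpred_\PMparams(s) = e_s$ and the update collapses to the per-state rule
$$\PMpred_{k+1}(s) \gets \PMpred_k(s) + \PMlr_k \bigl( \vpt(S_k) - \PMpred_k(s) \bigr) \ind\{S_k = s\}.$$
At this point I would invoke Theorem 3 applied to the current task $\tau$: the transient Bellman operator $\Tt$ has the unique fixed point $\vtask - \PMpred$, so by the time the permanent update is triggered at a task boundary (Alg.\ \ref{algo:PTMem-prediction}, line 7) the bracketed target equals $\PMpred(S_k) + (\vtask(S_k) - \PMpred(S_k)) = \vtask(S_k)$, that is, the true value of the visited state under the most recent task.

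Second, I would re-index the outer iteration so that each step corresponds to a task boundary. Under Assumption \ref{thm:task-dist-assumption} the sequence of tasks $\{\tau_k\}$ is i.i.d.\ from $p_\tau$, and for every state $s$ the update has the canonical form
$$\PMpred_{k+1}(s) = \PMpred_k(s) + \PMlr_k \bigl( v_{\tau_k}(s) - \PMpred_k(s) + w_k(s) \bigr),$$
where $w_k(s)$ lumps together (i) the sampling noise due to which buffered states are actually visited during task $\tau_k$, and (ii) any residual bias from the transient not being exactly at its fixed point when the permanent update fires. Conditioned on $\PMpred_k$, the expected increment is the affine contraction $\E_\tau[\vtask(s)] - \PMpred_k(s)$, whose unique zero is $\E_\tau[\vtask(s)]$.

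Third, I would verify the three standard hypotheses of Example 4.3 of \citet{bertsekas1996neuro}: (a) the mean field acts coordinate-wise as the convex combination $(1-\PMlr_k)\PMpred_k(s) + \PMlr_k \E_\tau[\vtask(s)]$, which contracts toward $\E_\tau[\vtask]$, provided every state is updated infinitely often along the policy-induced state distribution; (b) the driving noise $w_k(s)$ has conditional mean zero and conditionally bounded second moment, which follows from the finiteness of $\s$ and of the task family, together with the uniform bound $\norminf{\vtask} \leq R_{\max}/(1-\gamma)$; and (c) the Robbins--Monro conditions $\sum_k \PMlr_k = \infty$ and $\sum_k \PMlr_k^2 < \infty$ are part of the hypothesis. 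Together these yield almost-sure convergence to the unique fixed point $\E_\tau[\vtask]$.

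The main obstacle I anticipate is cleanly controlling the bias inside $w_k(s)$ coming from the transient not having \emph{exactly} reached its fixed point by the task boundary. The cleanest way to handle this is a two-timescale argument: let the transient step size $\TMlr_t$ dominate the permanent step size $\PMlr_k$ so that, from the viewpoint of the slow permanent iterate, the transient has already equilibrated to its $\PMpred$-dependent fixed point from Theorem 3, reducing the analysis to the clean case above. Alternatively, under the idealization (implicit in invoking Theorem 3) that the transient converges within each task, this bias is identically zero and the reduction to Example 4.3 is immediate.
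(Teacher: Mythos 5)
Your proposal is correct and follows essentially the same route as the paper: both reduce Eq.~(\ref{Eq:permanent-prediction-update}) in the tabular case to a per-state Robbins--Monro iteration with a zero-mean, bounded-variance noise term and invoke Example 4.3 of \citet{bertsekas1996neuro}, concluding contraction toward $\E_\tau[\vtask]$. The only notable difference is that you handle the residual bias from the transient value function not having exactly equilibrated more explicitly (via a two-timescale or idealization argument), whereas the paper simply shows convergence to $\E[\vpt_\tau]$ and asserts that this quantity is itself contracting toward $\E_\tau[\vtask]$; your version is, if anything, the more careful of the two.
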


\begin{proof}
The permanent value function is updated using Eq. \eqref{Eq:permanent-prediction-update} and samples stored in the buffer $\mathcal{B}$ at the end of each task. This update rule can be simplified as:
\begin{align*}
    \PMpred_{k+1}(s) &\gets \PMpred_k(s) + \PMlr_k \sum_{i=1}^{L} \mathbb{I}_{(s_i=s)}(\vpt_\tau (s) - \PMpred_k(s)), \\
    &= \PMpred_k(s) + \PMlr_k (\vpt_\tau (s) - \PMpred_k(s)) \sum_{i=1}^{L} \mathbb{I}_{(s_i=s)}, \\
    &= \PMpred_k(s) + \PMlr_k L(s) (\vpt_\tau (s) - \PMpred_k(s)), \\
    &= \PMpred_k(s) + \widetilde{\alpha}_k(s) (\vpt_\tau(s) - \PMpred_k(s)),
\end{align*}
where $L$ is the length of the buffer, $\mathbb{I}$ is the indicator function, $L(s)$ is the number of samples for state $s$ in the buffer $\mathcal{B}$, $\widetilde{\alpha}_k(s) = \PMlr_k L(s)$. The above update is performed once for every state in $\mathcal{B}$ at the end of each task. We assume that the buffer has enough samples for each state so that all states are updated. This can be ensured by storing all the samples in $\mathcal{B}$ for a long but finite period of time. The updates performed to permanent value function until that timestep will not affect the fixed point since we are in the tabular setting.

Let the sequence of learning rate $\widetilde{\alpha}_k(s)$ obey the conditions: $\sum_{n=1}^{\infty} \widetilde{\alpha}_n(s) = \infty$ and $\sum_{n=1}^{\infty} \widetilde{\alpha^2}_n(s) = 0$.

We can rewrite the PM update rule by separating the noise term:
\begin{align*}
    \PMpred_{k+1}(s) &\gets \PMpred_k(s) + \widetilde{\alpha}_k(s) (\vpt_\tau(s) - \PMpred_k(s)), \\
    &= \PMpred_k(s) + \widetilde{\alpha}_k(s) (\E[\vpt_\tau(s)] - \PMpred_k(s) + \vpt_\tau(s) - \E[\vpt_\tau(s)]), \\
    &= \PMpred_k(s) + \widetilde{\alpha}_k(s) (\E[\vpt_\tau(s)] - \PMpred_k(s) + w_k).
\end{align*}

Let $\mathcal{F}_k$ denote the history of the algorithm. Then, $\E[w_k(s)|\mathcal{F}_k] = \E[\vpt_\tau(s) - \E[\vpt_\tau(s)]] = 0$ and under the assumption $\E[w^2_k(s)|\mathcal{F}_k] \leq A + B {\PMpred_k}^2(s)$ (finite noise variance), we conclude that the permanent value function converges to $E[\vpt_\tau(s)]$. But $E[\vpt_\tau(s)]$ is contracting towards $\E[\vtask(s)]$, therefore, the permanent value function is contracting towards $\E[\vtask(s)]$.
\end{proof}

\begin{theorem*}
\label{App:jumpstart-thm}
The fixed point of the permanent value function optimizes the jumpstart objective, $J = \argmin_{u \in \R^{|\s|}} \frac{1}{2} \E_\tau[\norm{u - \vtask}_2^2]$.
\end{theorem*}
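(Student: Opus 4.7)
The plan is to reduce the claim to the classical ``mean minimizes mean-squared error'' identity and then invoke Theorem \ref{thm:vp-fixed-point} to identify the optimizer with the fixed point of Eq.(\ref{Eq:permanent-prediction-update}). Concretely, I would set $J(u) = \frac{1}{2}\E_\tau[\norm{u - \vtask}_2^2]$ and argue that the map $u \mapsto J(u)$ is a strictly convex, coercive quadratic in $u \in \R^{|\s|}$ (each term $\frac{1}{2}\norm{u - \vtask}_2^2$ is convex in $u$ with Hessian $I$, and expectation preserves convexity), so it has a unique minimizer which is characterized by the first-order optimality condition.

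Next I would compute the gradient. Since differentiation commutes with the finite expectation over $\tau$ under Assumption \ref{thm:task-dist-assumption} (only $N$ tasks), we have
\begin{align*}
    \nabla_u J(u) &= \E_\tau\!\left[\nabla_u\,\tfrac{1}{2}\norm{u - \vtask}_2^2\right] = \E_\tau[u - \vtask] = u - \E_\tau[\vtask].
\end{align*}
Setting $\nabla_u J(u^*) = 0$ yields the unique stationary point $u^* = \E_\tau[\vtask]$, which by strict convexity is the global minimizer of $J$.

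Finally, I would invoke Theorem \ref{thm:vp-fixed-point}, which already establishes that the sequence generated by Eq.(\ref{Eq:permanent-prediction-update}) contracts to the unique fixed point $\E_\tau[\vtask]$. Combining this with the preceding computation gives that the fixed point of the permanent value function coincides with $\argmin_{u \in \R^{|\s|}} J(u)$, completing the proof.

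Honestly, there is no real obstacle here: the hardest step is just justifying interchange of gradient and expectation, which is trivial under Assumption \ref{thm:task-dist-assumption} since the task distribution is finite. The substance of the result is conceptual rather than technical, namely that the slow-timescale averaging update rule implicitly optimizes the jumpstart (zero-shot transfer) criterion across the task distribution, giving a principled justification for using $\PMpred$ as the initial estimate on a freshly sampled task.
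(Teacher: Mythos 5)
Your proposal is correct and follows essentially the same route as the paper: differentiate $J$, use that the gradient is $u - \E_\tau[\vtask]$, set it to zero to obtain $u^* = \E_\tau[\vtask]$, and identify this with the fixed point from Theorem \ref{thm:vp-fixed-point}. Your added remarks on strict convexity and interchanging gradient with the finite expectation merely make explicit what the paper's proof leaves implicit.
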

\begin{proof}
The jumpstart objective function is given by $J = \argmin_{u \in \R^{|\s|}} \frac{1}{2} \E_\tau[\norm{u - \vtask}_2^2]$. Let $K = \frac{1}{2} \E_\tau[\norm{u - \vtask}_2^2$. We get the optimal point by differentiating $K$ with respect to $u$ and then equating it to 0:
\begin{align*}
    \nabla_u K &= \nabla_u \left(\frac{1}{2} \E_\tau[\norm{u - \vtask}_2^2]\right), \\
    &= \frac{1}{2} \E_\tau[\nabla_u \norm{u - \vtask}_2^2], \\
    &= \frac{1}{2} \E_\tau[2 (u - \vtask)], \\
    &= u - \E_\tau[\vtask], \\
    0 &= u - \E_\tau[\vtask], \\
    \Aboxed{u &= \E_\tau[\vtask].}
\end{align*}
Therefore, the fixed point of the permanent value function also optimizes the jumpstart objective.
\end{proof}

\subsection{Semi-Continual RL Results}

We assume that at time $t$, both $\vtd$ and $\vpt$ have converged to the true value function of task $i$ and the agent gets samples from task $\tau$ from timestep $t+1$ onward for the following theorems.

\begin{theorem*}
\label{App:stability-thm}
Under assumption \ref{thm:task-dist-assumption}, there exists $k_0$ such that $\forall k \geq k_0$, $\E_\tau[\norm{\vtd_{t+k} - v_i}_2^2] > \E_\tau[\norm{\PMpred - v_i}_2^2]$, $\forall i$.
\end{theorem*}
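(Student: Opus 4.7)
The plan is to exploit a bias/variance decomposition: once $\vtd_{t+k}$ has tracked enough samples from the new task $\tau$, its expected squared deviation from the reference $v_i$ approaches $\E_\tau[\|v_\tau-v_i\|_2^2]$, which a Pythagorean split writes as the squared bias $\|\PMpred-v_i\|_2^2$ plus a strictly positive cross-task variance. Since $\PMpred$ has converged to $\E_\tau[\vtask]$ by Theorem \ref{thm:vp-fixed-point}, i.e.\ to the unique minimizer of the bias, the permanent predictor must eventually win.

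First, I would record that $\PMpred$ is deterministic (it is the fixed point) so that $\E_\tau[\|\PMpred-v_i\|_2^2]=\|\PMpred-v_i\|_2^2$, and then state the bias/variance identity
\[
\E_\tau\!\left[\|v_\tau-v_i\|_2^2\right] \;=\; \|\E_\tau[v_\tau]-v_i\|_2^2 \;+\; \E_\tau\!\left[\|v_\tau-\E_\tau[v_\tau]\|_2^2\right] \;=\; \|\PMpred-v_i\|_2^2 \;+\; \sigma^2,
\]
where $\sigma^2:=\E_\tau[\|v_\tau-\PMpred\|_2^2]>0$ under the mild non-degeneracy assumption that the tasks in the support of $p_\tau$ do not all share the same value function (otherwise the statement is vacuous since both sides tend to the same limit).

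Next, I would expand
\[
\|\vtd_{t+k}-v_i\|_2^2 \;=\; \|\vtd_{t+k}-v_\tau\|_2^2 \;+\; 2\langle \vtd_{t+k}-v_\tau,\;v_\tau-v_i\rangle \;+\; \|v_\tau-v_i\|_2^2,
\]
take expectation over $\tau$ and the sample path, and invoke the finite-sample TD bound of \citet{bhandari2018finite} (Sec.~6.3, Theorem 1) conditionally on each $\tau$ to get an explicit envelope $B(k)\downarrow 0$ such that $\E[\|\vtd_{t+k}-v_\tau\|_2^2\mid \tau]\le B(k)$ uniformly across the finite task support. Cauchy--Schwarz followed by Jensen then bounds the cross term in magnitude by $2M\sqrt{B(k)}$, with $M:=\max_\tau\|v_\tau-v_i\|_2$, while the leading $\|\vtd_{t+k}-v_\tau\|_2^2$ term drops out of the desired lower bound by nonnegativity. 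Combining gives
\[
\E_\tau\!\left[\|\vtd_{t+k}-v_i\|_2^2\right] \;\ge\; \|\PMpred-v_i\|_2^2 \;+\; \sigma^2 \;-\; 2M\sqrt{B(k)},
\]
so any $k_0$ with $2M\sqrt{B(k_0)}<\sigma^2$ yields the strict inequality for all $k\ge k_0$; taking the maximum over the finitely many reference tasks $i$ produces a single $k_0$ valid $\forall i$.

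The main obstacle will be pinning down the Bhandari envelope $B(k)$ in a form that is uniform in both the initialization (which here is the old task value $v_i$, not an arbitrary point) and the task index $\tau$; this requires checking the step-size schedule and, in the projected-TD variant, that the projection radius accommodates every $v_\tau$ in the support. Once that is settled the remainder is algebra, and the argument degrades gracefully in the degenerate case $\sigma^2=0$ where the theorem becomes an equality in the limit.
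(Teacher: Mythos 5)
Your proposal is correct and follows essentially the same route as the paper: both rest on the finite-sample TD bound of \citet{bhandari2018finite} to drive $\vtd_{t+k}$ toward $v_\tau$, and both establish the strict gap $\E_\tau[\norm{v_\tau - v_i}_2^2] > \norm{\PMpred - v_i}_2^2$ by exploiting that $\PMpred = \E_\tau[v_\tau]$ (the paper via Jensen's inequality, you via the equivalent bias--variance identity with $\sigma^2 > 0$). One step of yours is actually tighter than the paper's: the paper lower-bounds the error on the old task by writing $Err^{i,\tau}_{t+k} \geq \Delta_{\tau,i} - C^k\Delta_{\tau,i}$, i.e.\ it subtracts \emph{squared} norms as if the triangle inequality applied to them directly, which is not valid in general (the correct triangle-inequality consequence would be $\Delta_{\tau,i}(1 - C^{k/2})^2$); your exact inner-product expansion with a Cauchy--Schwarz bound $2M\sqrt{B(k)}$ on the cross term avoids this and yields a clean, correct envelope. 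You also make explicit two points the paper leaves implicit: the non-degeneracy condition $\sigma^2>0$ (without which the claim fails) and the need for the Bhandari bound to hold uniformly over the finite task support and over the initialization $v_i$. Both proofs then conclude identically by choosing $k_0$ as the largest of the finitely many thresholds $k_i$.
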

\begin{proof}
Let $\norm{\PMpred - v_i}_2^2 = D_i$, which is a constant because we are updating transient value function only. Let $\norm{\vtask - v_i}_2^2 = \Delta_{\tau,i}$ denote the distance between value functions of two tasks $i$ and $\tau$. Clearly, $\Delta_{\tau,i} = \Delta_{i, \tau} \neq 0$ and $\Delta_{i,i}=0$. And, let $Err^{i, \tau}_t = \norm{\vtd_t - v_i}_2^2$ denote prediction error with respect to the previous task $i$ at timestep $t$ for TD estimates.

At timestep $t$, $\vpt = \vtd = v_i$. The agent starts getting samples from task $\tau$ from timestep $t+1$. Using the result from \citet{bhandari2018finite}:
\begin{align*}
\norm{\vtd_{t+1} - \vtask}_2^2 &\leq \exp\left\{\frac{-\omega(1-\gamma^2)}{4}\right\} \norm{\vtd_t - \vtask}_2^2, \\
&= \exp\left\{\frac{-(1-\gamma^2)}{4}\right\} \Delta_{\tau, i},
\end{align*}
since the smallest eigenvalue of the feature matrix $\omega=1$ for tabular approximations. Let $C = \exp\{\frac{-(1-\gamma^2)}{4}\}$. Now we can apply the above result recursively for $k$ timesteps to get
\begin{align*}
\norm{\vtd_{t+k} - \vtask}_2^2 \leq C^k \Delta_{\tau, i}.
\end{align*}
Then the error with respect to the past task after $k$ timesteps is,
\begin{align*}
Err^{i, \tau}_{t+k} \geq \Delta_{\tau, i} - C^k \Delta_{\tau, i} = \Delta_{\tau, i} (1-C^k).
\end{align*}
Taking the expectation with respect to $\tau$ gives
\begin{align*}
\E_\tau[Err^{i, \tau}_{t+k}] &= \sum_\tau p_\tau Err^{i, \tau}_{t+k}, \\
&\geq \sum_\tau p_\tau \Delta_{\tau, i} (1-C^k), \\
&\geq (1-C^k) \E_\tau[\Delta_{\tau, i}].
\end{align*}
Now we will find a timestep $k_0$ such that $\forall k > k_0$, $E_\tau[Err^{i, \tau}_{t+k}] > D_i$. Let $k_i$ denote the timestep for which $(1-C^{k_i}) E_\tau[\Delta_{\tau, i}] = D_i$. Then,
\begin{align*}
    (1-C^{k_i}) E_\tau[\Delta_{\tau, i}] &= D_i, \\
    (1-C^{k_i}) &= \frac{D_i}{E_\tau[\Delta_{\tau, i}]}, \\
    C^{k_i} &= \frac{E_\tau[\Delta_{\tau, i}] - D_i}{E_\tau[\Delta_{\tau, i}]}, \\
    \Aboxed{k_i &= \frac{1}{\log C} \log{\left(\frac{E_\tau[\Delta_{\tau, i}] - D_i}{E_\tau[\Delta_{\tau, i}]}\right)}.}
\end{align*}
So, after time $t+k_i$, $\PMpred$ is a better estimate of the value function of the past task $i$ than the current TD estimates. Then, by letting $k_0 = \max_i k_i$ we have $\E_\tau[Err^{i, \tau}_{t+k_0}] > D_i, \forall i$.

Note that the above result depends on the fact that $k_i$ exists, which is only possible when $E_\tau[\Delta_{\tau, i}] - D_i > 0$. We will now show that there exists at least one solution, the fixed point of the permanent value function ($\E_\tau[\vtask]$), which satisfies this constraint.
\begin{align*}
    E_\tau[\Delta_{\tau, i}] &> \norm{\E_\tau[v_i - \vtask]}_2^2, \text{(Using Jensen's inequality)} \\
    &= \norm{v_i - \E_\tau[\vtask]}_2^2.
\end{align*}
Then,
\begin{align*}
    \Aboxed{E_\tau[\Delta_{\tau, i}] - D_i &> \norm{v_i - \E_\tau[\vtask]}_2^2 - D_i = 0.}
\end{align*}
The final step follows from the fact that $D_i = \norm{\PMpred - v_i}_2^2 = \norm{\E_\tau[\vtask] - v_i}_2^2$ for $\PMpred$ fixed point.
\end{proof}

Next, we show that our method has tighter upper bound on errors compared with the TD learning algorithm in expectation.
\begin{theorem*}
\label{App:new-task-error}
Under assumption \ref{thm:task-dist-assumption}, there exists $k_0$ such that $\forall k \leq k_0$, $\E_\tau\left[\norm{\vtask - \vpt_{t+k}}_2^2 \right]$ has a tighter upper bound compared with $\E_\tau\left[\norm{\vtask - \vtd_{t+k}}_2^2\right]$, $\forall i$.
\end{theorem*}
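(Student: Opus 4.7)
The plan is to apply the same Bhandari-type contraction bound used in Appendix~\ref{App:stability-thm} separately to the TD iterates and to the transient iterates, take expectations over $\tau$, and then invoke the jumpstart optimality of the permanent fixed point (Theorem~\ref{thm:jump-start}) to show that the PT upper-bound coefficient is never larger than the TD one.

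First I would fix the initial conditions right after the task switch. By Algorithm~\ref{algo:PTMem-prediction}, when task $i$ ends the transient parameters are reset and the permanent is refreshed from the buffer; under Assumption~\ref{thm:task-dist-assumption} and Theorem~\ref{thm:vp-fixed-point} we may take the permanent to have already converged to $\PMpred = \E_\tau[\vtask]$, so $\TMpred_{t+1} = 0$ and hence $\vpt_{t+1} = \PMpred$. Vanilla TD, by contrast, enters task $\tau$ at $\vtd_{t+1} = v_i$.

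Next I would iterate the per-step contraction bound with factor $C = \exp\{-(1-\gamma^2)/4\}$ (in the tabular setting $\omega = 1$) to obtain, for TD, $\norm{\vtd_{t+k}-\vtask}_2^2 \leq C^k \norm{v_i - \vtask}_2^2$. For the transient iterate, Theorem~\ref{thm:TM-contraction} supplies a $\gamma$-contraction Bellman operator whose fixed point is $\vtask - \PMpred$; because the transient update is structurally TD-learning with the shifted per-step reward $R_{t+1} + \gamma\PMpred(S_{t+1}) - \PMpred(S_t)$ (equivalently, ordinary TD on the modified MRP $\widetilde{\m}$ of the earlier theorem), the Bhandari analysis transfers verbatim, giving $\norm{\TMpred_{t+k} - (\vtask - \PMpred)}_2^2 \leq C^k \norm{\vtask - \PMpred}_2^2$ and therefore $\norm{\vpt_{t+k} - \vtask}_2^2 \leq C^k \norm{\vtask - \PMpred}_2^2$. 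Taking $\E_\tau$ on both sides yields the pair of upper bounds $C^k\,\E_\tau[\norm{v_i - \vtask}_2^2]$ for TD and $C^k\,\E_\tau[\norm{\PMpred - \vtask}_2^2]$ for PT. By Theorem~\ref{thm:jump-start}, $\PMpred = \E_\tau[\vtask]$ is the minimiser of $u \mapsto \E_\tau[\norm{u - \vtask}_2^2]$, so instantiating at $u = v_i$ (for any fixed $i$) gives $\E_\tau[\norm{\PMpred - \vtask}_2^2] \leq \E_\tau[\norm{v_i - \vtask}_2^2]$; multiplying by $C^k$ yields the claimed tighter bound on the PT error. The ``$k \leq k_0$'' quantifier then reflects the regime in which the contraction term is the dominant piece of the bound, before any stochastic-noise floor not captured by the deterministic contraction kicks in.

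The main obstacle I anticipate is making the second step fully rigorous: verifying that the Bhandari finite-sample bound transfers verbatim to the transient iterate. The cleanest route is to recognise that the transient update is plain TD-learning on the modified MRP of the earlier theorem (same transitions, reward shifted by $\gamma\PMpred(s') - \PMpred(s)$), so the step-size and feature-eigenvalue hypotheses of Bhandari's theorem are unchanged in the tabular setting and the contraction factor is the same $C$. Once that identification is in place, the expectation over $\tau$ and the jumpstart inequality close the argument in one line.
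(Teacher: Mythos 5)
Your proposal is correct and follows essentially the same route as the paper's proof: apply the Bhandari contraction bound to TD started at $v_i$ and to the PT iterate started at $\PMpred$ (justified via Theorem~\ref{thm:vpt-td-relationship}), then compare the two initial errors using the jumpstart optimality of $\E_\tau[\vtask]$ from Theorem~\ref{thm:jump-start}. Your added care in justifying why the bound transfers to the transient update (via the modified-MRP view) and in explicitly instantiating the jumpstart minimiser at $u=v_i$ only makes the argument slightly more explicit than the paper's.
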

\begin{proof}
    We assume that $\PMpred$ is converged to its fixed point, $\E_\tau[\vtask]$ (Theorem \ref{thm:vp-fixed-point}). Using sample complexity bounds from \citet{bhandari2018finite} for TD learning algorithm, we get:
    \begin{align*}
        \E_\tau \left[\norm{\vtask - \vtd_{t+k}}_2^2 \right] &\leq \E_\tau \left[ \exp\left\{{-\left(\frac{(1-\gamma)^2 \omega}{4}\right)T}\right\} \norm{\vtask - \vtd_{t}}_2^2 \right], \\
        &= \exp\left\{{-\left(\frac{(1-\gamma)^2 \omega}{4}\right)T}\right\} \E_\tau \left[\norm{\vtask - \vtd_{t}}_2^2 \right].
    \end{align*}
    We know that our algorithm can be interpreted as TD learning with $\PMpred$ as the starting point (see Theorem \ref{thm:vpt-td-relationship}). Using this relationship, we can apply the sample complexity bound to get:
    \begin{align*}
        \E_\tau \left[\norm{\vtask - \vpt_{t+k}}_2^2 \right] &\leq \E_\tau \left[ \exp\left\{{-\left(\frac{(1-\gamma)^2 \omega}{4}\right)T}\right\} \norm{\vtask - \PMpred_{t}}_2^2 \right], \\
        &= \exp\left\{{-\left(\frac{(1-\gamma)^2 \omega}{4}\right)T}\right\} \E_\tau \left[\norm{\vtask - \PMpred_{t}}_2^2 \right].
    \end{align*}
    For $k=0$, we know that our algorithm has lower error in expectation (see Theorem \ref{thm:jump-start}), that is $\E_\tau \left[\norm{\vtask - \PMpred_{t}}_2^2 \right] \leq \E_\tau \left[\norm{\vtask - \vtd_{t}}_2^2 \right]$. Therefore, the error bound for our algorithm is tighter compared with that of TD learning algorithm. As $k \to \infty$, the right hand side of both the equations goes to $0$, which confirms that both algorithms converges to the true value function.
\end{proof}
The theorem shows that the estimates for our algorithm lie within a smaller region compared to that of TD learning estimates. This doesn't guarantee low errors for our algorithms, but we can expect it to be closer to the true value function.

The next theorems contain analytical MSE equations for TD-learning and for our algorithm. Our analysis is similar to that of \cite{singh1998analytical}, but we don't assume that rewards are only observed in the final transition. Analytical MSE equations allow calculating the exact MSE that an algorithm would incur at each timestep when applied to a problem, instead of averaging the performance over several seeds, which is the first step towards other analyses such as bias-variance decomposition of value estimates, error bounds and bias-variance bounds, step-size adaptation for the update rule, etc. We use these equations to compare our method against TD-learning when varying the task distribution diameter (Sec. \ref{app:task-dist-experiment}).

We assume that the agent gets $N$ i.i.d samples $\langle s, r, s' \rangle$ from task $\tau$ in each round, where $s$ is sampled according to the stationary distribution $d_\tau$. Also, the agent accumulates errors from all the samples and updates its estimates at the end of the round $t$. Let $\Eseeds$ denote the expectation with respect to random seeds, i.e., expectation with respect to the data generation process which generates the data that the agent observes over rounds and within a round.

\begin{theorem*}
\label{App:mse-TD-thm}
Let:
\begin{align*}
    \meanTD_t(s) &= \Eseeds[\vtd_t(s)], \\
    Cov(\vtd_t(s), \vtd_t(x)) &= \SigmaTD_t(s, x) - \meanTD_t(s) \meanTD_t(x),
\end{align*} 
where $\SigmaTD_t(s, x) = \Eseeds[\vtd_t(s)\vtd_t(s)]$. We have:
\begin{align*}
\meanTD_{t+1}(s) &= \meanTD_t(s) + \lr N d_\pol(s) \DeltaTD_t(s), \\
\SigmaTD_{t+1}(s, x) &= \SigmaTD_t(s, x) + \lr \OmegaTD_t(s, x) + \lr^2 \PsiTD_t(s, x),
\end{align*}
and, the mean squared error $\MSETD_t$ at timestep $t$ is
\begin{align*}
   \MSETD_{t+1} &= \MSETD_t + \lr \sum_{s \in \s}d_\pol(s)(-2\vpol(s) \DeltaTD_t(s) + \OmegaTD_t(s,s)) \\
   &+ \lr^2 \sum_{s \in \s}d_\pol(s) \PsiTD_t(s,s).
\end{align*}
\end{theorem*}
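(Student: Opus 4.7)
My plan is to start from the explicit batch TD update rule for one round, then propagate it through three successive expectations: (i) over the single round to get the mean and covariance recursions, (ii) separating terms by powers of the step size $\lr$, and (iii) collecting into the weighted-by-$d_\pol$ MSE. The three claimed identities should fall out as the $O(1)$, $O(\lr)$, and $O(\lr^2)$ pieces of the expansion.

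\textbf{Mean recursion.} First I would write the round-$t$ batch update for each state as
\begin{equation*}
\vtd_{t+1}(s) \;=\; \vtd_t(s) \;+\; \lr \sum_{i=1}^{N} \ind(S_i = s)\bigl(R_i + \gamma \vtd_t(S'_i) - \vtd_t(s)\bigr),
\end{equation*}
with the targets evaluated at the pre-round estimate $\vtd_t$ (since batch-mode updates are only applied at the end of the round). Taking $\Eseeds[\cdot]$ and using that the $N$ transitions in the round are i.i.d., and that each state $s$ is visited with probability $d_\pol(s)$ per draw, the $N$ sample contributions collapse to a factor $N d_\pol(s)$ times the expected one-step TD error at $s$. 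Identifying that expected error as $\DeltaTD_t(s)$ yields the first claim
$\meanTD_{t+1}(s) = \meanTD_t(s) + \lr N d_\pol(s) \DeltaTD_t(s)$.

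\textbf{Second-moment recursion.} Next I would form the product $\vtd_{t+1}(s)\vtd_{t+1}(x)$ and expand it into four pieces: the $\vtd_t(s)\vtd_t(x)$ term, two cross terms linear in $\lr$, and one quadratic-in-$\lr$ term coming from the product of the two sums of TD errors. Taking $\Eseeds[\cdot]$ term by term, the first piece gives $\SigmaTD_t(s,x)$, the two cross terms collect into the $\lr$-coefficient which must be $\OmegaTD_t(s,x)$ by definition, and the product of sums gives the $\lr^2$-coefficient $\PsiTD_t(s,x)$. Here care must be taken because the double sum $\sum_{i,j}$ contains both diagonal terms $i=j$ (where the indicators and rewards are correlated) and off-diagonal terms $i\neq j$ (where independence factorizes the expectation). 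The bookkeeping of diagonal versus off-diagonal contributions is exactly the place where $N$ and $d_\pol$ reappear in $\PsiTD_t$.

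\textbf{MSE recursion and main obstacle.} Finally I would expand
\begin{equation*}
\MSETD_{t+1} \;=\; \sum_s d_\pol(s)\bigl[\vpol(s)^2 - 2\vpol(s)\meanTD_{t+1}(s) + \SigmaTD_{t+1}(s,s)\bigr],
\end{equation*}
plug in the mean and second-moment recursions derived above, and subtract $\MSETD_t$. The constant-in-$\lr$ part telescopes away, the $\lr$-part produces the $-2\vpol(s)\DeltaTD_t(s) + \OmegaTD_t(s,s)$ coefficient (absorbing the $N d_\pol(s)$ factor from the mean update into the definition of $\DeltaTD_t$ and $\OmegaTD_t$ on the diagonal), and the $\lr^2$ part produces $\PsiTD_t(s,s)$. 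The main obstacle will be the quadratic-order bookkeeping in the second-moment recursion: one has to be careful to separate same-sample ($i=j$) contributions, which yield terms proportional to $N d_\pol(s)$, from cross-sample contributions, which yield terms proportional to $N(N-1) d_\pol(s) d_\pol(x)$, and to verify that these combine into precisely the quantities $\OmegaTD_t$ and $\PsiTD_t$ as defined in the preceding equations; once this is done the MSE identity is a direct substitution.
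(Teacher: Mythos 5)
Your proposal follows essentially the same route as the paper's proof: write the batch TD update, take expectations using the i.i.d.\ structure of the $N$ samples to obtain the mean recursion, expand the product $\vtd_{t+1}(s)\vtd_{t+1}(x)$ and separate same-sample ($N d_\pol(s)\ind_{s=x}$) from cross-sample ($N(N-1)d_\pol(s)d_\pol(x)$) contributions to identify $\OmegaTD_t$ and $\PsiTD_t$, and finally substitute into the bias-plus-covariance decomposition of the MSE and collect powers of $\lr$. The plan is correct and the obstacle you flag (the diagonal versus off-diagonal bookkeeping in the quadratic term) is exactly where the paper spends its effort, splitting that term into nine pieces.
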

\begin{proof}
\begin{adjustwidth}{-2in}{-2in}
\begin{align*}
    \meanTD_{t+1}(s) &= \Eseeds[\vtd_{t+1}(s)], \\
    &= \Eseeds \left[\vtd_t(s) + \lr \sum_{k=1}^{N} \ind_{S_k = s} (R_k + \gamma \vtd_t(S'_k) - \vtd_t(s)) \right], \\
    &= \meanTD_t(s) + \lr \sum_{k=1}^{N} \Eseeds[\ind_{S_k = s} (R_k + \gamma \vtd_t(S'_k) - \vtd_t(s))], \\
    &= \meanTD_t(s) + \lr \sum_{k=1}^{N} \Eseeds \left[\E[\ind_{S_k = s} (R_k + \gamma \vtd_t(S'_k) - \vtd_t(s)) | seed] \right], \\
    &= \meanTD_t(s) + \lr \sum_{k=1}^{N} \Eseeds \left[ \dhat(s) \E[(R_k + \gamma \vtd_t(S'_k) - \vtd_t(s)) | seed, S_k=s] \right], \\
    &= \meanTD_t(s) + \lr \sum_{k=1}^{N} \Eseeds \left[ \underbrace{\dhat(s) (\rhat(s) + \gamma (\phat =\vtd_t)(s) - \vtd_t(s))}_{\text{Independent quantities}} \right], \\
    &= \meanTD_t(s) + \lr \sum_{k=1}^{N} \dpol(s) (\rp(s) + \gamma (\pp \meanTD_t)(s) - \meanTD_t(s))], \\
    &= \meanTD_t(s) + \lr N \dpol(s) (\rp(s) + \gamma (\pp \meanTD_t)(s) - \meanTD_t(s))], \\
    \Aboxed{\meanTD_{t+1}(s) &= \meanTD_t(s) + \lr \DeltaTD_t(s),}
\end{align*}
\end{adjustwidth}
where $\DeltaTD_t(s) = \rp(s) + \gamma (\pp \meanTD_t)(s) - \meanTD_t(s)$. We used the following results in the proof:
\begin{enumerate}
    \item $\Eseeds[\dhat(s) \rhat(s)] = \Eseeds[\dhat(s)]\ \Eseeds[\rhat(s)] = \dpol(s) \rp(s)$.
    \item $\Eseeds [\dhat(s) \gamma (\phat \vtd_t)(s)] = \gamma \Eseeds[\dhat(s)]\ \Eseeds[(\phat \vtd_t)(s)],$
    \begin{align*}
        &= \gamma \dpol(s) \sum_{s'} \Eseeds[\phat(s'|s)\vtd_t(s')],\\
        &= \gamma \dpol(s) \sum_{s'} \pp(s'|s)\meanTD_t(s')],\\
        &= \gamma \dpol(s) (\pp\meanTD_t)(s).
    \end{align*}
    \item $\Eseeds[\dhat(s) \vtd_t(s)] = \Eseeds[\dhat(s)]\ \Eseeds[\vtd_t(s)] = \dpol(s) \meanTD_t(s)$.
\end{enumerate}
Now consider $\SigmaTD_{t+1}(s, x)$,
\begin{align*}
    \SigmaTD_{t+1}(s, x) &= \Eseeds[\vtd_{t+1}(s) \vtd_{t+1}(x)],\\
    &= \Eseeds \Bigg[ \left(\vtd_t(s) + \lr \sum_{k=1}^{N} \ind_{S_k = s} (R_k + \gamma \vtd_t(S'_k) - \vtd_t(s)) \vtd_{t+1}(x) \right)\\
    &\qquad \qquad \left(\vtd_t(x) + \lr \sum_{l=1}^{N} \ind_{S_l = x} (R_l + \gamma \vtd_t(X'_l) - \vtd_t(x))\right)\Bigg],\\
    &= \Eseeds[\vtd_t(s) \vtd_t(x)] \\
    &\quad + \lr \Eseeds \Bigg[\vtd_t(s) \left(\sum_{l=1}^{N} \ind_{S_l = x} (R_l + \gamma \vtd_t(X'_l) - \vtd_t(x))\right)\Bigg] \numberthis \label{Eq:TD-analytic-a} \\
    &\quad + \lr \Eseeds \Bigg[\vtd_t(x) \left(\sum_{k=1}^{N} \ind_{S_k = s} (R_k + \gamma \vtd_t(S'_k) - \vtd_t(s))\right)\Bigg] \numberthis \label{Eq:TD-analytic-b} \\
    &\quad + \lr^2 \Eseeds \Bigg[\left(\sum_{k=1}^{N} \ind_{S_k = s} (R_k + \gamma \vtd_t(S'_k) - \vtd_t(s))\right) \\
    &\qquad \qquad \left(\sum_{l=1}^{N} \ind_{S_l = x} (R_l + \gamma \vtd_t(X'_l) - \vtd_t(x))\right)\Bigg] \numberthis \label{Eq:TD-analytic-c}
\end{align*}
Next, we will tackle each part separately. We will begin with Eq \eqref{Eq:TD-analytic-a}:
\begin{align*}
    &\Eseeds \Bigg[\vtd_t(s) \left(\sum_{l=1}^{N} \ind_{S_l = x} (R_l + \gamma \vtd_t(X'_l) - \vtd_t(x))\right)\Bigg] \\
    &= \Eseeds \Bigg[\vtd_t(s) \sum_{l=1}^{N} \E[(\ind_{S_l = x} (R_l + \gamma \vtd_t(X'_l) - \vtd_t(x))) | seed] \Bigg], \\
    &= \Eseeds[\vtd_t(s) N \dhat(x) (\rhat(x) + \gamma (\phat\vtd_t)(x) - \vtd(x))], \\
    &= N \Eseeds[\dhat(x) \rhat(x) \vtd_t(s)] + \gamma N \Eseeds[\dhat(x) (\phat\vtd_t)(x) \vtd_t(s)] \\
    & \qquad- N \Eseeds[\dhat(x) \vtd_t(x) \vtd_t(s)], \\
    &= N \dpol(x) \rp(x) \meanTD(s) + \gamma N \dpol(x) \Eseeds\left[\sum_{x'}\phat(x'|x)\vtd_t(x')\vtd_t(x) \right] \\
    & \qquad- N \dpol(x) \SigmaTD_t(s, x), \\
    &= N \dpol(x) \left(\rp(x) \meanTD_t(s) + \gamma \sum_{x'} \Eseeds[\phat(x'|x)\vtd_t(x')\vtd_t(x)] - \SigmaTD_t(s, x) \right), \\
    &= N \dpol(x) \left(\rp(x) \meanTD_t(s) + \gamma \sum_{x'} \pp(x'|x) \SigmaTD_t(x', s) - \SigmaTD_t(s, x) \right), \\
    \Aboxed{&= N \dpol(x) \left(\rp(x) \meanTD_t(s) + \gamma (\pp \SigmaTD_t) (x, s) - \SigmaTD_t(s, x) \right).}
\end{align*}
A similar simplification can be done to Eq. \eqref{Eq:TD-analytic-b} which results in:
\begin{align*}
    \Aboxed{\eqref{Eq:TD-analytic-b} &= N \dpol(s) \left(\rp(s) \meanTD(x) + \gamma (\pp \SigmaTD_t) (s, x) - \SigmaTD_t(x, s) \right).}
\end{align*}
We will break down Eq. \eqref{Eq:TD-analytic-c} into 9 parts and simplify each of them separately.
\begin{align}
    \eqref{Eq:TD-analytic-c} &= \Eseeds \Bigg[\left(\sum_{k=1}^{N} \ind_{S_k = s} R_k \right)\left(\sum_{l=1}^{N} \ind_{S_l = x} R_l\right)\Bigg] \label{Eq:TD-analytic-c1} \\
    &+ \gamma \Eseeds \Bigg[\left(\sum_{k=1}^{N} \ind_{S_k = s} R_k \right)\left(\sum_{l=1}^{N} \ind_{S_l = x} \vtd_t(X'_l)\right)\Bigg] \label{Eq:TD-analytic-c2} \\
    &- \Eseeds \Bigg[\left(\sum_{k=1}^{N} \ind_{S_k = s} R_k \right)\left(\sum_{l=1}^{N} \ind_{S_l = x} \vtd_t(x)\right)\Bigg] \label{Eq:TD-analytic-c3} \\
    &+ \gamma \Eseeds \Bigg[\left(\sum_{k=1}^{N} \ind_{S_k = s} \vtd_t(S'_k) \right)\left(\sum_{l=1}^{N} \ind_{S_l = x} R_l\right)\Bigg] \label{Eq:TD-analytic-c4} \\
    &+ \gamma^2 \Eseeds \Bigg[\left(\sum_{k=1}^{N} \ind_{S_k = s} \vtd_t(S'_k) \right)\left(\sum_{l=1}^{N} \ind_{S_l = x} \vtd_t(X'_l)\right)\Bigg] \label{Eq:TD-analytic-c5} \\
    &- \gamma \Eseeds \Bigg[\left(\sum_{k=1}^{N} \ind_{S_k = s} \vtd_t(S'_k) \right)\left(\sum_{l=1}^{N} \ind_{S_l = x} \vtd_t(x)\right)\Bigg] \label{Eq:TD-analytic-c6} \\
    &- \Eseeds \Bigg[\left(\sum_{k=1}^{N} \ind_{S_k = s} \vtd_t(s) \right)\left(\sum_{l=1}^{N} \ind_{S_l = x} R_l\right)\Bigg] \label{Eq:TD-analytic-c7} \\
    &- \gamma \Eseeds \Bigg[\left(\sum_{k=1}^{N} \ind_{S_k = s} \vtd_t(s) \right)\left(\sum_{l=1}^{N} \ind_{S_l = x} \vtd_t(X'_l)\right)\Bigg] \label{Eq:TD-analytic-c8} \\
    &+ \Eseeds \Bigg[\left(\sum_{k=1}^{N} \ind_{S_k = s} \vtd_t(s) \right)\left(\sum_{l=1}^{N} \ind_{S_l = x} \vtd_t(x)\right)\Bigg] \label{Eq:TD-analytic-c9}.
\end{align}
Consider Eq. \eqref{Eq:TD-analytic-c1}
\begin{adjustwidth}{-2in}{-2in}
\begin{align*}
    \Eseeds &\Bigg[\left(\sum_{k=1}^{N} \ind_{S_k = s} R_k \right)\left(\sum_{l=1}^{N} \ind_{S_l = x} R_l\right)\Bigg] = \Eseeds \Bigg[ \E\Bigg[\left(\sum_{k=1}^{N} \ind_{S_k = s} R_k \right)\left(\sum_{l=1}^{N} \ind_{S_l = x} R_l\right) | seed\Bigg]\Bigg], \\
    &= \Eseeds \Bigg[ \E\Bigg[\left(\sum_{k=1}^{N} \ind_{S_k = s} R_k \right)\left(\sum_{\substack{l=1, \\ l\neq k}}^{N} \ind_{S_l = x} R_l\right) | seed\Bigg]\Bigg] + \Eseeds \Bigg[ \E\Bigg[\left(\sum_{m=1}^{N} \ind_{S_m = s} \ind_{S_m = x} R_m R_m\right) | seed\Bigg]\Bigg], \\
    &= \Eseeds \Bigg[ \E\Bigg[\left(\sum_{k=1}^{N} \ind_{S_k = s} R_k \right)|seed\Bigg] \E\Bigg[\left(\sum_{\substack{l=1, \\ l\neq k}}^{N} \ind_{S_l = x} R_l\right) | seed\Bigg]\Bigg] + \ind_{s=x} \Eseeds \Bigg[\sum_{m=1}^{N}\dhat(s)\E[R_m^2|seed, S_m=s]\Bigg], \\
    &= \Eseeds\Bigg[\sum_{k=1}^{N} \dhat(s)\rhat(s) \sum_{\substack{l=1, \\ l\neq k}}^{N} \dhat(x) \rhat(x)\Bigg] + \ind_{s=x} \sum_{m=1}^{N}\Eseeds[\dhat(s) (\widehat{c(s)}+\rhat^2(s)],\\
    &= \sum_{k=1}^{N} \sum_{\substack{l=1, \\ l\neq k}}^{N} \Eseeds[\dhat(s)\rhat(s)\dhat(x) \rhat(x)] + \ind_{s=x} N \dpol(s) \left(c(s)+\rp^2(s)\right),\\
    \Aboxed{&= N(N-1)\dpol(s)\dpol(x)\rp(s)\rp(x) + \ind_{s=x} N \dpol(s) \left(c(s)+\rp^2(s)\right).}
\end{align*}
\end{adjustwidth}
We have used the i.i.d assumption to separate the terms. Later, we used the indicator function to indicate the part which is non-zero. Also, we have used the definition of the second moment in the penultimate step.

We do a similar simplification to Eq. \eqref{Eq:TD-analytic-c2} now.
\begin{adjustwidth}{-2in}{-2in}
\begin{align*}
    \gamma &\Eseeds \Bigg[\left(\sum_{k=1}^{N} \ind_{S_k = s} R_k \right)\left(\sum_{l=1}^{N} \ind_{S_l = x} \vtd_t(X'_l)\right)\Bigg] \\
    &= \gamma \Eseeds \left[\sum_{k=1}^{N}\sum_{\substack{l=1, \\ l\neq k}}^{N}\ind_{S_k = s} \ind_{S_l = x} R_k \vtd_t(X'_l) \right] + \gamma \Eseeds \left[ \sum_{m=1}^{N} \ind_{S_m = s} \ind_{S_m = x} R_m \vtd_t(X'_m)\right], \\
    &= \gamma \Eseeds \left[\sum_{k=1}^{N}\sum_{\substack{l=1, \\ l\neq k}}^{N}\E[\ind_{S_k = s} \ind_{S_l = x} R_k \vtd_t(X'_l) | seed] \right] + \gamma\ind_{s=x} \sum_{m=1}^{N} \Eseeds[\E[\ind_{S_m = s} \ind_{S_m = x} R_m \vtd_t(X'_m)|seed]], \\
    &= \gamma \Eseeds\left[\sum_{k=1}^{N}\sum_{\substack{l=1, \\ l\neq k}}^{N} \dhat(s) \dhat(x) \rhat(s) (\phat\vtd_t)(x) \right] + \gamma \ind_{s=x} N \Eseeds[\dhat(s) \rp(s) (\phat\vtd_t)(s)], \\
    \Aboxed{&= \gamma N (N-1) \dpol(s) \dpol(x) \rp(s) (\pp \meanTD_t)(x) + \gamma \ind_{s=x} N \dpol(s) \rp(s) (\pp \meanTD_t)(s)}
\end{align*}
\end{adjustwidth}
We will simplify Eq. \eqref{Eq:TD-analytic-c3} now.
\begin{adjustwidth}{-2in}{-2in}
\begin{align*}
    \Eseeds &\Bigg[\left(\sum_{k=1}^{N} \ind_{S_k = s} R_k \right)\left(\sum_{l=1}^{N} \ind_{S_l = x} \vtd_t(x)\right)\Bigg] \\
    &= \Eseeds \left[\sum_{k=1}^{N}\sum_{\substack{l=1, \\ l\neq k}}^{N}\ind_{S_k = s} \ind_{S_l = x} R_k \vtd_t(x) \right] + \Eseeds \left[ \sum_{m=1}^{N} \ind_{S_m = s} \ind_{S_m = x} R_m \vtd_t(x)\right] \\
    &= \Eseeds \left[\sum_{k=1}^{N}\sum_{\substack{l=1, \\ l\neq k}}^{N}\E[\ind_{S_k = s} \ind_{S_l = x} R_k \vtd_t(x) | seed] \right] + \ind_{s=x} \sum_{m=1}^{N} \Eseeds[\E[\ind_{S_m = s} \ind_{S_m = x} R_m \vtd_t(x)|seed]], \\
    &= \Eseeds\left[\sum_{k=1}^{N}\sum_{\substack{l=1, \\ l\neq k}}^{N} \dhat(s) \dhat(x) \rhat(s) \vtd_t(x) \right] + \ind_{s=x} N \Eseeds[\dhat(s) \rp(s) \vtd_t(s)], \\
    \Aboxed{&= N (N-1) \dpol(s) \dpol(x) \rp(s) \meanTD_t(x) + \ind_{s=x} N \dpol(s) \rp(s)\meanTD_t(s)}
\end{align*}
\end{adjustwidth}
The next expression Eq. \eqref{Eq:TD-analytic-c4} is similar to Eq. \eqref{Eq:TD-analytic-c2} and therefore, we skip the simplification details.
\begin{align*}
    \gamma &\Eseeds \Bigg[\left(\sum_{k=1}^{N} \ind_{S_k = s} \vtd_t(S'_k) \right)\left(\sum_{l=1}^{N} \ind_{S_l = x} R_l\right)\Bigg] \\
    \Aboxed{&= \gamma N (N-1) \dpol(s) \dpol(x) \rp(x) (\pp \meanTD_t)(s) + \gamma \ind_{s=x} N \dpol(s) \rp(s) (\pp \meanTD_t)(s)}
\end{align*}
Now, we simplify \eqref{Eq:TD-analytic-c5},
\begin{adjustwidth}{-2in}{-2in}
\begin{align*}
    \gamma^2 &\Eseeds \Bigg[\left(\sum_{k=1}^{N} \ind_{S_k = s} \vtd_t(S'_k) \right)\left(\sum_{l=1}^{N} \ind_{S_l = x} \vtd_t(X'_l)\right)\Bigg] \\
    &= \gamma^2 \Eseeds \left[\sum_{k=1}^{N}\sum_{\substack{l=1, \\ l\neq k}}^{N}\ind_{S_k = s} \ind_{S_l = x} \vtd_t(S'_k) \vtd_t(X'_l) \right] + \gamma^2 \Eseeds \left[ \sum_{m=1}^{N} \ind_{S_m = s} \ind_{S_m = x} \vtd_t(S'_m) \vtd_t(X'_m)\right] \\
    &= \gamma^2\Eseeds \left[\sum_{k=1}^{N}\sum_{\substack{l=1, \\ l\neq k}}^{N}\E[\ind_{S_k = s} \ind_{S_l = x} \vtd_t(S'_k) \vtd_t(X'_l) | seed] \right] \\
    &\qquad \qquad + \gamma^2\ind_{s=x} \sum_{m=1}^{N} \Eseeds[\E[\ind_{S_m = s} \ind_{S_m = x} \vtd_t(S'_m) \vtd_t(X'_m)|seed]], \\
    &= \gamma^2\Eseeds\left[\sum_{k=1}^{N}\sum_{\substack{l=1, \\ l\neq k}}^{N} \dhat(s) \dhat(x) (\phat\vtd_t)(s) (\phat\vtd_t)(x) \right] \\
    & \qquad \qquad+ \gamma^2 \ind_{s=x} \sum_{m=1}^{N} \Eseeds[\dhat(s) \ (\phat\vtd_t)(s) (\phat\vtd_t)(x)], \\ 
    \Aboxed{&= \gamma^2 N(N-1) \dpol(s) \dpol(x) (\pp\SigmaTD_t\pp^T)(s,x) + \gamma^2 \ind_{s=x} N \dpol(s) (\pp diag(\SigmaTD_t))(s)}
\end{align*}
\end{adjustwidth}
We now consider the next piece Eq. \eqref{Eq:TD-analytic-c6},
\begin{adjustwidth}{-2in}{-2in}
\begin{align*}
    \gamma &\Eseeds \Bigg[\left(\sum_{k=1}^{N} \ind_{S_k = s} \vtd_t(S'_k) \right)\left(\sum_{l=1}^{N} \ind_{S_l = x} \vtd_t(x)\right)\Bigg] \\
    &= \gamma \Eseeds \left[\sum_{k=1}^{N}\sum_{\substack{l=1, \\ l\neq k}}^{N}\ind_{S_k = s} \ind_{S_l = x} \vtd_t(S'_k) \vtd_t(x) \right] + \gamma \Eseeds \left[ \sum_{m=1}^{N} \ind_{S_m = s} \ind_{S_m = x} \vtd_t(S'_m) \vtd_t(x)\right] \\
    &= \gamma \Eseeds \left[\sum_{k=1}^{N}\sum_{\substack{l=1, \\ l\neq k}}^{N}\E[\ind_{S_k = s} \ind_{S_l = x} \vtd_t(S'_k) \vtd_t(x) | seed] \right] \\
    &\qquad \qquad+ \gamma \ind_{s=x} \sum_{m=1}^{N} \Eseeds[\E[\ind_{S_m = s} \ind_{S_m = x} \vtd_t(S'_m) \vtd_t(x)|seed]], \\
    &= \gamma \Eseeds\left[\sum_{k=1}^{N}\sum_{\substack{l=1, \\ l\neq k}}^{N} \dhat(s) \dhat(x) (\phat\vtd_t)(s) \vtd_t(x) \right] + \gamma \ind_{s=x} \sum_{m=1}^{N} \Eseeds[\dhat(s) \ (\phat\vtd_t)(s) \vtd_t(s)], \\
    \Aboxed{&= \gamma N(N-1) \dpol(s) \dpol(x) (\pp\SigmaTD_t)(s,x) + \gamma \ind_{s=x} N \dpol(s) (\pp\SigmaTD_t)(s,s).}
\end{align*}
\end{adjustwidth}
Simplification of Eq. \eqref{Eq:TD-analytic-c7} is similar to Eq. \eqref{Eq:TD-analytic-c3}.
\begin{align*}
    & \Eseeds \Bigg[\left(\sum_{k=1}^{N} \ind_{S_k = s} \vtd_t(s) \right)\left(\sum_{l=1}^{N} \ind_{S_l = x} R_l\right)\Bigg] \\
    \Aboxed{&= N(N-1) \dpol(s) \dpol(x) \rp(x) \meanTD(s) + \ind_{s=x} N \dpol(s) \rp(s) \meanTD(s).}
\end{align*}
Also, the simplification of Eq. \eqref{Eq:TD-analytic-c8} is similar to that of Eq. \eqref{Eq:TD-analytic-c6}.
\begin{align*}
    \gamma &\Eseeds \Bigg[\left(\sum_{k=1}^{N} \ind_{S_k = s} \vtd_t(s) \right)\left(\sum_{l=1}^{N} \ind_{S_l = x} \vtd_t(X'_l)\right)\Bigg] \\
    \Aboxed{&= \gamma N(N-1) \dpol(s) \dpol(x) (\pp\SigmaTD_t)^T(s,x) + \gamma \ind_{s=x} N \dpol(s) (\pp\SigmaTD_t)^T(s,s).}
\end{align*}
We will simplify the final part i.e., Eq. \eqref{Eq:TD-analytic-c9}.
\begin{adjustwidth}{-2in}{-2in}
\begin{align*}
    &\Eseeds \Bigg[\left(\sum_{k=1}^{N} \ind_{S_k = s} \vtd_t(s) \right)\left(\sum_{l=1}^{N} \ind_{S_l = x} \vtd_t(x)\right)\Bigg] \\
    &= \Eseeds \left[\sum_{k=1}^{N}\sum_{\substack{l=1, \\ l\neq k}}^{N}\ind_{S_k = s} \ind_{S_l = x} \vtd_t(s) \vtd_t(x) \right] + \Eseeds \left[ \sum_{m=1}^{N} \ind_{S_m = s} \ind_{S_m = x} \vtd_t(s) \vtd_t(x)\right] \\
    &= \Eseeds \left[\sum_{k=1}^{N}\sum_{\substack{l=1, \\ l\neq k}}^{N}\E[\ind_{S_k = s} \ind_{S_l = x} \vtd_t(s) \vtd_t(x) | seed] \right] \\
    &\qquad \qquad+ \gamma \ind_{s=x} \sum_{m=1}^{N} \Eseeds[\E[\ind_{S_m = s} \ind_{S_m = x} \vtd_t(s) \vtd_t(x)|seed]], \\
    &= \Eseeds\left[\sum_{k=1}^{N}\sum_{\substack{l=1, \\ l\neq k}}^{N} \dhat(s) \dhat(x) \vtd_t(s) \vtd_t(x) \right] + \gamma \ind_{s=x} \sum_{m=1}^{N} \Eseeds[\dhat(s) \vtd_t(s) \vtd_t(s)], \\
    \Aboxed{&= N(N-1) \dpol(s) \dpol(x) \SigmaTD_t(s,x) + \gamma \ind_{s=x} N \dpol(s) \SigmaTD_t(s,s).}
\end{align*}
\end{adjustwidth}
We can put all these pieces together to get the desired result.
\begin{adjustwidth}{-2in}{-2in}
\begin{empheq}[box=\fbox]{align*}
    \SigmaTD_t(s, x) &= \SigmaTD_t(s, x) + \lr \OmegaTD_t(s, x) + \lr^2 \PsiTD_t(s, x), \\
    \OmegaTD_t(s, x) &= N \dpol(x) \left(\rp(x) \meanTD_t(s) + \gamma (\pp \SigmaTD_t) (x, s) - \SigmaTD_t(s, x) \right) \\
    &+ N \dpol(s) \left(\rp(s) \meanTD_t(x) + \gamma (\pp \SigmaTD_t) (s, x) - \SigmaTD_t(x, s) \right), \\
    \PsiTD_t(s, x) &= N(N-1)\dpol(s)\dpol(x)\rp(s)\rp(x) + \ind_{s=x} N \dpol(s) \left(c(s)+\rp^2(s)\right) \\
    &+ \gamma N (N-1) \dpol(s) \dpol(x) \rp(s) (\pp \meanTD_t)(x) + \gamma \ind_{s=x} N \dpol(s) \rp(s) (\pp \meanTD_t)(s) \\
    &- N (N-1) \dpol(s) \dpol(x) \rp(s) \meanTD_t(x) - \ind_{s=x} N \dpol(s) \rp(s)\meanTD_t(s) \\
    &+ \gamma N (N-1) \dpol(s) \dpol(x) \rp(x) (\pp \meanTD_t)(s) + \gamma \ind_{s=x} N \dpol(s) \rp(s) (\pp \meanTD_t)(s) \\
    &+ \gamma^2 N(N-1) \dpol(s) \dpol(x) (\pp\SigmaTD_t\pp^T)(s,x) + \gamma^2 \ind_{s=x} N \dpol(s) (\pp diag(\SigmaTD_t))(s) \\
    &- \gamma N(N-1) \dpol(s) \dpol(x) (\pp\SigmaTD_t)(s,x) - \gamma \ind_{s=x} N \dpol(s) (\pp\SigmaTD_t)(s,s) \\
    &- N(N-1) \dpol(s) \dpol(x) \rp(x) \meanTD_t(s) - \ind_{s=x} N \dpol(s) \rp(s) \meanTD_t(s) \\
    &- \gamma N(N-1) \dpol(s) \dpol(x) (\pp\SigmaTD_t)^T(s,x) - \gamma \ind_{s=x} N \dpol(s) (\pp\SigmaTD_t)^T(s,s) \\
    &+ N(N-1) \dpol(s) \dpol(x) \SigmaTD_t(s,x) + \ind_{s=x} N \dpol(s) \SigmaTD_t(s,s)
\end{empheq}
\end{adjustwidth}
We will now prove the final part of the theorem. The mean squared error $\MSETD_{t+1}$ can be decomposed into bias and covariance terms:
\begin{adjustwidth}{-2in}{-2in}
\begin{align*}
    \MSETD_{t+1} &= \sum_{s} (bias_{t+1}^2(s) + Cov_{t+1}(s,s)),\\
    &= \sum_{s} \left( (\vpol(s) - \meanTD_{t+1}(s)) + (\SigmaTD_{t+1}(s,s) - \meanTD_{t+1}(s) \meanTD_{t+1}(s)) \right), \\
    &= \sum_{s}  (\vpol^2(s) - 2 \vpol(s) \meanTD_{t+1}(s) + \SigmaTD_{t+1}(s,s)), \\
    &= \sum_{s}  \left(\vpol^2(s) - 2 \vpol(s) (\meanTD_t(s) + \lr \DeltaTD_t(s)) + \SigmaTD_t(s,s) + \lr \OmegaTD_t(s, s) + \lr^2 \PsiTD_t(s, s) \right), \\
    &= \sum_{s} \left(\vpol^2(s) - 2 \vpol(s) \meanTD_t(s) + \SigmaTD_t(s,s) \right) \\
    & \qquad \qquad \qquad+ \sum_{s}  \left(-2 \lr \vpol(s) \DeltaTD_t(s)) + \lr \OmegaTD_t(s, s) + \lr^2 \PsiTD_t(s, s) \right), \\
    \Aboxed{&= \MSETD_t + \lr \sum_{s \in \s}(-2\vpol(s) \DeltaTD_t(s) + \OmegaTD_t(s,s)) + \lr^2 \sum_{s \in \s} \PsiTD_t(s,s).}
\end{align*}
\end{adjustwidth}
\end{proof}

\begin{theorem*}
\label{App:mse-PT-thm}
Let $\Eseeds$ denote the expectation with respect to seeds. Let
\begin{align*}
    \meanPT_t(s) &= \Eseeds[\vpt_t(s)], \\
    Cov(\vpt_t(s), \vpt_t(x)) &= \SigmaPT_t(s, x) - \meanPT_t(s) \meanPT_t(x),
\end{align*} 
where $\SigmaPT_t(s, x) = \Eseeds[\vpt_t(s)\vpt_t(s)]$. Then:
\begin{align*}
\meanPT_{t+1}(s) &= \meanPT_t(s) + \lr N d_\pol(s) \DeltaPT_t(s), \\
\SigmaPT_{t+1}(s, x) &= \SigmaPT_t(s, x) + \lr \OmegaPT_t(s, x) + \lr^2 \PsiPT_t(s, x),
\end{align*}
and, the mean squared error $\MSEPT_t$ at timestep $t$ is
\begin{align*}
   \MSEPT_{t+1} &= \MSEPT_t + \lr \sum_{s \in \s}d_\pol(s)(-2\vpol(s) \DeltaPT_t(s) + \OmegaPT_t(s,s)) \\
   &+ \lr^2 \sum_{s \in \s}d_\pol(s) \PsiPT_t(s,s).
\end{align*}
\end{theorem*}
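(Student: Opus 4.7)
The plan is to reduce the claim directly to the TD-case computation of Theorem~\ref{App:mse-TD-thm} by exploiting the fact that $\vpt_t$ satisfies a TD-type recursion in its own right. Since $\PMpred$ is deterministic in this analysis and the transient update adds $\lr \delta_k$ to $\TMpred_t(S_k)$ for each of the $N$ samples in round $t$, adding $\PMpred$ on both sides gives
\begin{equation*}
\vpt_{t+1}(s) = \vpt_t(s) + \lr \sum_{k=1}^{N} \ind_{S_k=s}\bigl(R_k + \gamma \vpt_t(S'_k) - \vpt_t(s)\bigr),
\end{equation*}
which is structurally identical to the TD update driving Theorem~\ref{App:mse-TD-thm}, differing only in the (deterministic) initial condition $\vpt_0 = \PMpred$. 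This is Theorem~\ref{App:TD-PT-relation-thm} applied at the level of the iterates themselves, so no new fact about $\PMpred$ is needed beyond its role as an additive shift that does not interact with any randomness.

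Given this equivalence of update forms, I would define $\DeltaPT_t(s) = \rp(s) + \gamma(\pp \meanPT_t)(s) - \meanPT_t(s)$, and define $\OmegaPT_t$ and $\PsiPT_t$ by the same expressions as $\OmegaTD_t$ and $\PsiTD_t$ but with every occurrence of $\meanTD$ and $\SigmaTD$ replaced by $\meanPT$ and $\SigmaPT$. The first-moment recursion then follows by taking $\Eseeds$ on both sides of the display above, conditioning on the seed so that $\dhat$ factors cleanly out of the sample averages, and applying the same three elementary identities about $\Eseeds[\dhat(s)\,\cdot\,]$ used in the TD proof. The covariance recursion decomposes into the same three pieces (the analogues of \eqref{Eq:TD-analytic-a}, \eqref{Eq:TD-analytic-b}, \eqref{Eq:TD-analytic-c}), and the quadratic piece itself splits into the same nine sub-expressions; each is handled by identical algebra, since the manipulations relied only on i.i.d.\ sampling and linearity of expectation, both of which apply verbatim to $\vpt_t$.

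Finally, the MSE recursion follows from the bias-variance decomposition $\MSEPT_{t+1} = \sum_s \bigl(\vpol^2(s) - 2\vpol(s)\meanPT_{t+1}(s) + \SigmaPT_{t+1}(s,s)\bigr)$: substituting the already-derived recursions for $\meanPT_{t+1}$ and $\SigmaPT_{t+1}$ and grouping by power of $\lr$ yields the claimed formula, with the $d_\pol(s)$ factors coming from the $N d_\pol(s)$ weight attached to the transition operator that was absorbed into the definition of $\DeltaPT_t$, $\OmegaPT_t$, and $\PsiPT_t$ in the same way as for their TD analogues. There is no substantive obstacle here; the only thing that merits care is that $\vpt_t$ on the right-hand side of the one-step update must be evaluated at iteration $t$ (not $t+1$), so that all $N$ samples in a round share the same target estimate exactly as in the TD derivation, and that $\PMpred$ being deterministic keeps every independence and product-of-expectations step intact.
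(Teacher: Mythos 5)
Your proposal is correct and takes essentially the same route as the paper: the paper's own proof simply states that the argument is identical to the TD case and writes down the final expressions with $\meanTD,\SigmaTD$ replaced by $\meanPT,\SigmaPT$. Your explicit justification of the reduction --- that adding the fixed $\PMpred$ to the transient update makes $\vpt_t$ obey exactly the TD recursion, so every independence and product-of-expectations step carries over verbatim --- is precisely the content the paper leaves implicit, and is sound.
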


\begin{proof}
The proof is similar to that of TD learning algorithm. We have:
\begin{adjustwidth}{-2in}{-2in}
\begin{empheq}[box=\fbox]{align*}
    \meanPT_{t+1}(s) &= \meanPT_t(s) + \TMlr N \dpol(s) (\rp(s) + \gamma (\pp \meanPT_t)(s) - \meanPT_t(s)), \\
    \SigmaPT_t(s, x) &= \SigmaPT_t(s, x) + \lr \OmegaPT_t(s, x) + \lr^2 \PsiPT_t(s, x), \\
    \OmegaPT_t(s, x) &= N \dpol(x) \left(\rp(x) \meanPT_t(s) + \gamma (\pp \SigmaPT_t) (x, s) - \SigmaPT_t(s, x) \right) \\
    &+ N \dpol(s) \left(\rp(s) \meanPT_t(x) + \gamma (\pp \SigmaPT_t) (s, x) - \SigmaPT_t(x, s) \right), \\
    \PsiPT_t(s, x) &= N(N-1)\dpol(s)\dpol(x)\rp(s)\rp(x) + \ind_{s=x} N \dpol(s) \left(c(s)+\rp^2(s)\right) \\
    &+ \gamma N (N-1) \dpol(s) \dpol(x) \rp(s) (\pp \meanPT_t)(x) + \gamma \ind_{s=x} N \dpol(s) \rp(s) (\pp \meanPT_t)(s) \\
    &- N (N-1) \dpol(s) \dpol(x) \rp(s) \meanPT_t(x) - \ind_{s=x} N \dpol(s) \rp(s)\meanPT_t(s) \\
    &+ \gamma N (N-1) \dpol(s) \dpol(x) \rp(x) (\pp \meanPT_t)(s) + \gamma \ind_{s=x} N \dpol(s) \rp(s) (\pp \meanPT_t)(s) \\
    &+ \gamma^2 N(N-1) \dpol(s) \dpol(x) (\pp\SigmaPT_t\pp^T)(s,x) + \gamma^2 \ind_{s=x} N \dpol(s) (\pp diag(\SigmaPT_t))(s) \\
    &- \gamma N(N-1) \dpol(s) \dpol(x) (\pp\SigmaPT_t)(s,x) - \gamma \ind_{s=x} N \dpol(s) (\pp\SigmaPT_t)(s,s) \\
    &- N(N-1) \dpol(s) \dpol(x) \rp(x) \meanPT_t(s) - \ind_{s=x} N \dpol(s) \rp(s) \meanPT_t(s) \\
    &- \gamma N(N-1) \dpol(s) \dpol(x) (\pp\SigmaPT_t)^T(s,x) - \gamma \ind_{s=x} N \dpol(s) (\pp\SigmaPT_t)^T(s,s) \\
    &+ N(N-1) \dpol(s) \dpol(x) \SigmaPT_t(s,x) + \ind_{s=x} N \dpol(s) \SigmaPT_t(s,s)
\end{empheq}
\end{adjustwidth}
The MSE recursive equation can be obtained similar to that of TD expression. Therefore, we skip the details and give the final expression.
\begin{align*}
    \Aboxed{\MSEPT_{t+1} &= \MSEPT_t + \TMlr \sum_{s \in \s}(-2\vpol(s) \DeltaPT_t(s) + \OmegaPT_t(s,s)) + \lr^2 \sum_{s \in \s} \PsiPT_t(s,s).}
\end{align*}
\end{proof}

\section{Experiments}
\label{app:experiments}

Besides providing details for the experiments described in the main paper, we answer additional interesting questions, such as:
\begin{itemize}
    \item What is the effect of task distribution on the performance for various methods (Sec. \ref{app:task-dist-experiment})?
    \item What is the effect of network capacity on the performance (Sec. \ref{app:capacity-experiment})?
    \item What is the effect of hyperparameters on the performance (Sec. \ref{app:hp-experiment})?
\end{itemize}

\subsection{Additional Details}
\label{app:additional-exp-details}

We begin by providing details of the experiments.

\subsubsection{Policy evaluation grids}
\label{app:additional-exp-details-pe-grids}

\begin{table}[htb]
    \centering
    \begin{tabular}{ccccc}
    \hline
    \multirow{2}{*}{Task ID} & \multicolumn{4}{c}{Reward}                        \\ 
                             & Top-Left & Top-Right & Bottom-Left & Bottom-Right \\ \hline
    1                        & 0        & 1         & 0           & 1            \\
    2                        & 1        & 0         & 1           & 0            \\
    3                        & 0        & 0         & 1           & 1            \\
    4                        & 1        & 1         & 0           & 0           \\
    \hline
    \end{tabular}
    \caption{Rewards at goal states for various tasks.}
    \label{tab:PE-tabular-goal-rewards}
\end{table}

\begin{figure}[htb]
    \centering
    \includegraphics[width=\textwidth, height=3.5cm]{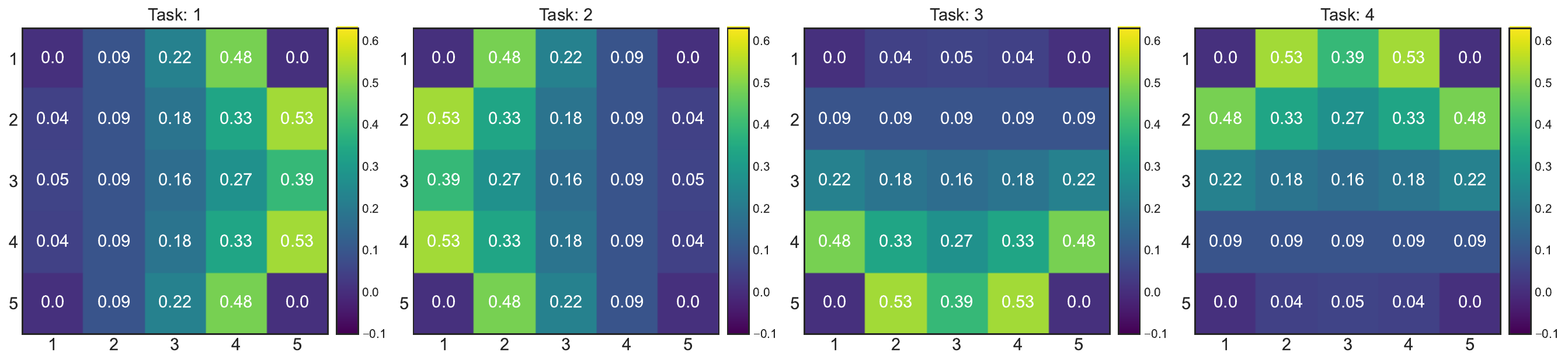}
    \caption{Heatmap of the true value function for different tasks.}
    \label{fig:PE-tabular-true-vf-heatmap}
\end{figure}

\begin{figure}[htb]
    \centering
    \includegraphics[width=\textwidth, height=3.5cm]{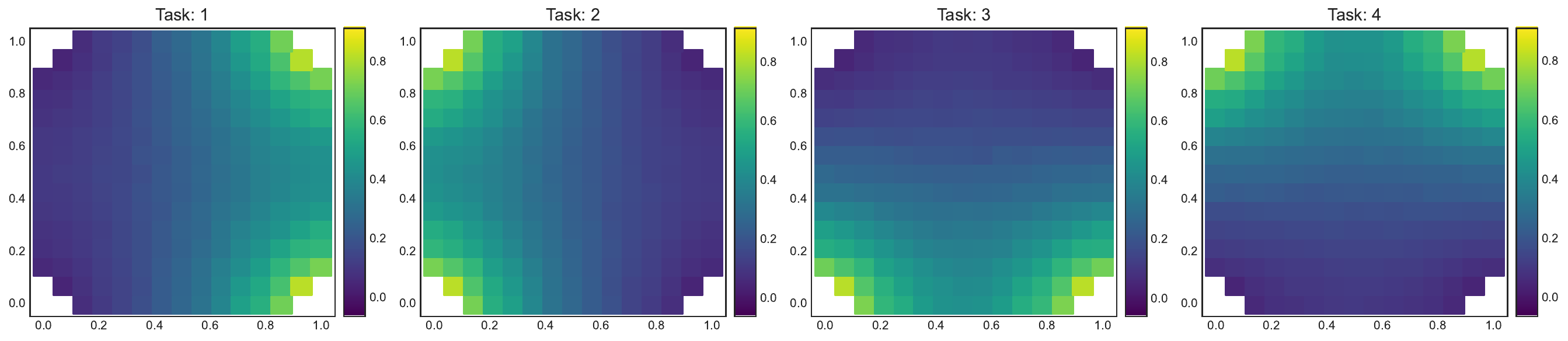}
    \caption{Heatmap of the true value function for different tasks.}
    \label{fig:PE-linear-cgrid-true-vf-heatmap}
\end{figure}

We modify the rewards of the goal states as described in Table \ref{tab:PE-tabular-goal-rewards}. The true value function for each task is shown in Fig. \ref{fig:PE-tabular-true-vf-heatmap}. We convert the 5x5 discrete grid into a 48x48 RGB image for the deep RL task. The agent's location is indicated by colouring the its location in red and we use green squares to indicate goal states. At every timestep, the agent receives a RGB image as an input. The action space and the action consequences are the same as discrete grid. We select best hyperparameters (learning rates) for each algorithm by choosing those values which results in the lowest area under the error curves as shown in Table \ref{tab:PE-tabular-HP-LRs} and Fig. \ref{fig:PE-tabular-HP-tuning}. The hyperparameter search and the final reporting is performed by averaging the results on 30 different seeds. For our method, the buffer stores the samples from the latest task only to update the permanent value function.

For the first linear FA experiment, we use the same setup as the tabular setting except we encode each state in the grid as a vector of 10 bits. The first 5 bits encodes the row of the state and the last 5 bits encodes the column of the state. For example, the starting state (2,2) is represented as [0,0,1,0,0,0,0,1,0,0]. Both permanent value function and transient value function use same features to learn respective estimates. 

For the second linear FA experiment, we use a continuous version of the grid. The goal rewards for each task is same as the discrete counterpart as described in Table \ref{tab:PE-tabular-goal-rewards}. The true value function is also different for each task and it is shown as a heatmap in Fig. \ref{fig:PE-linear-cgrid-true-vf-heatmap}. We use RBFs to generate features as described in \citet{sutton2018reinforcement}. We use 676 centers (order$^2$) spaced equally in both x and y directions across the grid with a variance $(0.75/(order-1))^2$ and use a threshold to convert the continuous vector into a binary feature vector. Any feature value greater than 0.5 is set to 1, while those features with values less than 0.5 is set to 0. We search for the best learning rate from the same set of values as the discrete grid task. The hyperparameter curves are shown in Fig. \ref{fig:PE-linear-CG-HP-tuning}.

For the deep RL experiment, the rewards are multiplied by a factor of 10 to minimize the initialization effect of deep neural network on prediction. We used a convolution neural network (CNN) to approximate the value function for all algorithms tested. The CNN consists of two conv layers with 8 filters, (2,2) stride, and relu activation in each layer. We use a max pooling layer in between the two conv layers with a stride 2. The output of the second conv layer is flatted, and it is followed by a couple of linear layers. The fully connected network has 64 and 1 hidden units in the respective layers. The first fully connected layer uses a tanh activation and no activation is used in the final layer. For our method, we attach a new head after the first two conv layers for permanent value function. This head also has two linear layers where the first layer has tanh activation. The entire network is trained by backpropagating the errors from the transient value function. But, the conv layers are not trained when updating the permanent value function. The transient value function and the TD learning algorithm are trained in an online fashion using an experience replay buffer of size 100k. The buffer is reset at the end of the task for all methods to reduce the bias from old samples. permanent value function is trained at the end of the task using the samples from the latest task. 100 gradient steps are taken during each update cycle. We use SGD optimizer to update its weights. The best learning rate is picked by computing the area under the error curve on a set of values shown in Table \ref{tab:PE-DNN-Igrid-HP-LRs}. The mean AUC curves of various choices of learning rates are shown in Fig. \ref{fig:PE-DNN-IG-HP-tuning}.

\begin{figure}[H]
    \centering
    \includegraphics[width=0.68\textwidth]{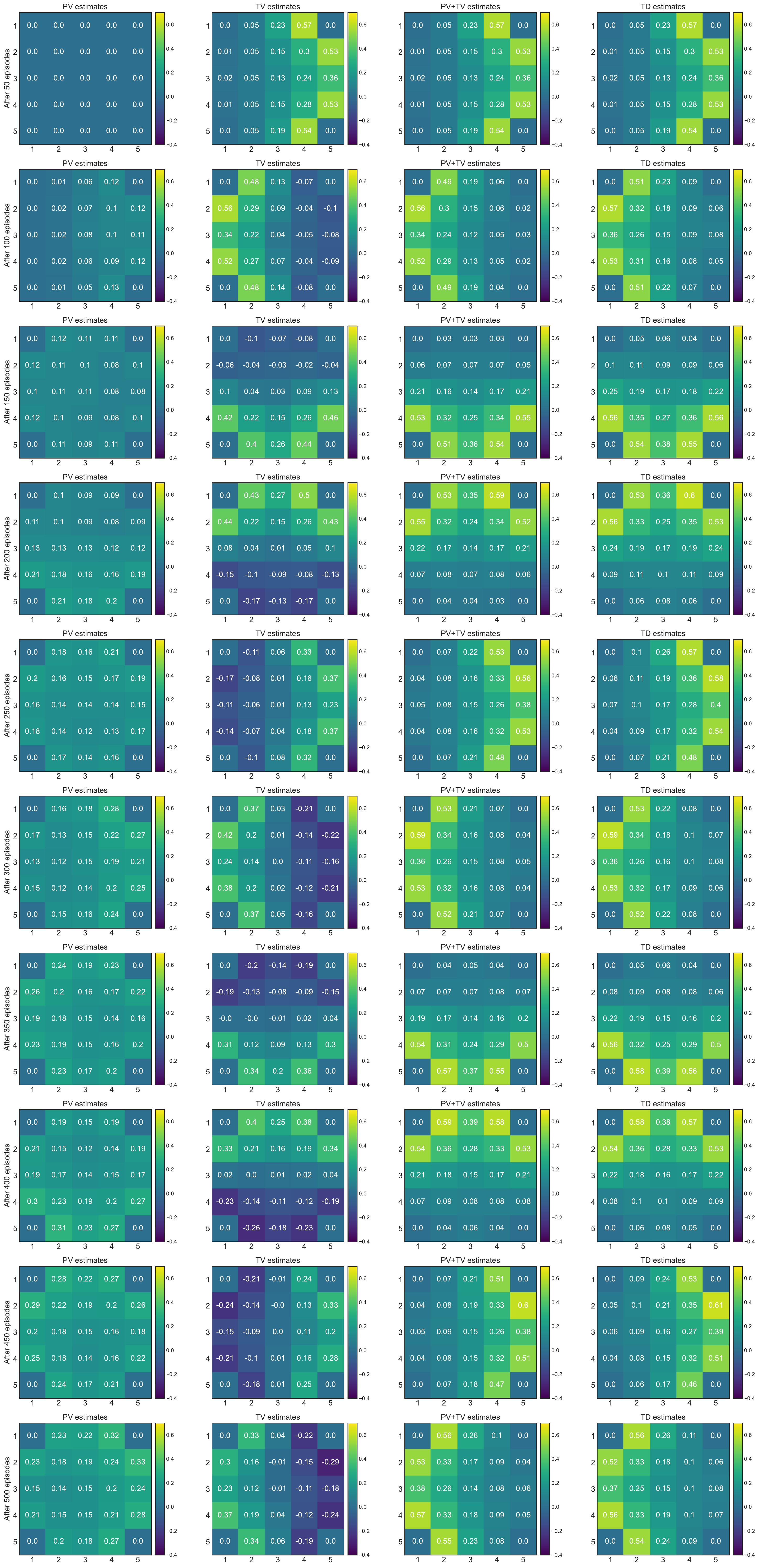}
    \caption{Predictions evolution across episodes.}
    \label{fig:PE-tabular-Pred-evol}
\end{figure}

\subsubsection{Policy evaluation minigrid}
\label{app:additional-exp-details-pe-minigrid}

We use 4 rooms task as shown in Fig. \ref{fig:DPE-minigrid-env}. The environment is a grid task with each room containing one type of item. Each item type is different in terms of how it is perceived and in terms of reward the agent receives upon collecting it. There are 4 goal cells, one in each room located at the corner. At the beginning of the episode, the agent starts in one of the cells within the region spanning between cells (3,3) and (6,6). It has 3 actions to chose from turn left, turn right, and move forward. Turn-based actions changes the orientation of the agent along the corresponding direction, whereas the move action changes the agent's location to the state in front of it. The agent can only observe a 5x5 view in front of it (partial view). Items are one-hot represented along the 3rd dimension to make the overall input space 5x5x5. To set up the continual learning problem, we change the rewards of the items from task-to-task as described in Table \ref{tab:PE-DNN-item-rewards}. The evaluation policy is kept uniformly random for all states and for all the tasks. We use a discount factor of 0.9. We evaluate the performance for a total of 750 episodes and the tasks are changes after every 75 episodes.

\begin{table}[htb]
    \centering
    \begin{tabular}{ccccc}
    \hline
    \multirow{2}{*}{Task ID} & \multicolumn{4}{c}{Reward}                        \\ 
                             & Blue (\tikz\draw[blue, fill=blue] (0,0) circle (.5ex);) & Red (\tikz\draw[red, fill=red] (0,0) circle (.5ex);) & Yellow(\tikz\draw[yellow, fill=yellow] (0,0) circle (.5ex);) & Purple (\tikz\draw[violet, fill=violet] (0,0) circle (.5ex);) \\ \hline
    1                        & -1        & 1         & 1           & 1            \\
    2                        & 1        & 1         & 0.5           & 1            \\
    3                        & 1        & 1         & 1           & -1            \\
    4                        & -1        & 1         & 0.5           & -1           \\
    \hline
    \end{tabular}
    \caption{Rewards for items for various tasks.}
    \label{tab:PE-DNN-item-rewards}
\end{table}

We used a convolution neural network (CNN) to approximate the value function for all algorithm. The CNN consists of three conv layers with 16, 32, and 64 filters, (2,2) stride and relu activation in each layer. The output of the third conv layer is flatted, which is then followed by a couple of linear layers with 64 and 1 hidden unit. Tanh activation in the first linear layer and no activation is used in the last layer. We attach a head after conv layers for permanent value function. The architecture of this head is same as the TD learning head. The training method and buffer capacities are same as the discrete image grid task. We use SGD optimizer to update weights in all networks. The best learning rate is picked by computing the area under the error curve on a set of values shown in Table \ref{tab:PE-DNN-minigrid-HP-LRs} and Fig. \ref{fig:PE-DNN-minigrid-HP-tuning}. All the results reported are averaged across 30 seeds.

\subsubsection{Control minigrid}
\label{app:additional-exp-details-control-minigrid}

We use two room environment as shown in Fig. \ref{fig:control-gym-minigrid-env}. The agent starts in one of the bottom row states in the bottom room and there are two goal states located in the top room. In the first task, the goal rewards are +5 and 0 for the first and second goals respectively, and it is flipped for the second task. We alternate between two tasks every 500 episodes for a total of 2500 episodes. The agent receives a one hot, 5x5 view (partial observability) in front of it at each timestep. It can take one of the three actions (move forward, turn left, turn right) at every step. The \textit{move} action transitions the agent to the state in front of it if there is no obstacle. The environment is deterministic and episodic, and we use a discount factor of 0.99.

We use a CNN as a function approximator. The CNN has 3 conv layers with 16, 32, and 64 filters, (2,2) stride, and relu activations. It is followed by a fully connected neural network with 128, 64, and 4 hidden units. The first layer is relu activated, the second layer is tanh activated, and the final layer is linear. We use target network and experience replay to train the network. Target network's weights are updated every 200 timesteps. Experience replay buffer's capacity is capped to 100k and the stored samples are deleted when the task changes. We use batch size of 64 and update the network every timestep. We search for the best learning rate from a set of value shown in Table \ref{tab:control-gym-minigrid-lrs} on 3 seeds and we pick the one that gave the highest area under the rewards curve. We then run on 30 seeds using that learning rate to report the final performance. The transient value function training procedure and details are identical to DQN. permanent value function uses a separate head following conv layers and is updated using Adam optimizer whenever the task is changed.

\subsubsection{PT-DQN Pseudocode}
\label{app:PT-DQN-Pseudocode}

\begin{algorithm}[H]
\caption{PT-DQN Pseudocode (Continual Reinforcement Learning)}
\begin{algorithmic}[1]
    \STATE Initialize transient network buffer $\mathcal{B}$, permanent network buffer $\mathcal{D}$
    \STATE Initialize permanent network $\PMparams$, transient network $\TMparams$
    \STATE Initialize target network $\TMparams_{target}$
    \FOR{$t: 0 \to \infty$}
        \STATE Take action $A_t$ according to $\epsilon$-greedy policy
        \STATE Observe $R_{t+1}, S_{t+1}$
        \STATE Store ($S_t$, $A_t$, $R_{t+1}, S_{t+1}$) in $\mathcal{B}$
        \STATE Store ($S_t$, $A_t$, $\PMcontrol(S_t, A_t)$) in $\mathcal{D}$
        \STATE Sample a batch of transitions $(S_j, A_j, R_{j+1}, S_{j+1})$ from $B$
        \STATE Compute target \\
        $y = R_{j+1} + \gamma \max_{a' \in \act} \left(\PMcontrol(S_{j+1}, a'; \PMparams) + \TMcontrol(S_{j+1}, a'; \TMparams_{target})\right) - \PMcontrol(S_j, A_j; \PMparams)$
        \STATE Perform a gradient step on $\left(y - \TMcontrol(S_j, A_j; \TMparams)\right)^2$ with respect to $\TMparams$
        \STATE Update target network every $N$ steps, $\TMparams_{target} = \TMparams$
        \IF{$mod(t, k) == 0$}
            \STATE Sample a batch of transitions $(S_j, A_j, \PMcontrol(S_j, A_j))$ from $D$
            \STATE Compute target $\hat{y} = \PMcontrol(S_j, A_j) + \TMcontrol(S_j, A_j; \TMparams)$
            \STATE Perform a gradient step on $\left(\hat{y} - \PMcontrol(S_j, A_j; \PMparams)\right)^2$ with respect to $\PMparams$
            \STATE Decay transient network weights, $\TMparams = \lambda \TMparams$
            \STATE Clear permanent network buffer, $D = \{\}$
        \ENDIF
    \ENDFOR
\end{algorithmic}
\label{algo:PT-DQN}
\end{algorithm}

\subsubsection{JellyBeanWorld (JBW)}
\label{app:additional-exp-details-jbw}

\begin{table}[htb]
    \centering
    \begin{tabular}{|ll|l|l|l|}
    \hline
    Item && Blue (\tikz\draw[blue, fill=blue] (0,0) circle (.5ex);) & Green (\tikz\draw[green, fill=green] (0,0) circle (.5ex);) & Red (\tikz\draw[red, fill=red] (0,0) circle (.5ex);) \\
    \hline
    Color && [0, 0, 1.0] & [0, 1.0, 0] & [1.0, 0, 0.0]\\
    Intensity && Constant [-2.0] & Constant [-6.0] & Constant [-2.0] \\
    \multirow{3}{*}{Interactions} & \tikz\draw[blue, fill=blue] (0,0) circle (.5ex);& Piecewise box [2, 100, 10, -5] & Zero & Piecewise box [150, 0, -100, -100]\\
    & \tikz\draw[green, fill=green] (0,0) circle (.5ex);& Zero & Zero & Zero\\
    & \tikz\draw[red, fill=red] (0,0) circle (.5ex);& Piecewise box [150, 0, -100, -100] & Zero & Piecewise box [2, 100, 10, -5]\\
    \hline
    \end{tabular}
    \caption{JellyBeanWorld environment parameters.}
    \label{tab:JBW-env-details}
\end{table}

\begin{figure}[htb]
    \centering
    \begin{subfigure}{0.4\textwidth}
        \centering
         \includegraphics[width=2.8cm, height=2.8cm]{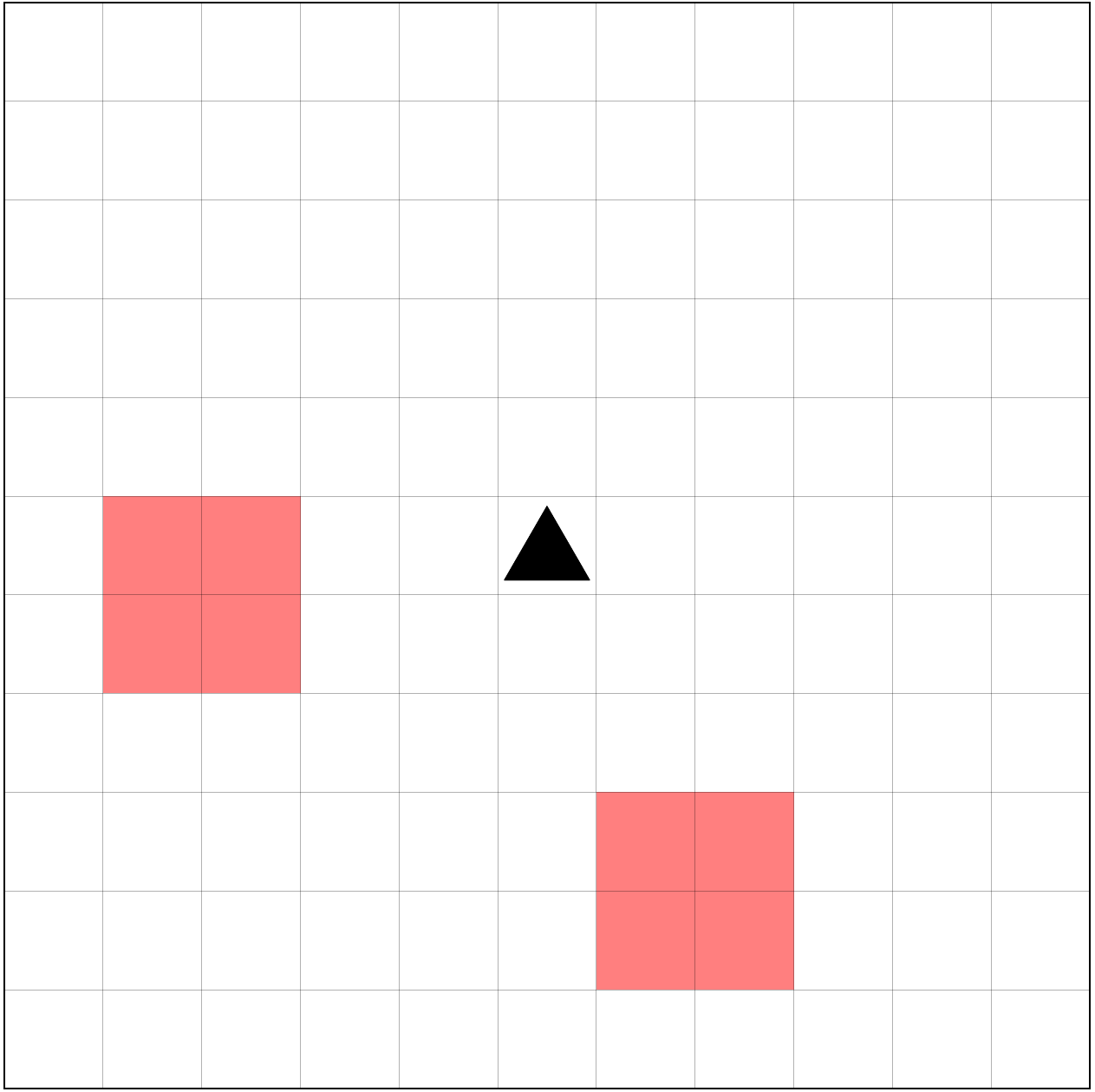}
         \caption{Sample observation received by the agent.}
         \label{fig:jbw-sample-obs}
    \end{subfigure}
    \hfill
     \begin{subfigure}{0.58\textwidth}
         \centering
         \includegraphics[height=2.8cm, width=6cm]{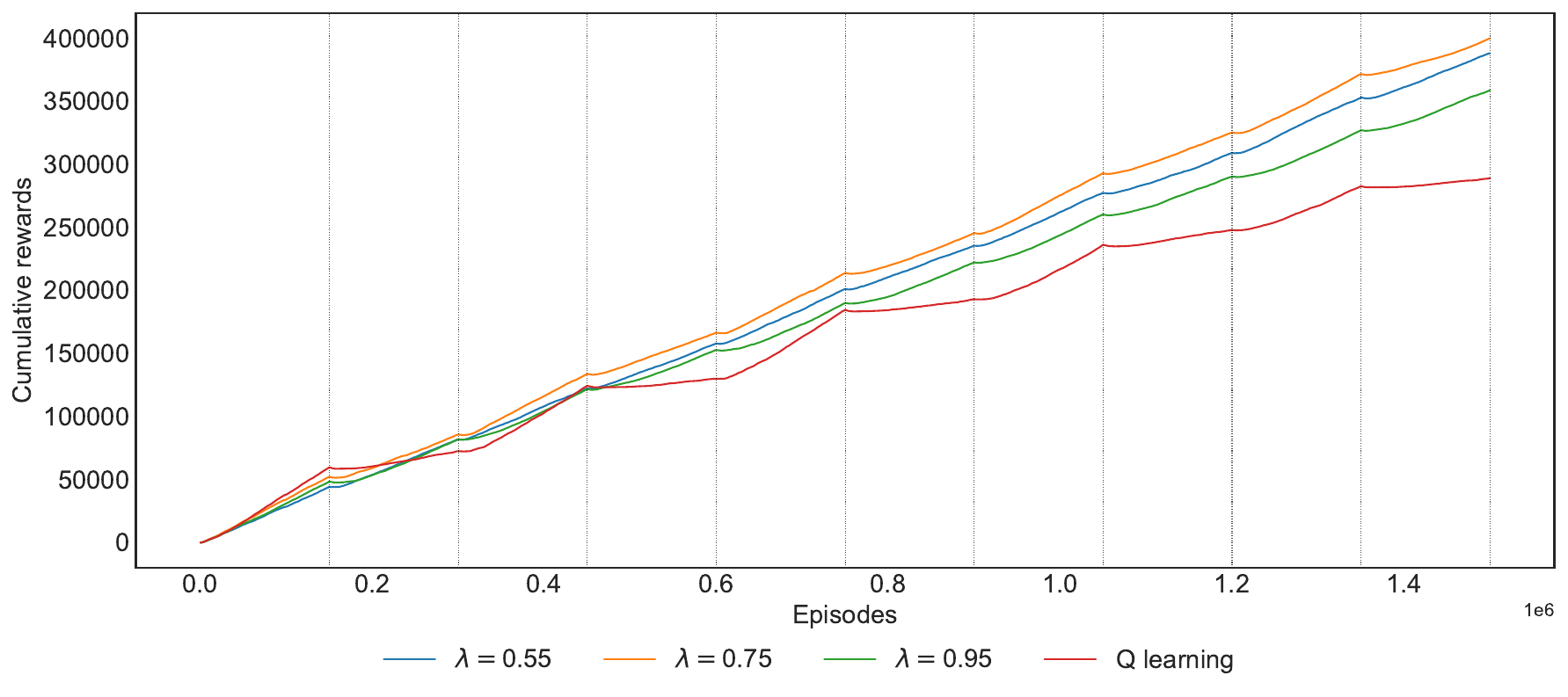}
         \caption{Cummulative rewards against timesteps averaged across 3 seeds.}
         \label{fig:jbw-decay-perf}
     \end{subfigure}
     \caption{JellyBeanWorld sample observation and performance curves for various decay values.}
    \label{fig:jbw-obs-decay-perf}
\end{figure}

\begin{figure}[htb]
    \centering
    \includegraphics[width=0.7\columnwidth]{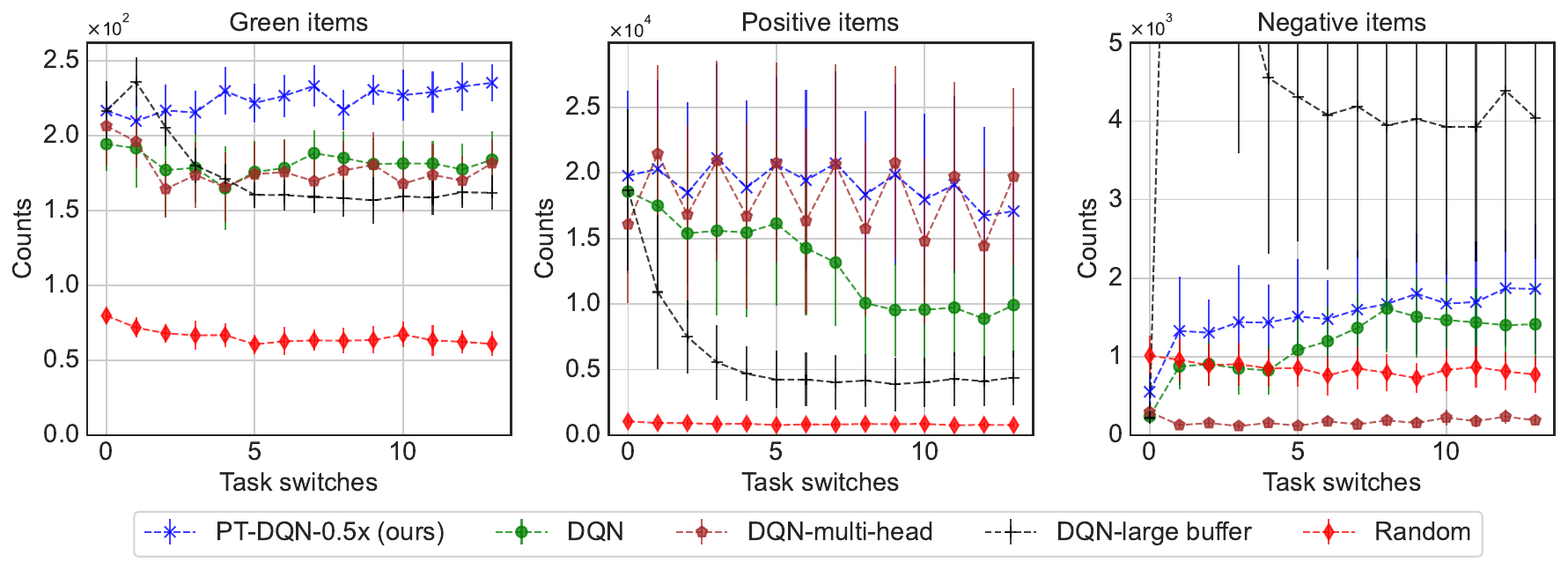}
    \caption{Items collected by various algorithms during each task switches.}
    \label{fig:jbw-PT-mem-half-analysis}
\end{figure}

We test algorithms on a continual learning problem in the JBW testbed \cite{platanios2020jelly} and use neural networks as the function approximator. The environment is a two-dimensional infinite grid with three types of items: blue, red, and green. We induce spatial non-stationarity by setting the parameters of the environment as described in Table \ref{tab:JBW-env-details} . In our configuration, three to four similar items (red or blue) form a tiny group and similar groups appear close to each other to form a sea of red (or blue) objects as shown in Fig. \ref{fig:JBW-env}. Therefore, the local observation distribution at various parts of the environment are different. At each timestep, the agent receives a egocentric 11x11 RGB, 360$^o$ view as an observation as shown in Fig. \ref{fig:jbw-sample-obs}. It has four actions to choose from: up, down, left, and right. Each action transitions it by one square along the corresponding direction. The rewards for green items, which are uniformly distributed, are set to $0.1$, whereas the rewards for picking red and blue items alternate between -1 and +2 every 150k timesteps, therefore, inducing reward non-stationarity. We run the experiment for 2.1 million timesteps and train algorithms using a discount factor 0.9.

The DQN agent uses four layered neural network with relu activations in all layers except the last to estimate q-values. The number of hidden units are 512, 256, 128, and 4 respectively. The network is trained using Adam optimizer with experience replay and a target network. We use epsilon-greedy policy with $\epsilon=0.1$ for exploration. The permanent and transient network architectures are identical to that of DQN but with half the number of parameters in each layer. This ensures that the total number of parameters for the baselines and our method is same. Transient network is also trained using Adam optimizer, whereas, permanent network is updated every 10k timesteps ($k$=10,000) using SGD optimizer. Target network's weights are updated every 200 timesteps. Experience replay buffer capacity is capped to 8000 as higher capacity buffer stores old training samples which affects the training when the environment is changing continuously. We use batch size of 64 and update the network every timestep. For our approach, we experimented with three choices of $\lambda$ values (0.55, 0.75, and 0.95) and all the choices resulted in a better performance than the DQN agent as seen in Fig. \ref{fig:jbw-decay-perf}. We use the same exploration policy as the DQN agent. We flatten the observation into one long array, then stack the last four observation arrays to form an input to the neural network. The best learning rate for all algorithms are chosen by comparing the total reward obtained in 1.5 million timesteps for a set of fixed learning rate values on 3 different seeds. We search for the best learning rate from a set of value shown in Table \ref{tab:JBW-CL-HP-LRs} on 3 seeds and we pick the one that gave the highest rewards. We then run on 30 seeds using that learning rate to report the final performance. The random agent picks one of the four actions according to uniform probability at each timestep. For the multi-head DQN baseline, we attach two output heads - one for each task. The baseline knows the task IDs, so it chooses the appropriate head to select actions and to train the network. The replay buffer is reset for this baseline whenever the task is changed. We use a buffer capacity of 250k for the DQN baseline that is augmented with a large buffer. The rest of the training and architecture details are same as the DQN agent.

\subsubsection{MinAtar}
\label{app:additional-exp-details-minatar}

This domain is a standard benchmark for testing single task RL algorithms \cite{young2019minatar}. We use breakout, freeway, and space invaders environments to setup a continual RL problem by randomly picking one of these tasks every 500k steps for a total of 3.5M steps. We standardize the observation space across tasks by padding 0s when needed and make all the actions available for the agent. This problem has non-stationarity in transitions besides the reward and observation signals making it challenging. We use a discount factor of 0.99 for this experiment.

We used a CNN which has 1 relu activated conv layer with 16 filters followed by a relu activated linear layer with 256 units, and a linear output layer for DQN. The architecture is same as the one used in the original paper \cite{young2019minatar}. The architecture is same for other DQN variants but we add 3 output layers for the multi-headed DQN. For our method, we use the same architecture but 8 filter in the conv layer and 128 units in the linear layer. Target network is used for all the methods which is updated every 1000 steps and we use an experience replay of 100k for all methods but we use 500k sized experience replay for large replay based DQN baseline. We use Adam optimizer to train the networks in online fashion. For our method, permanent value function network is updated using SGD optimizer every 50k steps. The best hyperparameters are chosen by running the experiment for 1.5M steps by switching the tasks every 500k steps on 3 seeds. The results of hyperparameter tuning are detailed in Tables \ref{tab:MinAtar-CL-HP-LRs-PTMem}, \ref{tab:MinAtar-CL-HP-LRs-DQN}, \ref{tab:MinAtar-CL-HP-LRs-DQN-MH}, and \ref{tab:MinAtar-CL-HP-LRs-DQN-LB}. We make all 7 actions available to the agent at all times and the input to the agent is 10x10x7 images.

\subsection{What is the effect of task distribution on the performance for various methods?}
\label{app:task-dist-experiment}

\begin{figure}[htb]
    \centering
    \includegraphics[width=0.7\columnwidth]{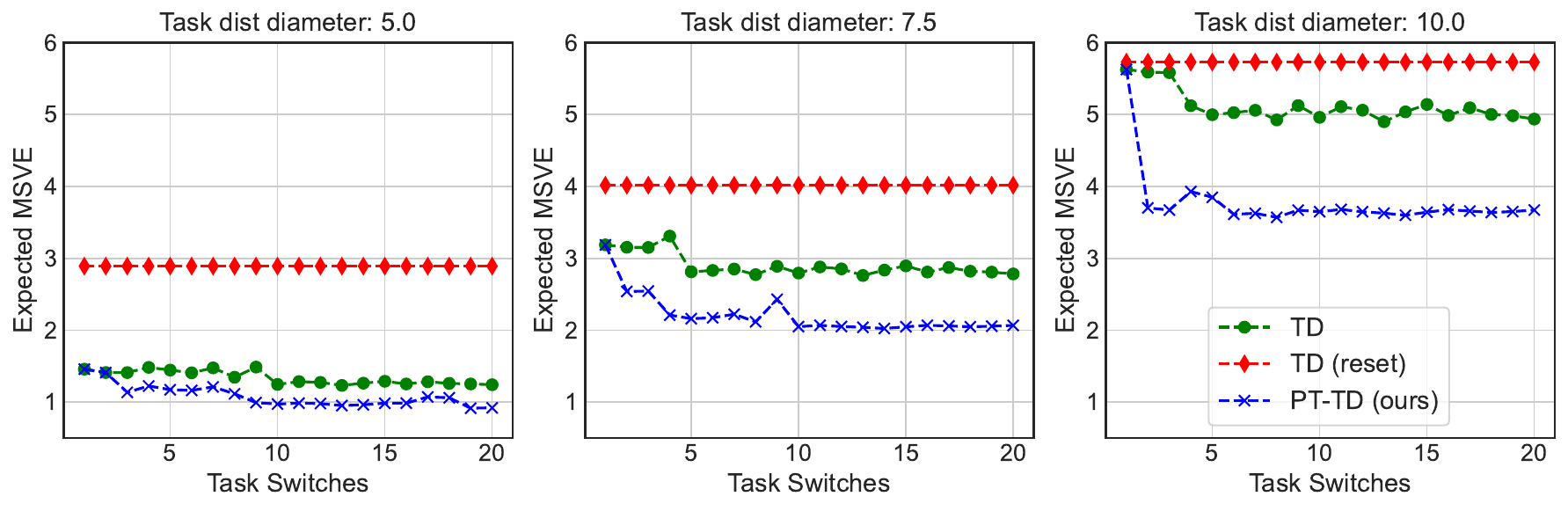}
    \caption{Expected analytic MSVE on a task for various algorithms and $\epsilon$ values.}
    \label{fig:analytic-msve-perf}
\end{figure}

\textbf{Setup:} We set up a continual learning problem by creating 7 tasks whose value functions are $\epsilon$ (task diameter) distance from one another in norm-2. Each task has 7 states. For a particular $\epsilon$ value, for a particular number of task switches, we run each algorithm 100 times and save the final estimates which form the initial point for the new task. We then sample a task according to uniform distribution and use analytical MSE expression derived in the previous section to analyze the behaviour of PT-TD learning and the two versions of TD learning algorithm for 100 rounds. We then average the error across rounds and across different starting values to get a point in the plot shown in Fig. \ref{fig:analytic-msve-perf}.

\textbf{Task details:} We use the following $\pp$ for all tasks and for all task diameters ($\epsilon$):
\begin{align*}
    \pp &= \begin{bmatrix}
                0.18 & 0.19 & 0.08 & 0.16 & 0.1 & 0.11 & 0.18 \\
                0.04 & 0.05 & 0.01 & 0.0 & 0.4 & 0.0 & 0.51 \\
                0.0 & 0.2 & 0.0 & 0.05 & 0.28 & 0.06 & 0.41 \\
                0.17 & 0.14 & 0.09 & 0.26 & 0.15 & 0.19 & 0.01 \\
                0.16 & 0.16 & 0.16 & 0.24 & 0.17 & 0.0 & 0.11 \\
                0.25 & 0.02 & 0.24 & 0.24 & 0.08 & 0.05 & 0.11 \\
                0.0 & 0.25 & 0.19 & 0.44 & 0.05 & 0.0 & 0.07
            \end{bmatrix}
\end{align*}

We then sample 7 points in $\mathbb{R}^n$ that are $\epsilon$ units from one another. These points form the true value function of the tasks. We then compute the reward vector for each task $\tau$ as $\rp(\tau) = (I - \gamma \pp) \vpol (\tau)$. At the time of testing the algorithms, we add a zero mean Gaussian noise with 0.1 standard deviation to these rewards to simulate stochasticity. We search from the learning rates shown in Table \ref{tab:analytic-mse-HP-LRs} to pick the best one for each combination of $\epsilon$ and number of task switches. We use a learning rate of 0.01 while using analytical expressions.

\textbf{Observation:} We observe both TD learning algorithm and our approach has the same performance after seeing data from one task. But the error gap increases as the number of task switches increase. The low error in our method can be attributed to the presence of permanent value function which captures some portion of the value that is common for all tasks. transient value function then learns the remaining part with a small number of samples resulting in low error overall. We also observe that the error gap between TD learning and our method gets wider as we increase task diameter, which demonstrates that our method is broadly applicable. The reset variant of the TD learning algorithm whose predictions are reset whenever task changes has the largest error emphasizing learning from scratch could be costly.

\begin{table}[htb]
    \centering
    \begin{tabular}{cc}
    Algorithm & LRs \\ \hline
    TD (without reset) & \{8e-1 5e-1 3e-1 1e-1 5e-2 1e-2\} \\
    PT-TD & PV: \{5e-1 3e-1 1e-1 5e-2 1e-2 5e-3\} \\
                & TV: \{8e-1 5e-1 3e-1 1e-1 5e-2 1e-2\} \\
    \hline
    \end{tabular}
    \caption{LRs considered to pick the best one.}
    \label{tab:analytic-mse-HP-LRs}
\end{table}

\subsection{What is the effect of network capacity on the performance?}
\label{app:capacity-experiment}

\textbf{Setup:} We use JBW task in the continual RL setting to perform the ablation study. We run two additional experiments by varying the number of parameters in DNN for the DQN and our method. In the first experiment, we use the same DNN architecture for all algorithms, therefore, our method has twice the number of parameters as that of DQN. In the second experiment, we double the network capacity for the DQN variant making the total number of parameters same as ours. We keep the rest of the experimental setup same as before and we pick the best learning rate based on the AUC by searching over the values detailed in Table \ref{tab:JBW-CL-HP-LRs-ablation} over 3 seeds for 1.5M steps.

\begin{table}[H]
\centering
    \begin{tabular}{ll}
    Algorithm & LRs \\ \hline
    DQN (2x) & \{1e-2, 1e-3, 1e-4, 1e-5, 1e-6, 1e-7\} \\
    PT-DQN & PV: \{5e-1 3e-1 1e-1 5e-2 1e-2 5e-3\} \\
                & TV: \{8e-1 5e-1 3e-1 1e-1 5e-2 1e-2\} \\
    \hline
    \end{tabular}
    \caption{Search space of learning rates.}
    \label{tab:JBW-CL-HP-LRs-ablation}
\end{table}

\begin{figure}[ht]
    \centering
    \begin{subfigure}{0.48\textwidth}
        \centering
        \includegraphics[width=0.95\columnwidth, height=2.5cm]{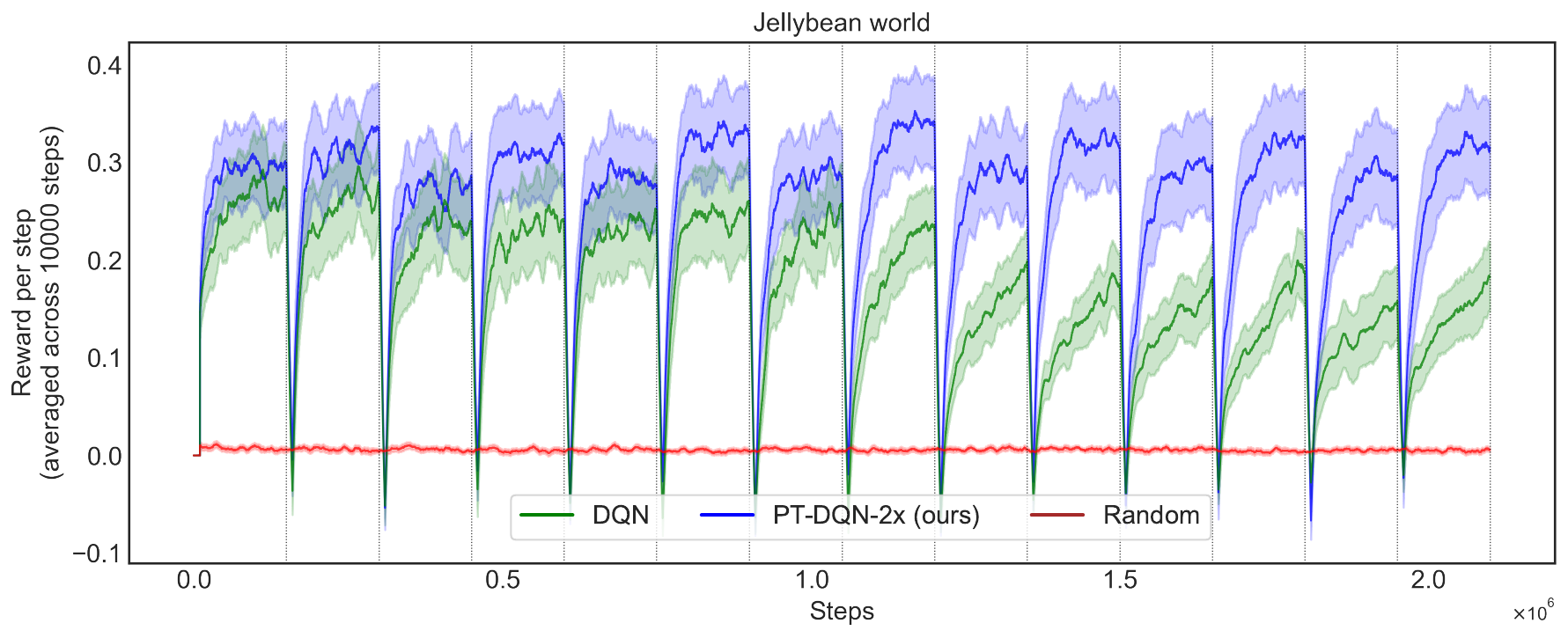}
        \caption{Results on the JellyBeanWorld.}
        \label{fig:CL-DNN-JBW-perf-ablation-a}
    \end{subfigure}
    \hfill
     \begin{subfigure}{0.48\textwidth}
        \centering
        \includegraphics[width=0.95\columnwidth, height=2.5cm]{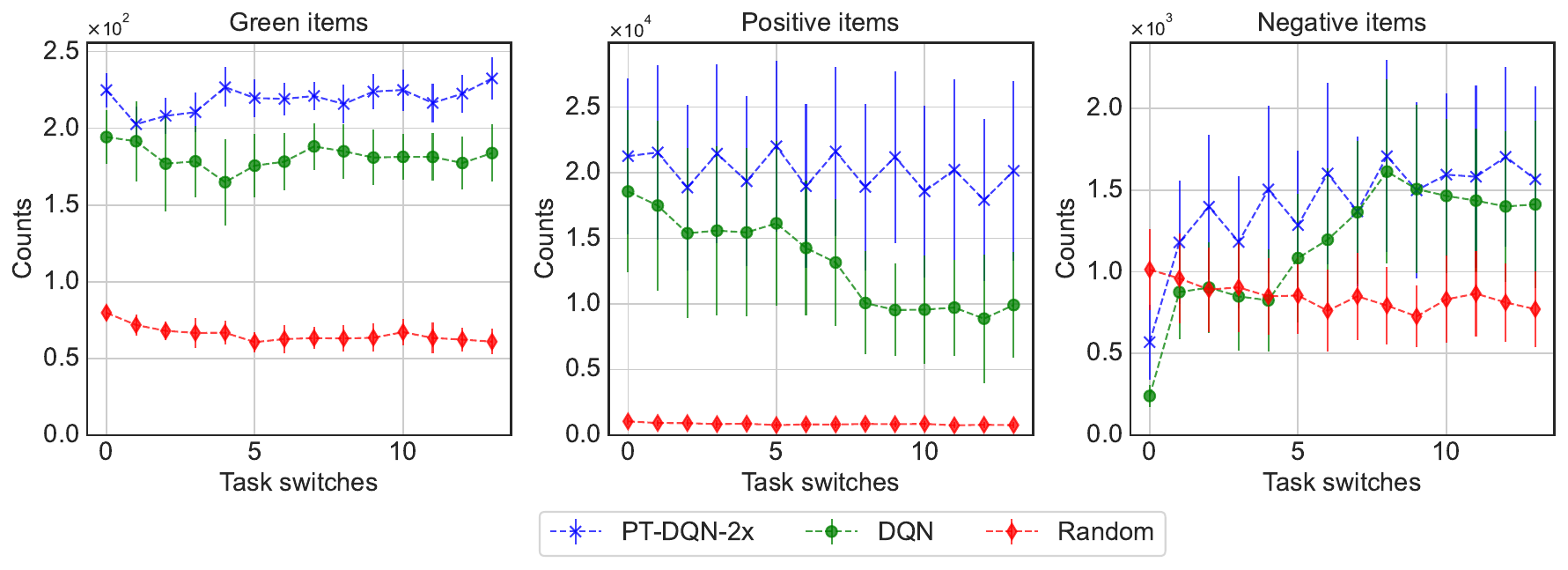}
        \caption{Items collected in each task.}
        \label{fig:CL-DNN-JBW-analysis-ablation-a}
     \end{subfigure}
     \caption{(a) Rewards accumulated per timestep across 10k step window is plotted against timesteps. (b) Number of items collected within a task for each type (subplots use different scales along y-axis).}
    \label{fig:jbw-results-ablation-a}
\end{figure}

\begin{figure}[ht]
    \centering
    \begin{subfigure}{0.48\textwidth}
        \centering
        \includegraphics[width=0.95\columnwidth, height=2.5cm]{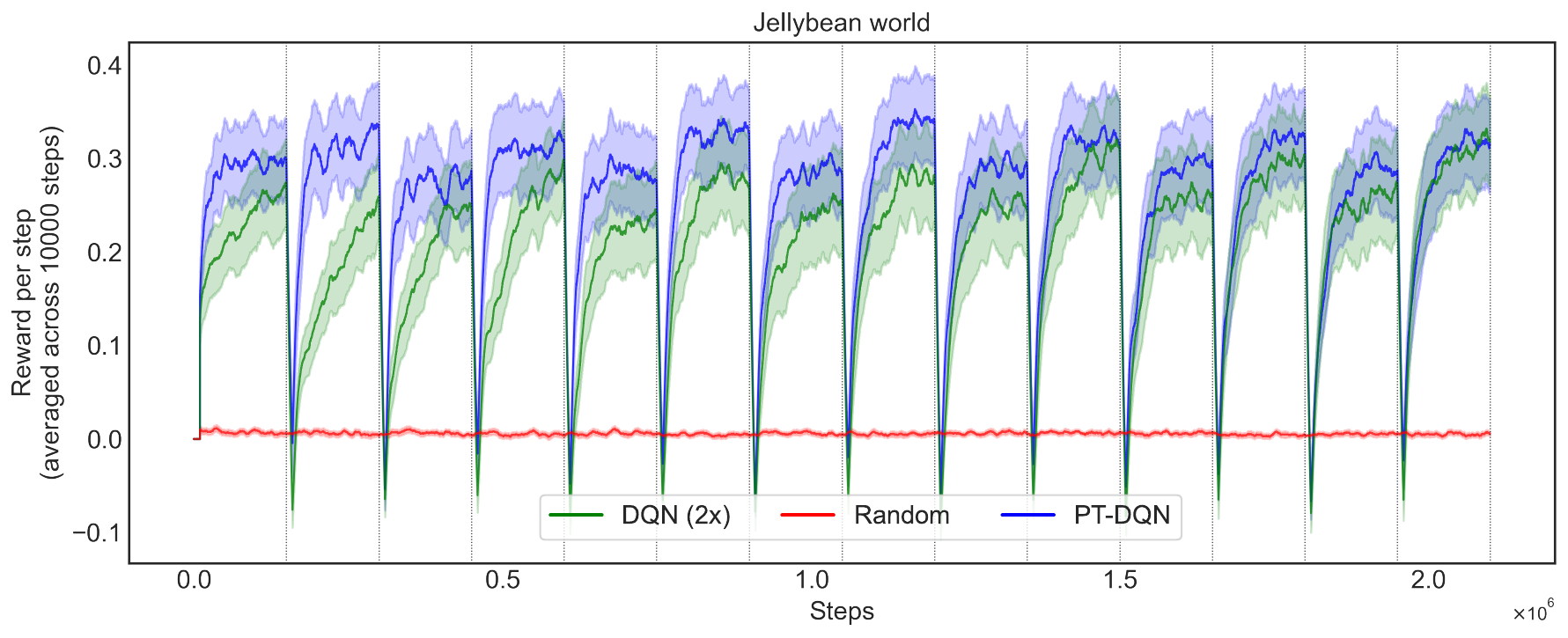}
        \caption{Results on the JellyBeanWorld.}
        \label{fig:CL-DNN-JBW-perf-ablation-b}
    \end{subfigure}
    \hfill
     \begin{subfigure}{0.48\textwidth}
        \centering
        \includegraphics[width=0.95\columnwidth, height=2.5cm]{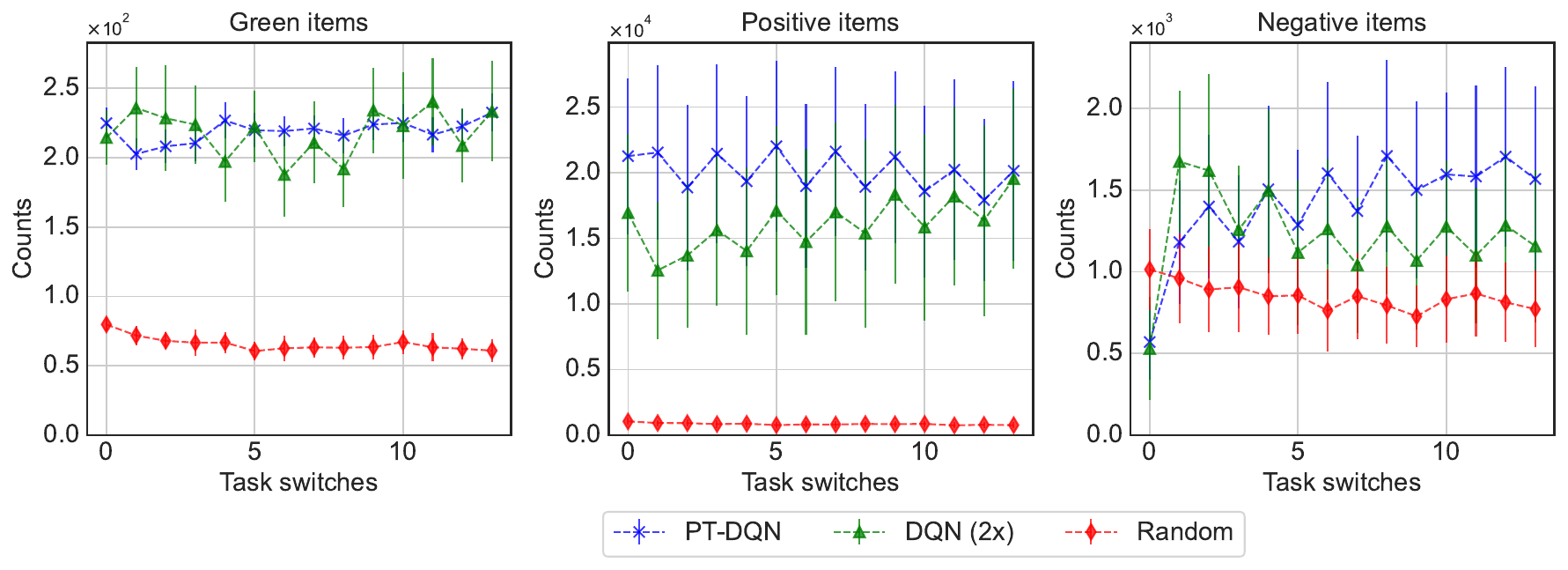}
        \caption{Items collected in each task.}
        \label{fig:CL-DNN-JBW-analysis-ablation-b}
     \end{subfigure}
     \caption{(a) Rewards accumulated per timestep across 10k step window is plotted against timesteps. (b) Number of items collected within a task for each type (subplots use different scales along y-axis).}
    \label{fig:jbw-results-ablation-b}
\end{figure}

\textbf{Observations:} The results for ablation experiments are shown in Figures \ref{fig:jbw-results-ablation-a} and \ref{fig:jbw-results-ablation-b}. We observe that our algorithm continues to perform better than the DQN agent in both the cases. The results also indicate that it is efficient to devote the available capacity in learning parts of the value function rather than learning the whole.

In the second ablation experiment, the DQN agent with twice the number of parameters starts off learning slowly but catches up with our method after seeing enough data. In fact, the DQN agent continues to perform similar to our method without any drop in performance as we continue to train further (we tested it by running the agent to 6M timesteps). This result highlights that our approach is beneficial in the \textit{big world - small agent} setup where the computation budget of the agent is very small compared to the complexity of the environment. When the agent's capacity is large relative to the complexity of the environment, there's no additional benefit (neither there is any downside) to our method. Since the world is much more complicated in comparison to the agent's capacity, using our method is preferable.

These experiments also provide partial answer to the question - when does the plasticity problem arise in neural networks? We think that a combination of learning algorithm (like DQN), small agent capacity, and a complex environment could give rise to the plasticity problem in neural networks. A thorough investigation is necessary to confirm this hypothesis.

\subsection{What is the effect of hyperparameters on the performance?}
\label{app:hp-experiment}

In this section we analyze the sensitivity of additional hyperparameter in our method (learning rate to update permanent value function) and compare it with the sensitivity of TD and Q-learning algorithms on various environments.

\begin{table}[htb]
    \centering
    \begin{tabular}{cc}
    Algorithm & LRs \\ \hline
    TD (without reset) & \{8e-1, 5e-1, 3e-1, 1e-1, 5e-2, 1e-2\} \\
    TD (with reset) & \{8e-1, 5e-1, 3e-1, 1e-1, 5e-2, 1e-2\} \\
    PT-TD & PV: \{1e-1, 5e-2, 1e-2, 5e-3, 1e-3\} \\
                & TV: \{8e-1, 5e-1, 3e-1, 1e-1, 5e-2, 1e-2\} \\
    \hline
    \end{tabular}
    \caption{LRs considered in the search space for the policy evaluation grid tasks.}
    \label{tab:PE-tabular-HP-LRs}
\end{table}

\begin{figure}[htb]
    \centering
    \includegraphics[width=0.8\textwidth, height=3.5cm]{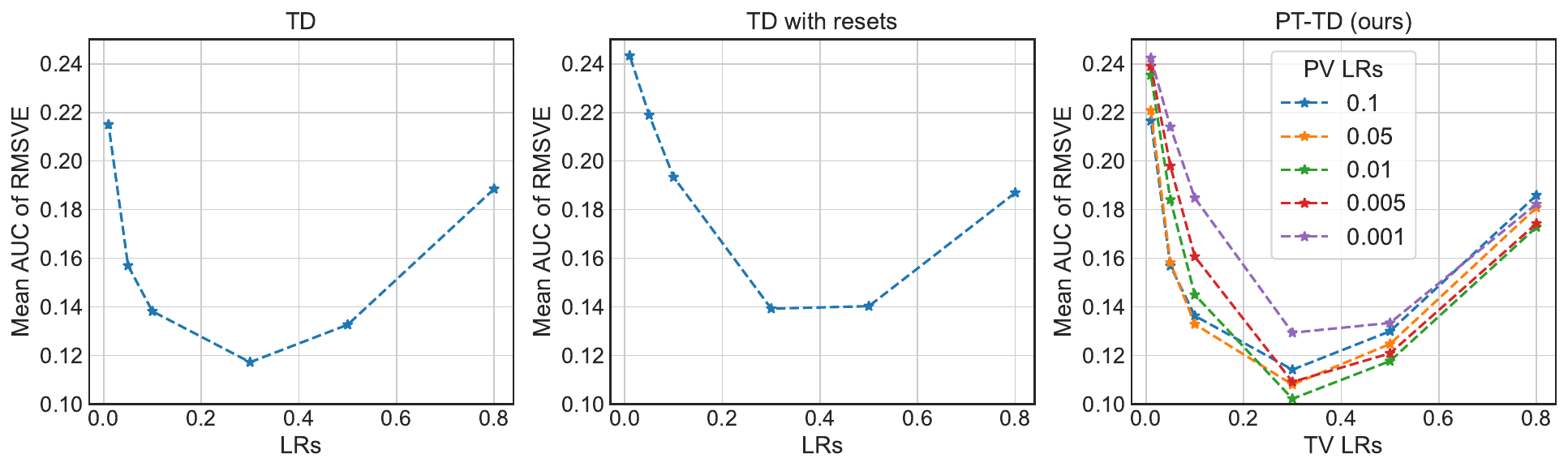}
    \caption{Mean AUC of RMSVE against learning rate for various algorithms.}
    \label{fig:PE-tabular-HP-tuning}
\end{figure}

\begin{figure}[htb]
    \centering
    \includegraphics[width=0.8\textwidth, height=3.5cm]{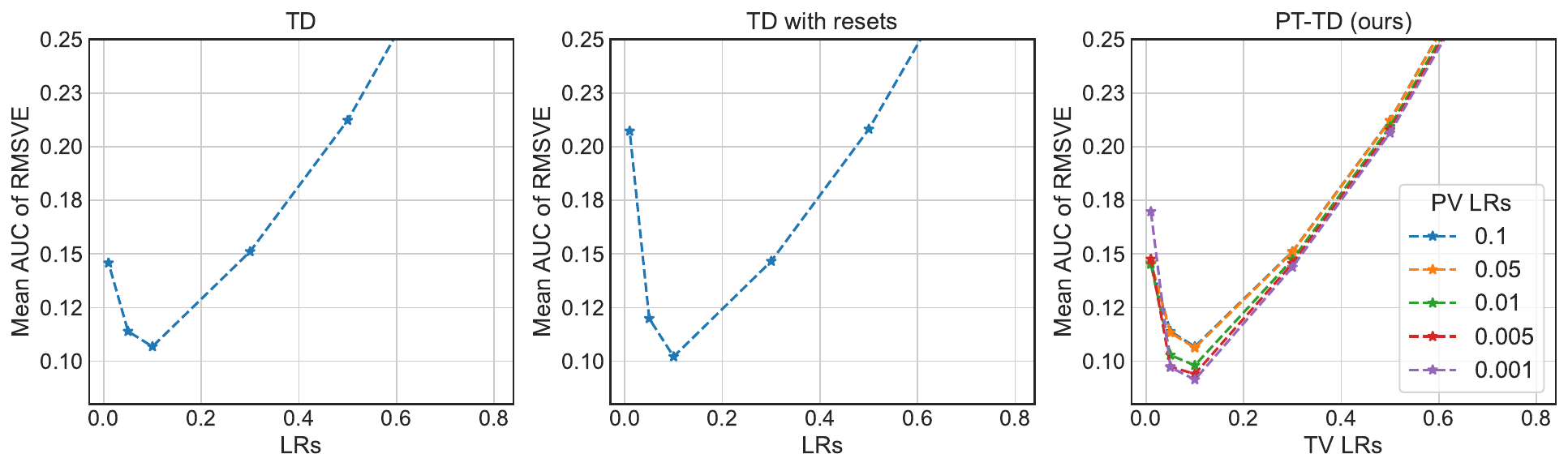}
    \caption{Mean AUC of RMSVE against learning rate for various algorithms.}
    \label{fig:PE-linear-DG-HP-tuning}
\end{figure}

\begin{figure}[htb]
    \centering
    \includegraphics[width=0.8\textwidth, height=3.5cm]{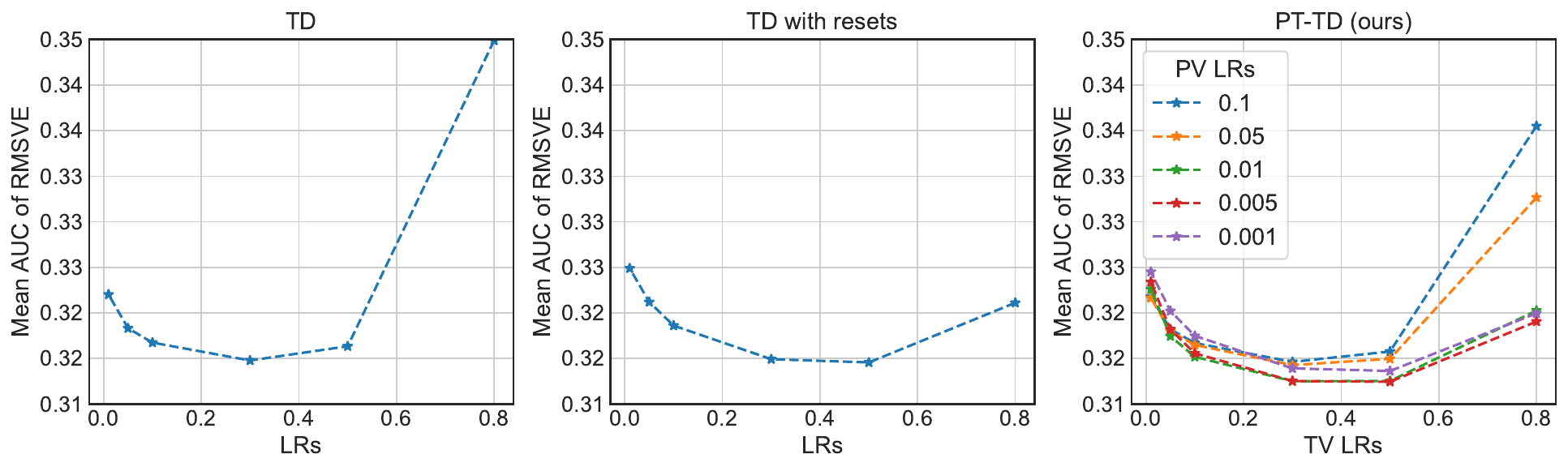}
    \caption{Mean AUC of RMSVE against learning rate for various algorithms.}
    \label{fig:PE-linear-CG-HP-tuning}
\end{figure}

For tabular and linear grid tasks, we select best hyperparameters (learning rates) for each algorithm by choosing those values which results in the lowest area under the error curves as shown in Table \ref{tab:PE-tabular-HP-LRs}, Figures \ref{fig:PE-tabular-HP-tuning}, \ref{fig:PE-linear-DG-HP-tuning}, \ref{fig:PE-linear-CG-HP-tuning}. The hyperparameter search and the final reporting is performed by averaging the results on 30 different seeds. For the deep RL task with discrete grid, we choose the best learning rate from the values shown in Table \ref{tab:PE-DNN-Igrid-HP-LRs} and the corresponding U curves are shown in \ref{fig:PE-DNN-IG-HP-tuning}. For the minigrid policy evaluation task, we search for the best learning rates from the values shown in Table \ref{tab:PE-DNN-minigrid-HP-LRs} and the corresponding U curves are shown in Fig. \ref{fig:PE-DNN-minigrid-HP-tuning}.

\begin{table}[htb]
    \centering
    \begin{tabular}{cc}
    Algorithm & LRs \\ \hline
    Deep TD (without reset) & \{1e-1, 3e-2, 1e-2, 3e-3, 1e-3\} \\
    Deep TD (with reset) & \{1e-1, 3e-2, 1e-2, 3e-3, 1e-3\} \\
    Deep PT-TD & PV: \{1e-3, 3e-4, 1e-4, 3e-5\} \\
                & TV: \{1e-2, 5e-3, 3e-3, 1e-3\} \\
    \hline
    \end{tabular}
    \caption{LRs considered to pick the best one for the Deep prediction experiment with discrete grid.}
    \label{tab:PE-DNN-Igrid-HP-LRs}
\end{table}

\begin{figure}[htb]
    \centering
    \includegraphics[width=0.8\textwidth, height=3.5cm]{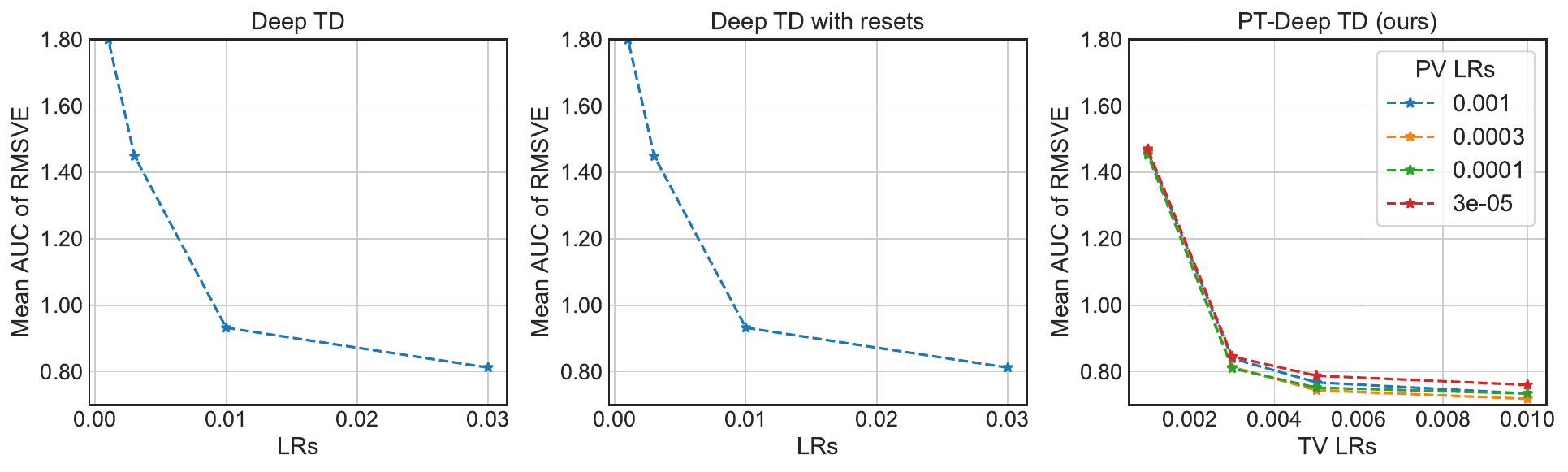}
    \caption{Mean AUC of RMSVE against learning rate for various algorithms on image grid.}
    \label{fig:PE-DNN-IG-HP-tuning}
\end{figure}

\begin{table}[htb]
    \centering
    \begin{tabular}{ll}
    Algorithm & LRs \\ \hline
    TD (without reset) & \{1e-1, 3e-2, 1e-2, 3e-3, 1e-3\} \\
    TD (with reset) & \{1e-1, 3e-2, 1e-2, 3e-3, 1e-3\} \\
    permanent value function & \{3e-2, 1e-2, 3e-3, 1e-3, 3e-4\} \\
    transient value function & \{3e-3, 1e-3, 3e-4, 1e-4, 3e-5, 1e-5, 3e-6, 1e-6\} \\
    \hline
    \end{tabular}
    \caption{Search space of learning rate for the policy evaluation minigrid task.}
    \label{tab:PE-DNN-minigrid-HP-LRs}
\end{table}

\begin{figure}[htb]
    \centering
    \includegraphics[width=0.8\textwidth, height=4cm]{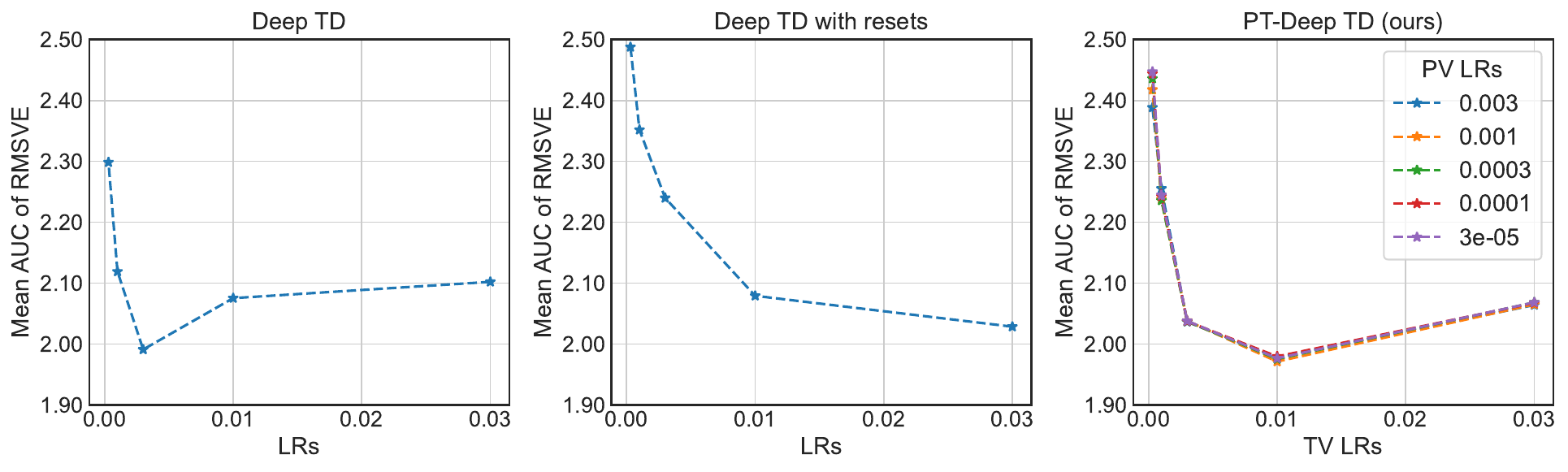}
    \caption{Mean AUC of RMSVE against learning rate for various algorithms on minigrid grid.}
    \label{fig:PE-DNN-minigrid-HP-tuning}
\end{figure}

For the tabular control experiment, the learning rates used to search for the best value is listed in Table \ref{tab:control-tabular-lrs} and the mean returns for various choices of these values are plotted in Fig. \ref{fig:control-tabular-HP-results} for various algorithms. Similar table for the minigrid task (deep control) is shown in Table \ref{tab:control-gym-minigrid-lrs} and the mean returns for various choices of these values is shown in Fig. \ref{fig:control-minigrid-HP-results}. We use 30 seeds to pick the best learning rate for the tabular task but use 3 seeds to select the learning rates for the minigrid task.

\begin{table}[htb]
\centering
    \begin{tabular}{ll}
    Algorithm & LRs \\ \hline
    Q-learning & \{8e-1, 5e-1, 1e-1, 5e-2, 1e-2, 5e-3, 1e-3\} \\
    Q-learning (with reset) & \{8e-1, 5e-1, 1e-1, 5e-2, 1e-2, 5e-3, 1e-3\} \\
    Permanent Value Function & \{8e-1, 5e-1, 3e-1, 1e-1, 5e-2, 1e-2, 5e-3, 1e-3\} \\
    Transient Value Function & \{8e-1, 5e-1, 3e-1, 1e-1, 5e-2, 1e-2\} \\
    \hline
    \end{tabular}
    \caption{Search space of learning rates for the tabular control task.}
    \label{tab:control-tabular-lrs}
\end{table}

\begin{figure}[htb]
    \centering
    \includegraphics[width=0.8\textwidth, height=3.5cm]{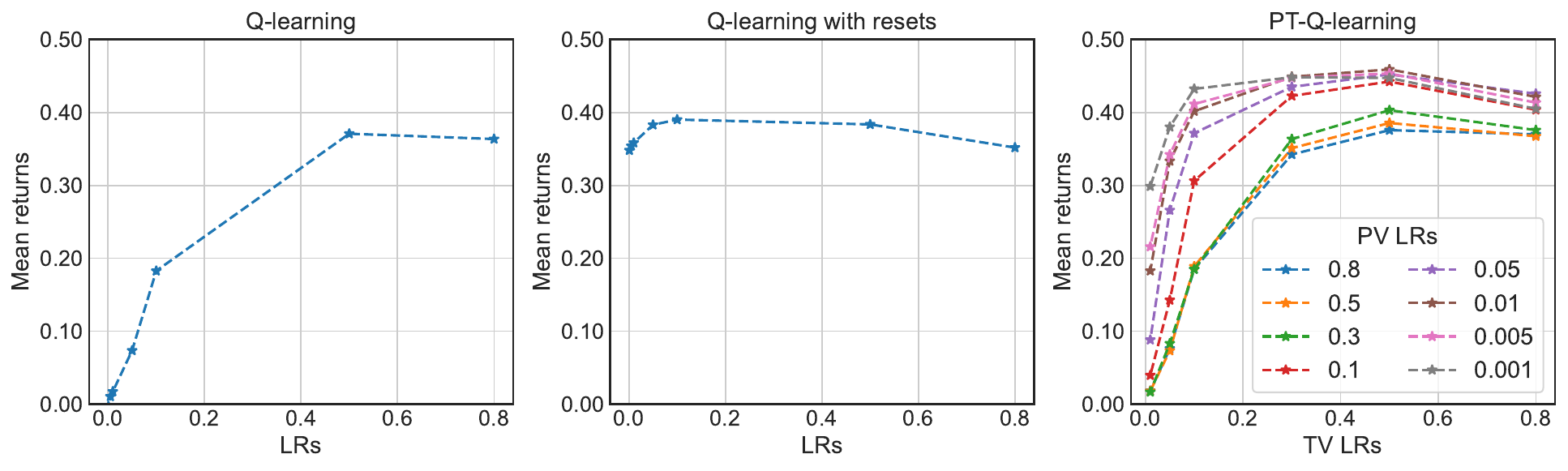}
    \caption{Mean returns for various learning rates for the tabular control task.}
    \label{fig:control-tabular-HP-results}
\end{figure}

\begin{table}[htb]
\centering
    \begin{tabular}{ll}
    Algorithm & LRs \\ \hline
    Q-learning & \{3e-3, 1e-3, 3e-4, 1e-4, 3e-5\} \\
    Q-learning (with reset) & \{1e-2, 3e-3, 1e-3, 3e-4, 1e-4\} \\
    Permanent Value Function & \{1e-5, 3e-6, 1e-6, 3e-7\} \\
    Transient Value Function & \{3e-4, 1e-4, 3e-5, 1e-5\} \\
    \hline
    \end{tabular}
    \caption{Search space of learning rates.}
    \label{tab:control-gym-minigrid-lrs}
\end{table}

\begin{figure}[htb]
    \centering
    \includegraphics[width=0.8\textwidth, height=3.5cm]{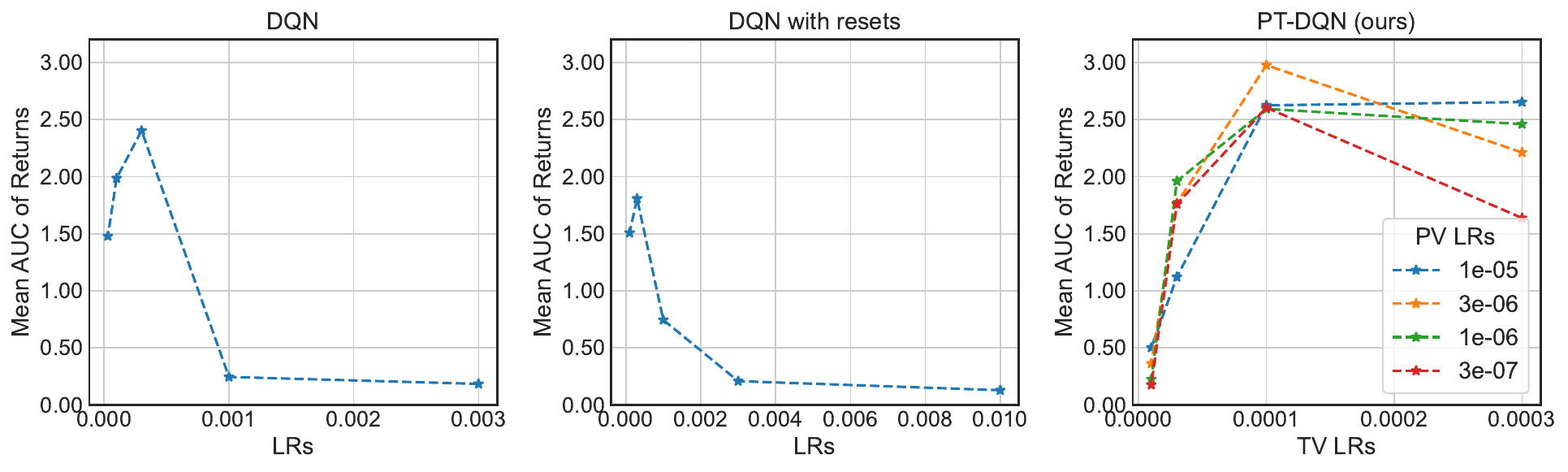}
    \caption{Mean returns for various learning rates for the Minigrid control task.}
    \label{fig:control-minigrid-HP-results}
\end{figure}

The learning rate tables and the corresponding performance for the full continual RL experiments are shown in Fig. \ref{fig:CL-tabular-all-k-decay-perf} and Tables \ref{tab:tabular-CL-HP-LRs}, \ref{tab:JBW-CL-HP-LRs}, \ref{tab:MinAtar-CL-HP-LRs-PTMem}, \ref{tab:MinAtar-CL-HP-LRs-DQN}, \ref{tab:MinAtar-CL-HP-LRs-DQN-MH}, and \ref{tab:MinAtar-CL-HP-LRs-DQN-LB}. We use 30 seeds to find the best hyperparmeters for the tabular task, but we use 3 seeds for JBW experiment and MinAtar.

\begin{figure}[htb]
    \centering
    \includegraphics[width=\textwidth, height=6.5cm]{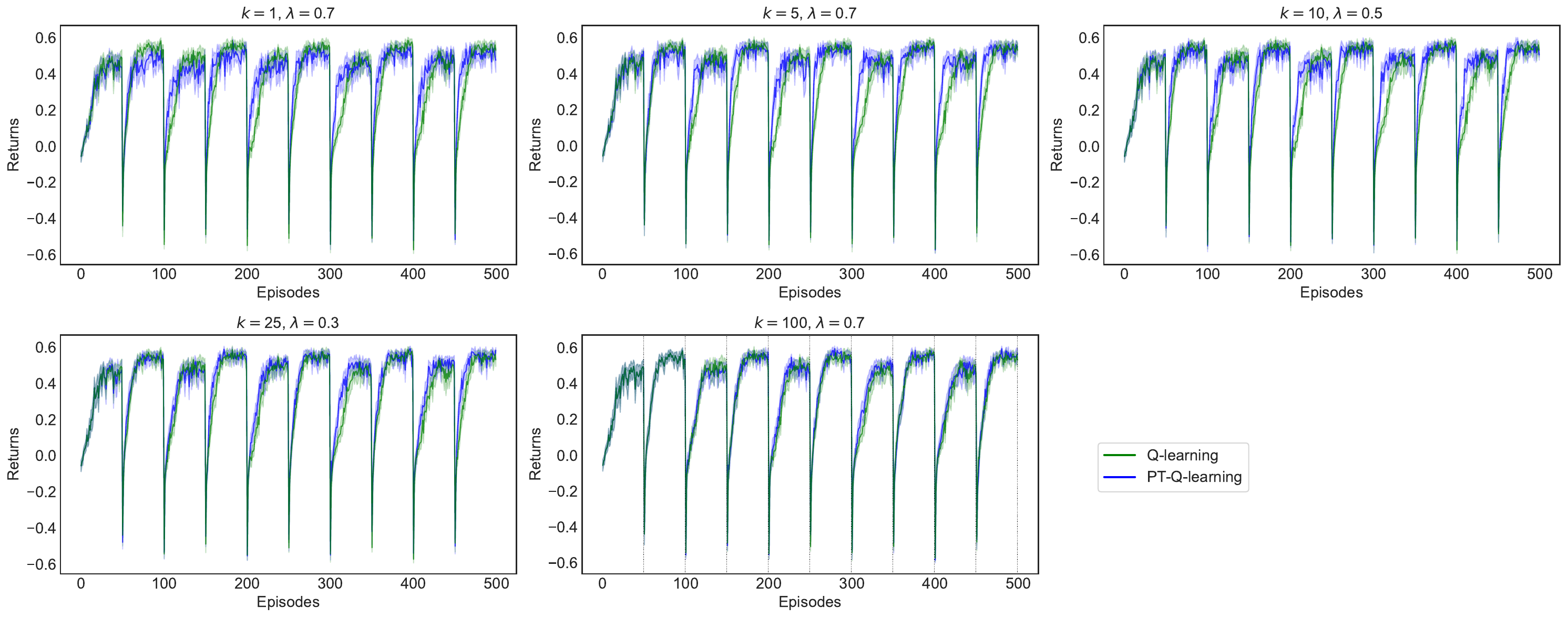}
    \caption{Performance plot of PT-Q-learning against Q-learning for $k, \lambda$ pairs}
    \label{fig:CL-tabular-all-k-decay-perf}
\end{figure}

We find the best learning rate for each $k, \lambda$ pair by searching over the learning rates listed in Table \ref{tab:tabular-CL-HP-LRs}.

\begin{table}[htb]
\centering
    \begin{tabular}{ll}
    Algorithm & LRs \\ \hline
    Q-learning & \{8e-1, 5e-1, 1e-1, 5e-2, 1e-2, 5e-3, 1e-3\} \\
    Permanent Value Function (0.5x) & \{1e-4, 1e-5, 1e-6, 1e-7, 1e-8\} \\
    Transient Value Function (0.5x) & \{1e-2, 1e-3, 1e-4, 1e-5\} \\
    \hline
    \end{tabular}
    \caption{Search space of learning rates for the main JBW experiment.}
    \label{tab:tabular-CL-HP-LRs}
\end{table}

\begin{table}[htb]
\centering
    \begin{tabular}{ll}
    Algorithm & LRs \\ \hline
    Q-learning & \{1e-2, 1e-3, 1e-4, 1e-5\} \\
    Permanent Value Function & \{1e-4, 1e-5, 1e-6, 1e-7\} \\
    Transient Value Function & \{1e-2, 1e-3, 1e-4, 1e-5\} \\
    \hline
    \end{tabular}
    \caption{Search space of learning rates for the main JBW ablation experiment.}
    \label{tab:JBW-CL-HP-LRs}
\end{table}

\begin{table}[htb]
\small
\centering
\begin{tabular}{c||cccc|cccc|cccc|}
& \multicolumn{4}{c|}{$\lambda$=0.55} & \multicolumn{4}{c|}{$\lambda$=0.75} & \multicolumn{4}{c|}{$\lambda$=0.95} \\ \cline{2-13} 
TV-LR $\rightarrow$ & \multirow{2}{*}{1e-2} & \multirow{2}{*}{1e-3}  & \multirow{2}{*}{1e-4}  & \multirow{2}{*}{1e-5} & \multirow{2}{*}{1e-2} & \multirow{2}{*}{1e-3}  & \multirow{2}{*}{1e-4}  & \multirow{2}{*}{1e-5} & \multirow{2}{*}{1e-2} & \multirow{2}{*}{1e-3}  & \multirow{2}{*}{1e-4}  & \multirow{2}{*}{1e-5} \\
PV-LR $\downarrow$ & & & & & & & & & & & &\\ \hline \hline

1e-4 & 0.031 & \textbf{0.223} & 0.126 & 0.103 & 0.035 & \textbf{0.257} & 0.154 & 0.154 & 0.038 & \textbf{0.217} & \textbf{0.199} & 0.159\\

1e-5 & 0.031 & \textbf{0.252} & 0.071 & 0.051 & 0.030 & \textbf{0.261} & 0.077 & 0.068 & 0.040 & \textbf{0.220} & \textbf{0.201} & 0.096 \\

1e-6 & 0.041 & \textbf{0.242} & 0.070 & 0.061 & 0.034 & \textbf{0.260} & 0.090 & 0.083 & 0.030 & \textbf{0.234} & \textbf{0.207} & 0.084\\ 

1e-7 & 0.036 & \textbf{0.256} & 0.056 & 0.067 & 0.040 & \textbf{0.268} & 0.062 & 0.070 & 0.028 & \textbf{0.227} & \textbf{0.216} & 0.124\\ 

1e-8 & 0.031 & \textbf{0.247} & 0.051 & 0.069 & 0.041 & \textbf{0.261} & 0.069 & 0.069 & 0.037 & \textbf{0.232} & \textbf{0.213} & 0.080\\
\end{tabular}
\caption{JBW: Reward per step for various hyperparameters for PT-DQN. We highlight the values that is larger than DQN's best performance.}
\label{tab:JBW-CL-HP-LRs-PT-DQN}
\end{table}

\begin{table}[htb]
\centering
\begin{tabular}{|c||cccc|}
LRs & 1e-2 & 1e-3 & 1e-4 & 1e-5 \\ \hline
Returns & 0.03 & 0.127 & \textbf{0.192} & 0.093 \\
\end{tabular}
\caption{JBW: Reward per step for various hyperparameters for DQN.}
\label{tab:JBW-CL-HP-LRs-DQN}
\end{table}

\begin{table}[htb]
\centering
\begin{tabular}{|c||cccc|}
LRs & 1e-2 & 1e-3 & 1e-4 & 1e-5 \\ \hline
Returns & 0.037 & 0.135 & \textbf{0.247} & 0.245 \\
\end{tabular}
\caption{JBW: Reward per step for various hyperparameters for DQN-multi-head.}
\label{tab:JBW-CL-HP-LRs-DQN-MH}
\end{table}

\begin{table}[htb]
\centering
\begin{tabular}{|c||cccc|}
LRs & 1e-2 & 1e-3 & 1e-4 & 1e-5 \\ \hline
Returns & 0.025 & 0.055 & \textbf{0.055} & 0.031 \\
\end{tabular}
\caption{JBW: Reward per step for various hyperparameters for DQN with large experience replay.}
\label{tab:JBW-CL-HP-LRs-DQN-LB}
\end{table}

\begin{table}[htb]
\centering
\begin{tabular}{c||ccc|ccc|ccc|}
& \multicolumn{3}{c|}{$\lambda$=0.55} & \multicolumn{3}{c|}{$\lambda$=0.75} & \multicolumn{3}{c|}{$\lambda$=0.95} \\ \cline{2-10} 
TV-LR $\rightarrow$ & \multirow{2}{*}{1e-3}  & \multirow{2}{*}{1e-4}  & \multirow{2}{*}{1e-5} & \multirow{2}{*}{1e-3}  & \multirow{2}{*}{1e-4}  & \multirow{2}{*}{1e-5} & \multirow{2}{*}{1e-3}  & \multirow{2}{*}{1e-4}  & \multirow{2}{*}{1e-5} \\
PV-LR $\downarrow$ & & & & & & & & & \\ \hline \hline
1e-7 & \textbf{14.41} & \textbf{17.38} & 7.89 & \textbf{13.87} & \textbf{19.07} & 10.55 & 9.50 & \textbf{16.12} & 12.16\\ 
1e-8 & \textbf{15.29} & \textbf{17.35} & 7.79 & \textbf{14.02} & \textbf{20.54} & 10.72 & 9.79 & \textbf{16.71} & 12.48\\
1e-9 & \textbf{15.54} & \textbf{17.51} & 8.02 & 12.63 & \textbf{19.54} & 10.90 & 8.57 & \textbf{16.48} & 11.76\\
\end{tabular}
\caption{MinAtar: AUC of the average return (in the previous 100 episodes) per step for various hyperparameters for PT-DQN. We highlight the values that is larger than DQN's best performance.}
\label{tab:MinAtar-CL-HP-LRs-PTMem}
\end{table}

\begin{table}[htb]
\centering
\begin{tabular}{|c||cccc|}
LRs & 1e-3 & 1e-4 & 1e-5 & 1e-6 \\ \hline
Returns & 7.10 & 12.90 & \textbf{13.14} & 5.90 \\
\end{tabular}
\caption{MinAtar: AUC of the average return (in the previous 100 episodes) per step for various hyperparameters for DQN.}
\label{tab:MinAtar-CL-HP-LRs-DQN}
\end{table}

\begin{table}[htb]
\centering
\begin{tabular}{|c||cccc|}
LRs & 1e-3 & 1e-4 & 1e-5 & 1e-6 \\ \hline
Returns & 8.72 & 14.93 & \textbf{15.68} & 8.54 \\
\end{tabular}
\caption{MinAtar: AUC of the average return (in the previous 100 episodes) per step for various hyperparameters for DQN-multi-head.}
\label{tab:MinAtar-CL-HP-LRs-DQN-MH}
\end{table}

\begin{table}[htb]
\centering
\begin{tabular}{|c||cccc|}
LRs & 1e-3 & 1e-4 & 1e-5 & 1e-6 \\ \hline
Returns & 6.78 & \textbf{11.44} & 10.70 & 4.56 \\
\end{tabular}
\caption{MinAtar: AUC of the average return (in the previous 100 episodes) per step for various hyperparameters for DQN with large experience replay.}
\label{tab:MinAtar-CL-HP-LRs-DQN-LB}
\end{table}

\textbf{Observations:} We see that our algorithm performs better than their vanilla counterparts on all the problems we tested for a broad range of hyperparameters. There are a handful of $(\PMlr, \TMlr)$ pairs that results in a better performance in each setting, which indicates that our method is not sensitive to hyperparameter selection. We also observe that in all the cases $\PMlr$ is smaller than $\TMlr$, which fits into our intuition. Because, the permanent value function has to capture regularities in the value function estimate, it needs to update slowly, whereas, the transient value function should be updated quickly to facilitate fast adaptation. The sensitivity of two new hyperparameters: $\lambda, k$ for the fully continual RL setting is analyzed in the main paper (see Sec. \ref{sec:crl-experiments}) where we concluded that our algorithm is robust to these values.

\end{document}